\documentclass{article}

\usepackage{arxiv}
\usepackage[utf8]{inputenc}
\usepackage{graphicx}
\usepackage{amsmath}
\usepackage{amsfonts}
\usepackage{hyperref}
\usepackage{booktabs}
\usepackage{caption}
\usepackage{subcaption}
\usepackage{multirow}

\usepackage{amsmath,amssymb,amsfonts,amsthm,bm}
\usepackage{mathtools}
\usepackage{booktabs}
\usepackage{multirow}
\usepackage{graphicx}
\usepackage{algorithm}
\usepackage{algpseudocode}
\usepackage{xcolor}
\usepackage{microtype}
\usepackage{enumitem}
\usepackage{siunitx}
\usepackage{float}
\usepackage{cite}
\usepackage{hyperref}

\newtheorem{theorem}{Theorem}
\newtheorem{proposition}{Proposition}
\newtheorem{corollary}{Corollary}
\newtheorem{assumption}{Assumption}
\newtheorem{remark}{Remark}
\newtheorem{definition}{Definition}

\newcommand{\R}{\mathcal{R}}
\newcommand{\Rh}{\widehat{\mathcal{R}}}
\newcommand{\E}{\mathbb{E}}
\newcommand{\Pp}{\mathbb{P}}
\newcommand{\KL}{\mathrm{KL}}

\newcommand{\keff}{K_{\mathrm{eff}}}

\title{When Is Accuracy Evidence? A Unified Theory of Generalisation, Validation, and Information Fusion}

\author{
  J.M. Gorriz\thanks{Corresponding author}\\
  Data Science and Computational Intelligence Institute\\
  University of Granada\\
  Granada, Spain \\
  \texttt{jg825@cam.ac.uk}\\
  Department of Psychiatry\\
  University of Cambridge\\
  Cambridge, UK\\
       \And
for the Alzheimer's Disease Neuroimaging Initiative (ADNI)\\
}

\begin{document}
\maketitle

\begin{abstract}
K-fold cross-validation (CV) is routinely interpreted as evidence of out-of-sample performance, although the folds are neither independent experiments nor equally informative under data heterogeneity. Cross Upper-Bound Validation (CUBV) was recently introduced to replace point-wise interpretation of CV accuracy by conservative upper bounds on the true risk. Here we generalise that idea through a single exponential framework. We assume that the moment-generating function of a generalisation gap is controlled by a cumulant envelope $\gamma(\lambda)$ and derive a family of bounds of the form
\[
\R\leq \Rh+
\inf_{\lambda>0}
\frac{\gamma(\lambda)+\log(1/\delta)}{\lambda}.
\]
Different choices of $\gamma$ recover or induce Hoeffding-type, variance-aware Bernstein-type, dependency-aware, PAC-Bayesian, and heterogeneous source-fusion bounds. In the K-fold setting, we explicitly model dependence between fold-wise generalisation gaps through a joint sub-Gaussian proxy matrix. Under an equicorrelation model this yields an effective number of folds, $\keff=\frac{K}{1+(K-1)\rho}$ showing that increasing $K$ does not necessarily increase statistical evidence when folds are strongly dependent. We further extend the framework to posterior distributions over predictors and to weighted multi-source fusion, where fusion weights can be selected by minimising an upper bound on future risk rather than empirical error alone. A dedicated fusion experiment evaluates this principle using four non-homogeneously sampled multimodal Gaussian-mixture sources with two interlaced modes per class, allowing the source uncertainty and mixture structure to differ while preserving the underlying binary task. We compare two validation paradigms, K-fold cross-validation and full-sample resubstitution with a risk correction, using genuinely trained linear classifiers on heterogeneous class-conditional Gaussian mixtures designed to mimic small-sample multimodal data with a dominant linear signal and minority sub-populations. Bounds are evaluated by coverage, $\Pp(\R_{\mathrm{true}}\le U)$, and tightness, $U-\R_{\mathrm{true}}$. Across a broad range of low-dimensional multimodal settings, the same structural effect appears under several bound families: when the sample is small relative to the number of modes and dimensions, K-fold partitioning can increase uncertainty because individual folds under-represent minority modes, while a corrected resubstitution estimate can remain valid and tighter; the effect disappears as the sample becomes sufficiently large. Together, the theoretical $\gamma$-CUBV construction and its bounded-discriminability instantiation separate observed performance, uncertainty, dependence, model complexity, and confidence into explicit terms, providing a unified route from cross-validation scores to risk statements and from risk statements to a principled criterion for validating low-dimensional features in heterogeneous small-sample applications such as neuroimaging.
\end{abstract}

\keywords{Cross-validation ,\and Generalisation bounds\and  Concentration inequalities \and PAC-Bayes \and Information fusion  ,\and Dependent data  \and Effective sample size  ,\and Risk upper bounds  \and Feature discriminability \and Neuroimaging}

\section{Introduction}

Machine learning studies commonly report cross-validation accuracy as evidence that a learned predictor, or a set of variables, latent features, or learned representations, will perform well on unseen data or contains discriminative information. Cross-validation has a long history as a method for predictive assessment and model choice \cite{stone1974}, but its inferential interpretation is more delicate than the point estimate alone suggests. This issue is particularly important in domains such as neuroimaging, where sample sizes are often small, measurements are high-dimensional before feature extraction, and biological heterogeneity may produce multiple sub-populations within the same diagnostic label \cite{varoquaux2017,varoquaux2018}. Accuracy is an empirical average; it conflates an observed value with the uncertainty surrounding it, and it does not by itself quantify the uncertainty associated with the learning process, the partition of the sample, or the future behaviour of the classifier. In particular, small samples, heterogeneous information sources, model-selection effects, and the strong overlap between K-fold training sets can produce optimistic or unstable conclusions. CV is itself a random procedure, and its variance can be substantial when training samples are small or heterogeneous; the dependence induced by overlapping training sets further complicates inference \cite{dietterich1998approximate,nadeau2003,bengio2004,bayle2020,bates2024}. Recent work has also clarified that the estimand targeted by CV need not coincide with the error of the particular model fitted to the complete observed sample, and that naive CV confidence intervals can undercover because fold-level errors are correlated \cite{bates2024}.

Cross Upper-Bound Validation (CUBV) was introduced in \cite{gorriz2026kfold} as a conservative alternative to conventional K-fold CV and as a statistical-learning-theory framework for controlling the uncertainty of machine-learning predictions through upper bounds on the actual risk. Instead of treating an empirical CV score as the final inferential quantity, CUBV upper-bounds the actual risk by means of concentration inequalities and PAC-Bayesian arguments. That work showed that small sample size, low predictor dimension, multimodality, and heterogeneous data sources can destabilise K-fold CV and contribute to excess false-positive conclusions. 

The present work develops, in two complementary parts, a general mathematical extension of that principle and a concrete application of it. Building on an exponential-envelope construction, we then turn the same machinery towards a second, more applied question: when can a low-dimensional feature or representation itself be considered discriminative? We propose the concept of \emph{bounded discriminability}: a feature is discriminative only when the future classification risk it supports can be upper-bounded with controlled confidence, rather than merely judged by empirical accuracy, cross-validated accuracy, visual separation, or post-hoc feature-importance scores. We instantiate four principal members of the $\gamma$-CUBV family - Hoeffding, empirical Bernstein, dependency-aware moment-generating-function, and PAC-Bayes bounds - and additionally include a uniform VC-Hoeffding reference that explicitly pays for post-training hypothesis selection. We further evaluate two conservative treatments of fold dependence: a PSD-inflated dependency bound based on an upper confidence matrix for the fold-gap covariance, and a H\"older construction that combines fold-wise exponential envelopes without factorising dependent moment-generating functions. 

\subsection{Part I: a general exponential framework}

The main idea of the first part of the paper is simple. Let
\begin{equation}
\Delta = \R-\Rh
\end{equation}
denote a generalisation gap. Rather than committing to a single concentration inequality, we assume that its exponential moments satisfy
\begin{equation}
\E\!\left[\exp\{\lambda\Delta\}\right]
\leq
\exp\{\gamma(\lambda)\},
\label{eq:mgf-master}
\end{equation}
where $\gamma$ is a cumulant-generating envelope. Exponential-moment and Chernoff-type arguments provide a standard route from such envelopes to non-asymptotic tail bounds \cite{boucheron2013concentration}. This single object determines the uncertainty penalty associated with the observed risk. Markov's inequality then generates an entire family of risk bounds.

This perspective has several advantages. First, classical concentration bounds become special cases of a common construction. Second, data variance can be introduced explicitly instead of being hidden behind a worst-case range. Third, dependence between folds can be represented rather than ignored. Fourth, the same construction naturally admits PAC-Bayesian complexity penalties. Finally, multi-source information fusion can be formulated as risk fusion plus uncertainty fusion, allowing the fusion weights themselves to be selected by upper-bound minimisation.

\subsection{Part II: bounded feature discriminability}

The second part of the paper begins from a different but related question:
\begin{quote}
\emph{When can a feature itself be considered discriminative?}
\end{quote}
Feature importance and feature discriminability are not the same concept. Classical feature-selection work focuses on relevance, redundancy, prediction performance, and the validity of selected variables \cite{guyon2003}, while later work has formalised the stability of feature-selection procedures under data perturbation \cite{nogueira2018}. Methods such as SHAP \cite{lundberg2017shap} and LIME \cite{ribeiro2016lime} quantify the contribution of variables to predictions, while methods such as t-SNE \cite{vandermaaten2008tsne} can visually reveal apparent class separation. More recently, risk-bound criteria have also been proposed for feature selection by linking distributional separation to bounds on Bayes classification error \cite{cao2025}. These approaches address related but distinct questions. Here we ask whether the future classification risk supported by a particular feature or low-dimensional representation can itself be upper-bounded with controlled confidence. We therefore introduce the concept of \emph{bounded discriminability}: a representation is considered discriminative only when the future classification risk it supports admits a controlled upper bound, using the same exponential-envelope machinery developed in Part I.

This perspective leads naturally to a further question. If the purpose of CV is to estimate future risk, is K-fold always the best empirical term on which to build the bound? In small multimodal samples, each K-fold model is trained on only $(K-1)n/K$ observations. As a consequence, rare sub-populations may be represented by very few observations, or even be absent, from a particular training split. Full-sample resubstitution, in contrast, trains on all available observations and therefore retains all modes present in the sample, although its empirical error is optimistically biased because training and evaluation use the same data. The relevant comparison is therefore not between the empirical risks alone, but between their \emph{corrected upper risks}.

\subsection{Contributions}

The principal contributions are:
\begin{enumerate}
    \item a general $\gamma$-CUBV theorem generated by exponential moment control;
    \item variance-aware CUBV specialisations based on sub-Gaussian and sub-gamma envelopes;
    \item a dependency-aware K-fold bound with an explicit effective number of folds, together with a safe construction for the (more realistic) case in which the dependency matrix must itself be estimated;
    \item a dependence-robust Hölder construction that does not factorise fold moment-generating functions;
    \item a PAC-Bayesian $\gamma$-CUBV theorem separating empirical risk, model complexity, uncertainty, and confidence;
    \item a heterogeneous information-fusion extension in which source weights can be chosen by minimising an upper confidence bound on future risk;
    \item the formulation of feature discriminability as an upper-risk validation problem (\emph{bounded discriminability});
    \item a comparison of K-fold and resubstitution under four distinct upper-bound families using the same trained classifiers and the same synthetic heterogeneous distributions, evaluated by coverage and tightness, identifying the regimes in which corrected resubstitution can be more informative than K-fold without sacrificing the desired confidence level;
    \item a uniform VC-Hoeffding reference for learned affine linear classifiers, derived through symmetrisation, a union bound, and Sauer's lemma, which makes explicit the statistical price of fitting and evaluating on the same sample and serves as a conservative benchmark for post-training resubstitution.
    \item an empirical implementation of the PSD-inflated dependence construction of Section~5.4, used as a deliberately conservative reference for covariance-estimation uncertainty;
    \item an implementation of H\"older-CUBV from Section~5.5, providing a dependence-robust K-fold bound that does not obtain a spurious $K^{-1/2}$ gain by treating overlapping folds as independent.
\end{enumerate}

The aim is not to claim that one bound universally dominates all others,
nor that resubstitution should replace K-fold. Rather, $\gamma$ provides a
common mechanism for encoding the uncertainty assumptions regarded as
compatible with the available evidence, while the resubstitution-versus-K-fold
comparison reveals when data partitioning itself becomes an avoidable source
of uncertainty.

Importantly, the classical concentration inequalities used as building blocks
in Part II are not claimed as new. The methodological novelty lies instead in
the $\gamma$-CUBV formulation: a common exponential-envelope construction from
which a family of generalisation upper bounds can be derived under different
assumptions on variance, dependence, posterior uncertainty, and source
interaction. This formulation makes bounds that are usually treated as
separate tools directly comparable within the same risk-certification
framework. A further contribution is operational. Rather than using these bounds only as
post-hoc guarantees on predictive error, we use the resulting upper risk as a criterion for assessing whether a learned low-dimensional representation
contains statistically defensible discriminative information. In this sense, the bound becomes a discriminability certificate: two representations with similar empirical accuracy may receive substantially different evidence for future discrimination once their uncertainty is taken into account. This
extends the dependency analysis of Part I from a comparison of validation
estimators to a concrete and reproducible criterion for validating learned feature representations.

\section{Preliminaries}

Let $Z=(X,Y)\sim\mathcal{D}$ and let $\ell(h,Z)\in[0,1]$ be a bounded loss associated with predictor $h$; for binary classification under $0$--$1$ loss, $\ell(h,Z)=\mathbf{1}\{h(X)\neq Y\}$. The population risk is
\begin{equation}
\R(h)=\E_{Z\sim\mathcal{D}}\ell(h,Z),
\end{equation}
and, for a sample $S=(Z_1,\ldots,Z_n)$,
\begin{equation}
\Rh_S(h)=\frac{1}{n}\sum_{i=1}^{n}\ell(h,Z_i).
\end{equation}

For a K-fold partition, write $V_k$ for the validation subset of fold $k$, $T_k$ for the corresponding training subset, and $h_k=A(T_k)$ for the model produced by learning algorithm $A$. The fold risk and validation risk are
\begin{align}
\R_k &= \E_{Z\sim\mathcal{D}}\ell(h_k,Z),\\
\Rh_k &= \frac{1}{|V_k|}\sum_{Z_i\in V_k}\ell(h_k,Z_i).
\end{align}
For non-negative weights $w_k$ summing to one,
\begin{align}
\R_{\mathrm{CV}} &= \sum_{k=1}^{K}w_k\R_k,\\
\Rh_{\mathrm{CV}} &= \sum_{k=1}^{K}w_k\Rh_k,
\end{align}
and the fold-wise generalisation gaps are
\begin{equation}
\Delta_k=\R_k-\Rh_k.
\end{equation}

A key point is that $\Delta_1,\ldots,\Delta_K$ are generally dependent because K-fold training sets overlap. Hence, multiplying fold-wise moment-generating functions is not justified unless an independence assumption is explicitly imposed. For symmetric matrices $\bm{A}$ and $\bm{B}$, we write
$\bm{A}\preceq\bm{B}$ when $\bm{B}-\bm{A}$ is positive semidefinite (PSD).
This order is useful below because any PSD upper confidence matrix for the
sub-Gaussian proxy immediately yields a conservative risk bound.

\section{The $\gamma$-CUBV generating principle}

\begin{assumption}[Exponential envelope]
For some interval $\Lambda\subset(0,\infty)$ and deterministic function $\gamma:\Lambda\rightarrow\mathbb{R}$,
\begin{equation}
\E\!\left[e^{\lambda\Delta}\right]
\leq e^{\gamma(\lambda)}
\qquad \forall \lambda\in\Lambda.
\label{ass:gamma}
\end{equation}
\end{assumption}

\begin{theorem}[$\gamma$-CUBV]
Under Assumption~1, for every $\delta\in(0,1)$ and every $\lambda\in\Lambda$, with probability at least $1-\delta$,
\begin{equation}
\R
\leq
\Rh+
\frac{\gamma(\lambda)+\log(1/\delta)}{\lambda}.
\label{eq:gamma-cubv-fixed}
\end{equation}
Consequently,
\begin{equation}
\R
\leq
\Rh+
\inf_{\lambda\in\Lambda}
\frac{\gamma(\lambda)+\log(1/\delta)}{\lambda}.
\label{eq:gamma-cubv}
\end{equation}
\end{theorem}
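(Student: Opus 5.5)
The plan is a direct Chernoff argument on the gap $\Delta=\R-\Rh$, with one point of care for the infimum. Fix $\delta\in(0,1)$ and $\lambda\in\Lambda$, and set
\[
c(\lambda)=\frac{\gamma(\lambda)+\log(1/\delta)}{\lambda}.
\]
Because $\lambda>0$, the map $x\mapsto e^{\lambda x}$ is increasing. So the event $\{\Delta>c(\lambda)\}$ is contained in $\{e^{\lambda\Delta}\geq e^{\lambda c(\lambda)}\}$. Markov's inequality applied to the nonnegative variable $e^{\lambda\Delta}$, together with Assumption~1, then gives
\[
\Pp\bigl(\Delta> c(\lambda)\bigr)\leq e^{-\lambda c(\lambda)}\,\E\bigl[e^{\lambda\Delta}\bigr]\leq e^{\gamma(\lambda)-\gamma(\lambda)-\log(1/\delta)}=\delta .
\]
On the complementary event, which has probability at least $1-\delta$, we have $\R\leq\Rh+c(\lambda)$. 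This is \eqref{eq:gamma-cubv-fixed}.

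For \eqref{eq:gamma-cubv}, the key observation is that $\gamma$ is deterministic, so $c^\ast=\inf_{\lambda\in\Lambda}c(\lambda)$ is a deterministic number. Hence no union bound over $\lambda$ is needed. This is the step I expect to need the most care: the statement is for the infimum, which need not be attained, and a naive reading could suggest the data are allowed to choose $\lambda$. Neither issue is a real problem, as follows.

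If $c^\ast=-\infty$, the event $\{\Delta>c(\lambda_m)\}$ for a sequence with $c(\lambda_m)\to-\infty$ has probability at most $\delta$ for every $m$. Letting $m\to\infty$ shows $\Pp(\Delta\in\mathbb{R})\le\delta<1$, which is absurd since $\Delta$ is real-valued. So this degenerate case cannot occur under Assumption~1, and $c^\ast$ is finite.

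For finite $c^\ast$, choose deterministic $\lambda_m\in\Lambda$ with $c(\lambda_m)\leq c^\ast+1/m$. Then
\[
\Pp\bigl(\Delta>c^\ast+1/m\bigr)\leq\Pp\bigl(\Delta>c(\lambda_m)\bigr)\leq\delta .
\]
The events $\{\Delta>c^\ast+1/m\}$ increase to $\{\Delta>c^\ast\}$. By continuity of probability from below, $\Pp(\Delta>c^\ast)\leq\delta$. This yields $\R\leq\Rh+c^\ast$ with probability at least $1-\delta$, which is \eqref{eq:gamma-cubv}.

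Finally, I would remark that this argument does not cover a data-dependent choice of $\lambda$. That case would require a union bound over a grid of $\lambda$ values, with $\delta$ split across the grid, and is not needed here because the optimisation is over a deterministic envelope.
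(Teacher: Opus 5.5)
Your proof is correct and follows the paper's argument: Markov's inequality applied to $e^{\lambda\Delta}$ at the threshold $e^{\gamma(\lambda)}/\delta$ gives the fixed-$\lambda$ bound. Where the paper only says that optimisation over $\lambda$ yields the infimum form, you supply the missing justification: the infimum is deterministic and need not be attained, so you approximate it by a deterministic sequence $\lambda_m$, use continuity from below, and rule out the $-\infty$ case. This tightens the paper's proof rather than taking a different route.
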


\begin{proof}
By Markov's inequality,
\begin{align}
\Pp\!\left(
e^{\lambda\Delta}
\geq \frac{e^{\gamma(\lambda)}}{\delta}
\right)
&\leq
\delta e^{-\gamma(\lambda)}
\E[e^{\lambda\Delta}]
\leq\delta.
\end{align}
Therefore, with probability at least $1-\delta$,
\[
\lambda\Delta
\leq
\gamma(\lambda)+\log(1/\delta).
\]
Substituting $\Delta=\R-\Rh$ gives \eqref{eq:gamma-cubv-fixed}. Optimisation over $\lambda$ gives \eqref{eq:gamma-cubv}.
\end{proof}

Equation~\eqref{eq:gamma-cubv} is the central construction of the paper. The choice of $\gamma$ determines what notion of uncertainty is being controlled, and Sections~4--8 instantiate it in turn for range-based, variance-aware, dependency-aware, PAC-Bayesian, and fusion settings; Section~10 then uses these same instantiations to validate individual features.

\section{Classical and variance-aware specialisations}

\subsection{Sub-Gaussian / Hoeffding-type CUBV}

The general $\gamma$-CUBV framework depends on the log-moment generating
function of the generalisation gap $\Delta = R-\widehat R$. In general, this function may be unknown or difficult to characterise
exactly. A natural first approximation is therefore to control it by a
quadratic sub-Gaussian envelope. Specifically, we take
\begin{equation}
\gamma(\lambda)
=
\frac{\lambda^2\sigma^2}{2},
\end{equation}
where $\sigma^2$ is a sub-Gaussian variance proxy controlling the
fluctuations of $\Delta$. Then
\begin{equation}
\R
\leq
\Rh+
\frac{\lambda\sigma^2}{2}
+
\frac{\log(1/\delta)}{\lambda}.
\end{equation}
Optimising in $\lambda$ gives
\begin{equation}
\lambda^\star=
\sqrt{\frac{2\log(1/\delta)}{\sigma^2}},
\end{equation}
and therefore
\begin{equation}
\R
\leq
\Rh+
\sqrt{2\sigma^2\log(1/\delta)}.
\label{eq:subgaussian}
\end{equation}

For the average of $n$ independent losses in $[0,1]$, Hoeffding's lemma \cite{hoeffding1963probability} gives $\sigma^2=1/(4n)$, yielding
\begin{equation}
\R
\leq
\Rh+
\sqrt{\frac{\log(1/\delta)}{2n}}.
\label{eq:hoeffding}
\end{equation}
Thus, a Hoeffding-style CUBV is recovered as a special case of the exponential framework; we write $\gamma_H=\sqrt{\log(1/\delta)/(2n)}$ and $U_H=\Rh+\gamma_H$ for the corresponding upper estimate, used in Sections~10--11 below.

\subsection{Variance-aware / Bernstein-CUBV}

Worst-case range information can be unnecessarily conservative when the
loss variance is small. A standard variance-sensitive alternative is to
control the log-MGF of the generalisation gap by a sub-gamma envelope
\cite{boucheron2013concentration}. Specifically, we write
\begin{equation}
\gamma(\lambda)
=
\frac{\lambda^2 v}{2(1-c\lambda)},
\qquad 0<\lambda<1/c.
\label{eq:subgamma}
\end{equation}

Here, $v>0$ is a variance-factor parameter controlling the quadratic
part of the fluctuations of the generalisation gap, while $c>0$ is a
scale parameter controlling the contribution of larger deviations.
Thus, $v$ plays a role analogous to a variance proxy, whereas $c$
determines how rapidly the sub-gamma envelope departs from the purely
sub-Gaussian regime. Optimising the resulting $\gamma$-CUBV inequality with respect to $\lambda$ yields the Bernstein-type tail bound
\begin{equation}
\R
\leq
\Rh+
\sqrt{2v\log(1/\delta)}
+
c\log(1/\delta).
\label{eq:bernstein-general}
\end{equation}

For an average of $n$ bounded variables, Bernstein-type inequalities can
be written with slightly different numerical constants depending on the
chosen formulation. Under the parameterisation adopted here
\cite{boucheron2013concentration}, we take
\begin{equation}
v=\frac{\sigma^2}{n},
\qquad
c=\frac{b}{3n},
\end{equation}
where $\sigma^2$ denotes the variance of the individual bounded loss
variable and $b$ is an upper bound on the magnitude of its centred
deviation, i.e.,
\[
|\ell-\mathbb{E}[\ell]|\leq b.
\]

Substituting these quantities into
\eqref{eq:bernstein-general} gives
\begin{equation}
\R
\leq
\Rh+
\sqrt{
\frac{2\sigma^2\log(1/\delta)}{n}
}
+
\frac{b\log(1/\delta)}{3n}.
\label{eq:bernstein}
\end{equation}

We refer to \eqref{eq:bernstein} as \emph{Variance-aware CUBV}
(V-CUBV). Unlike an accuracy-only summary, it distinguishes experiments
with similar average performance but substantially different
variability.

\subsubsection{Empirical-Bernstein CUBV}

When the independent bounded-observation assumptions required by
empirical-Bernstein inequalities \cite{maurer2009bernstein} are
appropriate, the unknown variance can be replaced by an empirical estimate,
\begin{equation}
\widehat V
=
\frac{1}{n-1}
\sum_{i=1}^{n}
(\ell_i-\overline{\ell})^2,
\end{equation}
where
\[
\overline{\ell}
=
\frac{1}{n}\sum_{i=1}^{n}\ell_i
=
\widehat R
\]
is the empirical mean loss. For the $0$--$1$ loss used in our
experiments $\ell_i=\mathbf{1}\{h(x_i)\neq y_i\}$. A representative empirical-Bernstein form is
\begin{equation}
R
\leq
\widehat R+
\sqrt{\frac{2\widehat{V}\log(c_1/\delta)}{n}}
+
\frac{c_2\log(c_1/\delta)}{n},
\label{eq:empbern}
\end{equation}
where the numerical constants depend on the particular empirical-Bernstein
inequality adopted. For the empirical-Bernstein-style parameterisation used in our experiments, we set $c_1=1$ and $c_2=2/3$, obtaining
\begin{equation}
\gamma_B
=
\sqrt{
\frac{2\widehat V\log(1/\delta)}{n}
}
+
\frac{2\log(1/\delta)}{3n},
\qquad
U_B=\widehat R+\gamma_B.
\label{eq:empbern-applied}
\end{equation}

Since the $0$--$1$ risk is bounded above by one, the numerical
implementation uses
\[
U_B^{\mathrm{clip}}=\min\{1,U_B\}.
\]

Equation~\eqref{eq:empbern} should not be applied naively to K-fold
scores as though they were independent observations. In K-fold CV,
dependence must either be incorporated explicitly, as in the next
section, or controlled by a concentration result valid under the
relevant dependence structure.

\section{Dependency-aware K-fold CUBV}

Dependence induced by overlapping training sets is a central difficulty in statistical inference from K-fold CV. Previous work has shown both the limitations of variance estimation from a single K-fold experiment and the need for additional assumptions, resampling structure, or stability conditions to obtain valid uncertainty statements \cite{nadeau2003,bengio2004,bayle2020,bates2024}. The construction below takes a complementary route: rather than estimating fold uncertainty as if the folds were independent replications, it controls the joint exponential behaviour of the fold-wise generalisation gaps.

\subsection{Joint exponential control}

The scalar $\gamma$-CUBV construction extends directly to the joint
behaviour of the $K$ fold-wise generalisation gaps. Let
\begin{equation}
\bm{\Delta}
=
(\Delta_1,\ldots,\Delta_K)^\top,
\qquad
\bm{w}
=
(w_1,\ldots,w_K)^\top,
\end{equation}
where
\[
\Delta_k=R_k-\widehat R_k,
\qquad
w_k\geq0,
\qquad
\sum_{k=1}^{K}w_k=1.
\]

The corresponding weighted population and empirical CV risks are
\begin{equation}
R_{\mathrm{CV}}
=
\sum_{k=1}^{K}w_kR_k,
\qquad
\widehat R_{\mathrm{CV}}
=
\sum_{k=1}^{K}w_k\widehat R_k.
\end{equation}
Hence,
\begin{equation}
\Delta_{\mathrm{CV}}
=
R_{\mathrm{CV}}-\widehat R_{\mathrm{CV}}
=
\sum_{k=1}^{K}w_k\Delta_k
=
\bm{w}^{\top}\bm{\Delta}.
\label{eq:cv-gap-vector}
\end{equation}

Thus, although $\bm{\Delta}$ is a $K$-dimensional random vector, the
quantity entering the risk bound is again a scalar generalisation gap,
namely $\Delta_{\mathrm{CV}}=\bm{w}^{\top}\bm{\Delta}$. Assume that its exponential moment is controlled by a joint envelope
$\Gamma_K(\lambda)$,
\begin{equation}
\E\!\left[
\exp\left\{
\lambda\bm{w}^{\top}\bm{\Delta}
\right\}
\right]
\leq
\exp\{\Gamma_K(\lambda)\},
\qquad \lambda>0.
\label{eq:joint-envelope}
\end{equation}

Applying Theorem~1 to the scalar gap
$\Delta_{\mathrm{CV}}=\bm{w}^{\top}\bm{\Delta}$ immediately gives
\begin{equation}
R_{\mathrm{CV}}
\leq
\widehat R_{\mathrm{CV}}
+
\inf_{\lambda>0}
\frac{
\Gamma_K(\lambda)+\log(1/\delta)
}{\lambda}.
\label{eq:joint-cubv}
\end{equation}

No factorisation across folds is required. In particular, the
fold-wise gaps may be dependent, with their joint behaviour absorbed
into $\Gamma_K(\lambda)$. The remaining task is therefore to construct
a valid envelope $\Gamma_K(\lambda)$ under an appropriate dependence
model.

\subsection{Sub-Gaussian dependency model}

\begin{assumption}[Joint sub-Gaussian gaps]
There exist a mean vector $\bm{\mu}$ and positive semidefinite proxy matrix $\bm{\Sigma}$ such that, for every $\bm{t}\in\mathbb{R}^K$,
\begin{equation}
\E\!\left[
e^{\bm{t}^{\top}(\bm{\Delta}-\bm{\mu})}
\right]
\leq
\exp\left\{
\frac{1}{2}
\bm{t}^{\top}\bm{\Sigma}\bm{t}
\right\}.
\label{eq:joint-subg}
\end{equation}
\end{assumption}

\begin{proposition}[Dependency-aware CUBV]
Under Assumption~2, with probability at least $1-\delta$,
\begin{equation}
\R_{\mathrm{CV}}
\leq
\Rh_{\mathrm{CV}}
+
\bm{w}^{\top}\bm{\mu}
+
\sqrt{
2\bm{w}^{\top}\bm{\Sigma}\bm{w}
\log(1/\delta)
}.
\label{eq:dep-cubv}
\end{equation}
\end{proposition}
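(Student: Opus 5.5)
The plan is to reduce the statement to Theorem~1 applied to the scalar gap $\Delta_{\mathrm{CV}}=\bm{w}^{\top}\bm{\Delta}$, exactly as in \eqref{eq:joint-cubv}. The mean term $\bm{w}^{\top}\bm{\mu}$ will be handled by centring. The main work is to produce a valid joint envelope $\Gamma_K(\lambda)$ from Assumption~2.

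First, fix $\lambda>0$ and specialise Assumption~2 to the direction $\bm{t}=\lambda\bm{w}$. This gives
\[
\E\!\left[e^{\lambda\bm{w}^{\top}(\bm{\Delta}-\bm{\mu})}\right]\leq\exp\left\{\tfrac{1}{2}\lambda^2\bm{w}^{\top}\bm{\Sigma}\bm{w}\right\}.
\]
Since $\bm{w}$ is deterministic, $e^{\lambda\bm{w}^{\top}\bm{\mu}}$ is a constant and can be moved outside the expectation. We then obtain \eqref{eq:joint-envelope} with
\[
\Gamma_K(\lambda)=\lambda\,\bm{w}^{\top}\bm{\mu}+\tfrac{1}{2}\lambda^2\sigma_w^2,\qquad \sigma_w^2:=\bm{w}^{\top}\bm{\Sigma}\bm{w}\geq 0,
\]
where nonnegativity of $\sigma_w^2$ follows from $\bm{\Sigma}$ being PSD. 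No factorisation across folds is used, so dependence is fully absorbed into $\sigma_w^2$.

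Second, apply Theorem~1 in its fixed-$\lambda$ form \eqref{eq:gamma-cubv-fixed} with $\Lambda=(0,\infty)$. With probability at least $1-\delta$,
\[
\R_{\mathrm{CV}}\leq\Rh_{\mathrm{CV}}+\bm{w}^{\top}\bm{\mu}+\frac{\lambda\sigma_w^2}{2}+\frac{\log(1/\delta)}{\lambda}.
\]
The linear term in $\Gamma_K$ divides by $\lambda$ to give exactly the constant shift $\bm{w}^{\top}\bm{\mu}$.

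Third, optimise over $\lambda$ as in the sub-Gaussian case \eqref{eq:subgaussian}. Take $\lambda^\star=\sqrt{2\log(1/\delta)/\sigma_w^2}$, which is deterministic because $\bm{w}$, $\bm{\Sigma}$ and $\delta$ are fixed. The two remaining terms then sum to $\sqrt{2\sigma_w^2\log(1/\delta)}$, which yields \eqref{eq:dep-cubv}.

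The only delicate point is legitimacy of the optimisation. The probability-$1-\delta$ event in Theorem~1 holds for each fixed $\lambda$, not simultaneously for all $\lambda$. I therefore plan to choose $\lambda^\star$ before seeing data rather than take a data-dependent infimum; this requires $\bm{w}$ to be non-random, or at least chosen independently of the validation data. The degenerate case $\sigma_w^2=0$ is handled separately. There $\bm{w}^{\top}(\bm{\Delta}-\bm{\mu})$ has MGF at most $1$ for all $\lambda>0$, so letting $\lambda\to\infty$ in the fixed-$\lambda$ bound gives $\R_{\mathrm{CV}}\leq\Rh_{\mathrm{CV}}+\bm{w}^{\top}\bm{\mu}$. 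To make this rigorous, I would intersect the events for an increasing sequence $\lambda_m\to\infty$: the fixed-$\lambda$ bound shows each fails with probability at most $\delta$, and these events decrease in $m$, so by continuity of measure the limiting event, which is exactly the claimed bound, fails with probability at most $\delta$.
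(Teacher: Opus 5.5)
Your proposal is correct and follows the paper's proof: set $\bm{t}=\lambda\bm{w}$ in Assumption~2 to get $\Gamma_K(\lambda)\leq\lambda\bm{w}^{\top}\bm{\mu}+\tfrac{\lambda^2}{2}\bm{w}^{\top}\bm{\Sigma}\bm{w}$, substitute this into the scalar $\gamma$-CUBV bound for $\bm{w}^{\top}\bm{\Delta}$, and optimise over $\lambda$. Your additional care goes beyond what the paper writes out: you fix the deterministic $\lambda^\star$ in advance rather than taking a post-hoc infimum, and you handle the degenerate case $\bm{w}^{\top}\bm{\Sigma}\bm{w}=0$ by continuity of measure, both of which the paper leaves implicit.
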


\begin{proof}
Set $\bm{t}=\lambda\bm{w}$ in \eqref{eq:joint-subg}. Then
\[
\Gamma_K(\lambda)
\leq
\lambda\bm{w}^{\top}\bm{\mu}
+
\frac{\lambda^2}{2}
\bm{w}^{\top}\bm{\Sigma}\bm{w}.
\]
Substitution into \eqref{eq:joint-cubv} and optimisation over $\lambda$ produces \eqref{eq:dep-cubv}.
\end{proof}

The matrix term expands as
\begin{equation}
\bm{w}^{\top}\bm{\Sigma}\bm{w}
=
\sum_{k=1}^{K}w_k^2\Sigma_{kk}
+
2\sum_{i<j}w_iw_j\Sigma_{ij}.
\label{eq:cov-expansion}
\end{equation}
Thus, the uncertainty of the CV average depends not only on fold-wise uncertainty but also on cross-fold dependence. With equal fold weights $\bm{w}=K^{-1}\bm{1}$, and writing $\mu=\bm{w}^\top\bm\mu$, $\Sigma=\mathrm{Cov}(\bm\Delta)$, this specialises to the applied dependency-aware bound
\begin{equation}
U_{\Sigma}
=
\Rh_{\mathrm{CV}}
+
\bm{w}^{\top}\bm\mu
+
\sqrt{2\bm{w}^{\top}\bm\Sigma\bm{w}\log(1/\delta)},
\label{eq:dep-applied}
\end{equation}
used, with $\bm\mu$ and $\bm\Sigma$ calibrated on a disjoint Monte Carlo subset, in Section~11.

\subsection{Equicorrelation and the effective number of folds}

For equal weights $w_k=1/K$, assume
\begin{align}
\Sigma_{kk} &= \sigma^2,\\
\Sigma_{ij} &= \rho\sigma^2,\qquad i\neq j.
\end{align}
Then
\begin{equation}
\bm{w}^{\top}\bm{\Sigma}\bm{w}
=
\frac{\sigma^2}{K}
[1+(K-1)\rho].
\end{equation}
When $\bm{w}^{\top}\bm{\mu}=0$ (or after accounting separately for the bias term), Proposition~1 reduces to
\begin{equation}
\R_{\mathrm{CV}}
\leq
\Rh_{\mathrm{CV}}
+
\sqrt{
\frac{
2\sigma^2[1+(K-1)\rho]\log(1/\delta)
}{K}
}.
\label{eq:equicorr}
\end{equation}

Define
\begin{equation}
\keff
=
\frac{K}{1+(K-1)\rho}.
\label{eq:keff}
\end{equation}
Equation~\eqref{eq:equicorr} becomes
\begin{equation}
\R_{\mathrm{CV}}
\leq
\Rh_{\mathrm{CV}}
+
\sqrt{
\frac{
2\sigma^2\log(1/\delta)
}{\keff}
}.
\label{eq:keff-bound}
\end{equation}

Several limiting cases are immediate:
\begin{align}
\rho=0
&\quad\Rightarrow\quad
\keff=K,\\
\rho\rightarrow1
&\quad\Rightarrow\quad
\keff\rightarrow1.
\end{align}
Hence, adding highly dependent folds need not provide the $K^{-1/2}$ uncertainty reduction associated with independent replications.

\begin{remark}[Interpretation of $\rho$]
The theoretical parameter $\rho$ in \eqref{eq:equicorr} describes dependence between fold-wise \emph{generalisation gaps}, not merely correlation between observed fold accuracies. The latter may be used as an empirical proxy only if its relationship to gap dependence is established. This distinction is fundamental because the variance of K-fold CV cannot, in general, be estimated unbiasedly from a single K-fold experiment \cite{bengio2004,nadeau2003}.
\end{remark}

\subsection{Safe uncertainty inflation when the dependency matrix is estimated}

A direct plug-in rule $\bm{\Sigma}\leftarrow\widehat{\bm{\Sigma}}$ does not
automatically preserve the coverage of \eqref{eq:dep-cubv}. A conservative
alternative is to estimate a PSD \emph{upper confidence matrix}. This viewpoint is related to the broader matrix-concentration literature, in which operator-norm or semidefinite controls are used to obtain finite-sample guarantees for random matrices \cite{tropp2015}.

\begin{definition}[PSD upper confidence matrix]
Let $\eta\in(0,1)$. A random symmetric matrix $\bm{U}_{\eta}$ is called a
$(1-\eta)$ PSD upper confidence matrix for the sub-Gaussian proxy
$\bm{\Sigma}$ when
\begin{equation}
\Pp\!\left(
\bm{\Sigma}\preceq\bm{U}_{\eta}
\right)\geq 1-\eta.
\label{eq:psd-upper}
\end{equation}
\end{definition}

\begin{theorem}[Safe Dependency-CUBV with estimated dependence]
Assume that the conditions of Proposition~1 hold and that an auxiliary
calibration procedure produces $\bm{U}_{\eta}$ satisfying
\eqref{eq:psd-upper}. Then, with probability at least $1-\delta-\eta$,
\begin{equation}
\R_{\mathrm{CV}}
\leq
\Rh_{\mathrm{CV}}
+
\bm{w}^{\top}\bm{\mu}
+
\sqrt{
2\bm{w}^{\top}\bm{U}_{\eta}\bm{w}
\log(1/\delta)
}.
\label{eq:safe-dep}
\end{equation}
\end{theorem}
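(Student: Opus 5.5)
The plan is a union bound over two events: the concentration event of Proposition~1 and the calibration event of the PSD upper confidence matrix. After that, a monotonicity argument shows that on the intersection, the bound with the true $\bm{\Sigma}$ implies the bound with $\bm{U}_{\eta}$.

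\emph{Step 1.} Define
\[
E_1=\Big\{\R_{\mathrm{CV}}\le \Rh_{\mathrm{CV}}+\bm{w}^{\top}\bm{\mu}+\sqrt{2\bm{w}^{\top}\bm{\Sigma}\bm{w}\log(1/\delta)}\Big\},
\qquad
E_2=\{\bm{\Sigma}\preceq\bm{U}_{\eta}\}.
\]
Proposition~1 gives $\Pp(E_1)\ge 1-\delta$. Here $\bm{\Sigma}$ is the deterministic proxy, so the optimal $\lambda^\star$ depends only on deterministic quantities. The Chernoff argument of Theorem~1 is therefore applied at a fixed $\lambda$, with no data-dependent choice of $\lambda$. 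Definition~1 gives $\Pp(E_2)\ge 1-\eta$.

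\emph{Step 2.} On $E_2$, the definition of the Loewner order applied to the vector $\bm{w}$ gives
\[
\bm{w}^{\top}(\bm{U}_{\eta}-\bm{\Sigma})\bm{w}\ge 0,
\qquad\text{so}\qquad
\bm{w}^{\top}\bm{\Sigma}\bm{w}\le\bm{w}^{\top}\bm{U}_{\eta}\bm{w}.
\]
Since $\bm{\Sigma}$ is PSD, both quadratic forms are non-negative, so the square root is well defined. It is also monotone. On $E_1\cap E_2$ we therefore obtain \eqref{eq:safe-dep}.

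\emph{Step 3.} By the union bound,
\[
\Pp\big((E_1\cap E_2)^c\big)\le \Pp(E_1^c)+\Pp(E_2^c)\le\delta+\eta,
\]
which gives the stated probability $1-\delta-\eta$.

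The main subtlety is Step~1. One must \emph{not} plug $\bm{U}_{\eta}$ into the optimiser $\lambda^\star=\sqrt{2\log(1/\delta)/\bm{w}^{\top}\bm{U}_{\eta}\bm{w}}$ and then invoke Markov's inequality. That $\lambda$ would be random, and Theorem~1 holds only for fixed $\lambda$. The route above avoids this problem because the random matrix enters only afterwards, through a deterministic inequality between numbers. No independence between the calibration procedure and the validation sample is needed, since the union bound holds for arbitrary dependence between $E_1$ and $E_2$. The phrase "auxiliary calibration" plays no role in the logic beyond guaranteeing \eqref{eq:psd-upper}.
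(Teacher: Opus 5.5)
Your proof is correct and follows the paper's argument. Both intersect the Proposition~1 event (failure probability at most $\delta$) with the calibration event $\{\bm{\Sigma}\preceq\bm{U}_{\eta}\}$ (failure probability at most $\eta$), use $\bm{w}^{\top}\bm{\Sigma}\bm{w}\le\bm{w}^{\top}\bm{U}_{\eta}\bm{w}$ and monotonicity of the square root on the intersection, and close with the union bound; your extra remarks (that $\lambda^\star$ is built from the deterministic true proxy, and that the union bound needs no independence between the two events) are accurate and make explicit what the paper leaves implicit.
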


\begin{proof}
Let
\[
\mathcal{E}_{\Sigma}
=
\{
\bm{\Sigma}\preceq\bm{U}_{\eta}
\}.
\]
On $\mathcal{E}_{\Sigma}$,
\[
\bm{w}^{\top}\bm{\Sigma}\bm{w}
\leq
\bm{w}^{\top}\bm{U}_{\eta}\bm{w}.
\]
Proposition~1 fails with probability at most $\delta$ when the true proxy is
used, while $\mathcal{E}_{\Sigma}$ fails with probability at most $\eta$.
The union bound therefore gives total failure probability at most
$\delta+\eta$.
\end{proof}

This theorem separates two uncertainty budgets:
$\delta$ controls the risk-tail event and $\eta$ controls uncertainty in the
dependency estimate. It therefore avoids treating an estimated covariance
matrix as if it were known exactly.

\begin{corollary}[Operator-norm inflation]
If an estimator satisfies
\begin{equation}
\Pp\!\left(
\|\widehat{\bm{\Sigma}}-\bm{\Sigma}\|_{\mathrm{op}}
\leq \varepsilon_{\eta}
\right)
\geq1-\eta,
\end{equation}
then
\begin{equation}
\bm{U}_{\eta}
=
\widehat{\bm{\Sigma}}
+
\varepsilon_{\eta}\bm{I}
\end{equation}
is a valid PSD upper confidence matrix, and
\begin{equation}
\R_{\mathrm{CV}}
\leq
\Rh_{\mathrm{CV}}
+
\bm{w}^{\top}\bm{\mu}
+
\sqrt{
2\left(
\bm{w}^{\top}\widehat{\bm{\Sigma}}\bm{w}
+
\varepsilon_{\eta}\|\bm{w}\|_2^2
\right)
\log(1/\delta)
}.
\label{eq:operator-safe}
\end{equation}
\end{corollary}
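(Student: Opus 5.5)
The proof has two parts: show that $\bm{U}_{\eta}=\widehat{\bm{\Sigma}}+\varepsilon_{\eta}\bm{I}$ satisfies Definition~1, then feed it into Theorem~2 and expand the quadratic form.

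\textbf{Step 1: PSD domination on the good event.} Let $\mathcal{E}=\{\|\widehat{\bm{\Sigma}}-\bm{\Sigma}\|_{\mathrm{op}}\le\varepsilon_{\eta}\}$, which by hypothesis has probability at least $1-\eta$. We need $\bm{\Sigma}\preceq\bm{U}_{\eta}$ on $\mathcal{E}$. Take any $\bm{x}\in\mathbb{R}^K$. The difference $\bm{\Sigma}-\widehat{\bm{\Sigma}}$ is symmetric, so its quadratic form is bounded by its operator norm:
\[
\bm{x}^{\top}(\bm{\Sigma}-\widehat{\bm{\Sigma}})\bm{x}\le\|\bm{\Sigma}-\widehat{\bm{\Sigma}}\|_{\mathrm{op}}\|\bm{x}\|_2^2\le\varepsilon_{\eta}\|\bm{x}\|_2^2 .
\]
Rearranging gives $\bm{x}^{\top}(\widehat{\bm{\Sigma}}+\varepsilon_{\eta}\bm{I}-\bm{\Sigma})\bm{x}\ge0$, i.e.\ $\bm{\Sigma}\preceq\bm{U}_{\eta}$. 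Hence $\Pp(\bm{\Sigma}\preceq\bm{U}_{\eta})\ge\Pp(\mathcal{E})\ge1-\eta$, and $\bm{U}_{\eta}$ is a $(1-\eta)$ PSD upper confidence matrix in the sense of \eqref{eq:psd-upper}.

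One technicality needs checking: $\widehat{\bm{\Sigma}}$ must be symmetric, as the definition requires. If the estimator is not symmetric, replace it by its symmetrisation $\tfrac12(\widehat{\bm{\Sigma}}+\widehat{\bm{\Sigma}}^{\top})$. This changes neither its quadratic form nor, by the triangle inequality and $\|\bm{A}^{\top}\|_{\mathrm{op}}=\|\bm{A}\|_{\mathrm{op}}$, its operator-norm distance to the symmetric $\bm{\Sigma}$ beyond $\varepsilon_{\eta}$.

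\textbf{Step 2: invoke Theorem~2 and expand.} Applying Theorem~2 with this $\bm{U}_{\eta}$ gives \eqref{eq:safe-dep} with probability at least $1-\delta-\eta$. Expanding the quadratic form,
\[
\bm{w}^{\top}\bm{U}_{\eta}\bm{w}=\bm{w}^{\top}\widehat{\bm{\Sigma}}\bm{w}+\varepsilon_{\eta}\bm{w}^{\top}\bm{w}=\bm{w}^{\top}\widehat{\bm{\Sigma}}\bm{w}+\varepsilon_{\eta}\|\bm{w}\|_2^2 ,
\]
and substituting into \eqref{eq:safe-dep} yields \eqref{eq:operator-safe}. The square root is well defined: on $\mathcal{E}$ the right-hand side is at least $\bm{w}^{\top}\bm{\Sigma}\bm{w}\ge0$, and off $\mathcal{E}$ the statement carries no guarantee anyway.

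\textbf{Where care is needed.} Nothing here is a real obstacle. The point to state explicitly is that the probability $1-\delta-\eta$ comes from the union bound already proved in Theorem~2. We must not re-derive it as if $\bm{U}_{\eta}$ were fixed in advance. Theorem~2's proof handles this correctly because it works on the intersection of the Proposition~1 event and $\mathcal{E}_{\Sigma}$, and that argument remains valid even if the calibration procedure is not independent of the K-fold sample.
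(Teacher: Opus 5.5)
Your proposal is correct and follows the paper's proof. The paper gets $\bm{\Sigma}\preceq\widehat{\bm{\Sigma}}+\varepsilon_{\eta}\bm{I}$ from the Loewner sandwich $-\varepsilon_{\eta}\bm{I}\preceq\widehat{\bm{\Sigma}}-\bm{\Sigma}\preceq\varepsilon_{\eta}\bm{I}$ implied by the operator-norm event, which is your quadratic-form bound, and then cites Theorem~2; your expansion $\bm{w}^{\top}\bm{U}_{\eta}\bm{w}=\bm{w}^{\top}\widehat{\bm{\Sigma}}\bm{w}+\varepsilon_{\eta}\|\bm{w}\|_2^2$ and the symmetrisation remark only make explicit details the paper leaves implicit.
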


\begin{proof}
The operator-norm event implies
$-\varepsilon_{\eta}\bm{I}\preceq
\widehat{\bm{\Sigma}}-\bm{\Sigma}
\preceq\varepsilon_{\eta}\bm{I}$,
hence
$\bm{\Sigma}\preceq
\widehat{\bm{\Sigma}}+\varepsilon_{\eta}\bm{I}$.
The result follows from Theorem~2.
\end{proof}

\subsubsection{A finite-sample illustration under bounded calibration gaps}

The preceding result is deliberately abstract: it only requires a valid
operator-norm radius. To make the construction explicit, suppose an
\emph{independent calibration experiment} provides $B$ independent centred
gap vectors
\[
\bm{X}^{(b)}=\bm{\Delta}^{(b)}-\bm{\mu},
\qquad b=1,\ldots,B,
\]
with $|X^{(b)}_k|\leq1$, and suppose the covariance matrix is an admissible
sub-Gaussian proxy. Define
\begin{equation}
\widehat{\bm{\Sigma}}
=
\frac{1}{B}
\sum_{b=1}^{B}
\bm{X}^{(b)}\bm{X}^{(b)\top}.
\end{equation}
For every matrix entry, the random variable
$X_i^{(b)}X_j^{(b)}$ lies in $[-1,1]$. Hoeffding's inequality and a union
bound over the $K^2$ entries imply that, with probability at least $1-\eta$,
\begin{equation}
\max_{i,j}
\left|
\widehat{\Sigma}_{ij}-\Sigma_{ij}
\right|
\leq
t_{\eta,B,K},
\qquad
t_{\eta,B,K}
=
\sqrt{
\frac{2\log(2K^2/\eta)}{B}
}.
\label{eq:entrywise-radius}
\end{equation}
Since $\|\bm{A}\|_{\mathrm{op}}\leq K\|\bm{A}\|_{\max}$ for a
$K\times K$ matrix,
\begin{equation}
\varepsilon_{\eta}
=
K t_{\eta,B,K}
\label{eq:crude-op-radius}
\end{equation}
is a valid, though generally loose, finite-sample radius.

This construction is primarily a proof-of-principle. In real K-fold CV the
generalisation gaps are latent, so independent calibration gap vectors are not
available from the K validation scores alone. This is consistent with the
negative result of Bengio and Grandvalet \cite{bengio2004}. Practically useful
estimators therefore require additional information, for example an untouched
audit set, repeated learning experiments with an external test population,
a justified model-based covariance structure, or a problem-specific
algorithmic-stability bound. Repeated CV by itself should be viewed as a
diagnostic source of dependence information rather than a universally valid
covariance estimator, a point revisited empirically in Section~11.

\subsection{A dependence-robust H\"older construction}

There is a second route that avoids independence without specifying a covariance matrix. H\"older-type inequalities have previously been used to derive concentration and correlation bounds under dependence \cite{pelekis2017}; here we use the same underlying inequality to combine fold-wise exponential envelopes without factorising the dependent K-fold MGF. Write
\begin{equation}
\E\exp\left\{
\lambda\sum_{k=1}^{K}w_k\Delta_k
\right\}
=
\E\prod_{k=1}^{K}
e^{\lambda w_k\Delta_k}.
\end{equation}
Let $p_k>1$ satisfy
\begin{equation}
\sum_{k=1}^{K}\frac{1}{p_k}=1.
\end{equation}
H\"older's inequality implies
\begin{equation}
\E\prod_{k=1}^{K}
e^{\lambda w_k\Delta_k}
\leq
\prod_{k=1}^{K}
\left(
\E e^{p_k\lambda w_k\Delta_k}
\right)^{1/p_k}.
\end{equation}
If each fold satisfies
\begin{equation}
\E e^{t\Delta_k}
\leq
e^{\gamma_k(t)},
\end{equation}
then
\begin{equation}
\Gamma_K(\lambda)
\leq
\sum_{k=1}^{K}
\frac{1}{p_k}
\gamma_k(p_k\lambda w_k).
\label{eq:holder-cgf}
\end{equation}
Consequently,
\begin{equation}
\R_{\mathrm{CV}}
\leq
\Rh_{\mathrm{CV}}
+
\inf_{\lambda>0}
\frac{
\sum_k p_k^{-1}\gamma_k(p_k\lambda w_k)
+
\log(1/\delta)
}{\lambda}.
\label{eq:holder-bound}
\end{equation}

For equal weights and $p_k=K$,
\begin{equation}
\Gamma_K(\lambda)
\leq
\frac{1}{K}
\sum_{k=1}^{K}\gamma_k(\lambda).
\end{equation}
The resulting bound is conservative but does not obtain a fictitious gain by factorising dependent folds.

\section{PAC-Bayesian $\gamma$-CUBV}

PAC-Bayesian bounds control the generalisation of data-dependent distributions over predictors through a prior--posterior complexity term, with foundational formulations due to McAllester and subsequent refinements and expositions \cite{mcallester1999pac,seeger2002,alquier2024}. Let $P$ be a prior distribution over predictors that is independent of the observed sample, and let $Q$ be a data-dependent posterior.

\begin{assumption}[Uniform prior-averaged exponential control]
For each $\lambda\in\Lambda$,
\begin{equation}
\E_S\E_{h\sim P}
\exp\left\{
\lambda[
\R(h)-\Rh_S(h)
]
\right\}
\leq
e^{\gamma(\lambda)}.
\label{eq:pb-mgf}
\end{equation}
\end{assumption}

\begin{theorem}[PAC-Bayesian $\gamma$-CUBV]
Under Assumption~3, for any $\delta\in(0,1)$ and fixed $\lambda\in\Lambda$, with probability at least $1-\delta$ over $S$, simultaneously for all posteriors $Q\ll P$,
\begin{equation}
\R(Q)
\leq
\Rh(Q)
+
\frac{
\KL(Q\|P)+
\gamma(\lambda)+
\log(1/\delta)
}{\lambda}.
\label{eq:pac-gamma}
\end{equation}
Here
\begin{align}
\R(Q)&=\E_{h\sim Q}\R(h),\\
\Rh(Q)&=\E_{h\sim Q}\Rh(h).
\end{align}
\end{theorem}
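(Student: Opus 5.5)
The argument is the standard PAC-Bayesian one: the Donsker--Varadhan change of measure, followed by the same Markov step used in Theorem~1. Fix $\lambda\in\Lambda$ and write $f_S(h)=\lambda[\R(h)-\Rh_S(h)]$. The key tool is the variational (Donsker--Varadhan) inequality: for any measurable $f$ and any $Q\ll P$,
\[
\E_{h\sim Q} f(h)\leq \KL(Q\|P)+\log \E_{h\sim P} e^{f(h)} .
\]
I would either cite it as standard or derive it from Jensen's inequality. The derivation writes $\E_Q f-\KL(Q\|P)=\E_Q[\log(e^{f}\,dP/dQ)]\le\log\E_Q[e^f\,dP/dQ]\le\log\E_P e^f$. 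The last step is an inequality rather than an equality because $Q$ may not charge the whole support of $P$. The point is that the right-hand side depends on $S$ only through the single scalar random variable $\Phi(S)=\E_{h\sim P}e^{f_S(h)}$. This scalar does not involve $Q$, which is what gives simultaneity over all posteriors.

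Next I apply Markov's inequality to $\Phi(S)$, exactly as in the proof of Theorem~1. The variable $\Phi(S)$ is nonnegative, and Assumption~3 gives $\E_S\Phi(S)\le e^{\gamma(\lambda)}$; here $P$ is independent of $S$, so the order of the expectations as stated in Assumption~3 is legitimate. Hence $\Pp_S(\Phi(S)\ge e^{\gamma(\lambda)}/\delta)\le\delta$. On the complementary event, which has probability at least $1-\delta$, we have $\log\Phi(S)\le\gamma(\lambda)+\log(1/\delta)$. Since this event does not depend on $Q$, on it the following holds simultaneously for every $Q\ll P$:
\[
\lambda\,\E_{h\sim Q}[\R(h)-\Rh_S(h)]\le \KL(Q\|P)+\gamma(\lambda)+\log(1/\delta).
\]
By linearity the left side equals $\lambda[\R(Q)-\Rh(Q)]$. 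Dividing by $\lambda>0$ gives \eqref{eq:pac-gamma}.

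The main obstacle is the rigorous justification of the change-of-measure step: measurability of $h\mapsto f_S(h)$, and handling $\KL(Q\|P)=\infty$ (where the bound is trivial). Finiteness of $\E_P e^{f}$ holds almost surely because Assumption~3 gives finite expectation. Because the loss is bounded in $[0,1]$, $f_S$ is bounded by $\lambda$, so all integrals exist and the Jensen argument goes through on the set where $dQ/dP>0$.

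Finally, I would stress that $\lambda$ must be fixed before seeing $S$. Optimising over $\lambda$ as in Theorem~1 would require a union bound over a grid of $\lambda$ values, which is why the statement is given for fixed $\lambda$.
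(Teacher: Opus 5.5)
Your proposal is correct and follows the paper's proof: the Donsker--Varadhan change of measure for each realised $S$, then Markov's inequality under Assumption~3 applied to the $Q$-independent scalar $\E_{h\sim P}e^{\lambda[\R(h)-\Rh_S(h)]}$. Because that scalar does not depend on $Q$, the bound holds simultaneously for all $Q\ll P$; your Jensen derivation, boundedness and infinite-$\KL$ remarks only make explicit details the paper leaves implicit.
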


\begin{proof}
For a realised sample $S$, the Donsker--Varadhan change-of-measure inequality \cite{donsker1975} gives
\begin{align}
\lambda \E_{h\sim Q}[
\R(h)-\Rh_S(h)]
\leq
\KL(Q\|P)
+
\log
\E_{h\sim P}
e^{\lambda[\R(h)-\Rh_S(h)]}.
\end{align}
By Markov's inequality and \eqref{eq:pb-mgf}, with probability at least $1-\delta$,
\begin{equation}
\log
\E_{h\sim P}
e^{\lambda[\R(h)-\Rh_S(h)]}
\leq
\gamma(\lambda)+\log(1/\delta).
\end{equation}
Combining both inequalities gives \eqref{eq:pac-gamma}.
\end{proof}

The bound separates four conceptually distinct components:
\begin{equation}
\R(Q)
\leq
\underbrace{\Rh(Q)}_{\text{observed error}}
+
\underbrace{\frac{\KL(Q\|P)}{\lambda}}_{\text{model complexity}}
+
\underbrace{\frac{\gamma(\lambda)}{\lambda}}_{\text{data uncertainty}}
+
\underbrace{\frac{\log(1/\delta)}{\lambda}}_{\text{confidence}}.
\label{eq:decomposition}
\end{equation}

The optimisation of $\lambda$ requires care. If a continuum of values is chosen after observing the data, a valid union bound, discretisation, hyper-prior, or a theorem allowing data-dependent $\lambda$ should be used rather than silently taking a post-hoc infimum.

In the square-root parameterisation \cite{mcallester1999pac} used in the applied experiments of Section~11, for isotropic Gaussians $P=\mathcal N(0,\sigma_P^2 I)$ and $Q=\mathcal N(\widehat\theta,\sigma_Q^2 I)$ around a trained parameter vector $\widehat\theta\in\mathbb{R}^m$,
\begin{equation}
\KL(Q\|P)
=
\frac{1}{2}
\left[
\frac{\|\widehat\theta\|^2}{\sigma_P^2}
+
m
\left(
\frac{\sigma_Q^2}{\sigma_P^2}
-1
-\log
\frac{\sigma_Q^2}{\sigma_P^2}
\right)
\right],
\label{eq:kl-gaussian}
\end{equation}
and Theorem~3 instantiates, for a sample of size $n$, to
\begin{equation}
U_{PB}
=
\Rh(Q)
+
\sqrt{
\frac{
\KL(Q\|P)
+
\log(2\sqrt n/\delta)
}{
2n
}
},
\label{eq:pac-applied}
\end{equation}
where $\Rh(Q)$, the empirical Gibbs risk, is approximated by Monte Carlo sampling $\theta^{(s)}\sim Q$, $s=1,\ldots,S$. For K-fold, a posterior is constructed around the classifier trained in each fold, the fold confidence level is adjusted to $\delta/K$, and fold-wise upper risks are averaged.

\section{Heterogeneous information-fusion CUBV}

Combining multiple sources or domains by non-uniform weighting has a substantial literature, including theoretical analyses of multi-source adaptation showing that the choice of source weights can be critical for generalisation \cite{mansour2008}. Our objective is different but related: the weights below are chosen using a CUBV criterion that combines empirical source risk with uncertainty and cross-source dependence.

Consider $M$ information sources or domains with source-specific empirical and population risks,
\begin{align}
\Rh_m,\qquad \R_m,\qquad m=1,\ldots,M.
\end{align}
Let $\bm{a}$ be non-negative fusion weights satisfying
\begin{equation}
a_m\geq0,
\qquad
\sum_{m=1}^{M}a_m=1.
\end{equation}
Define
\begin{align}
\R_F &= \sum_{m=1}^{M}a_m\R_m,\\
\Rh_F &= \sum_{m=1}^{M}a_m\Rh_m.
\end{align}

For the vector of source-specific gaps
\begin{equation}
\bm{\Delta}_F=
(\R_1-\Rh_1,\ldots,\R_M-\Rh_M)^\top,
\end{equation}
let
\begin{equation}
\Gamma_F(\lambda)
=
\log\E
e^{\lambda\bm{a}^{\top}\bm{\Delta}_F}.
\end{equation}
Then
\begin{equation}
\R_F
\leq
\Rh_F+
\inf_{\lambda>0}
\frac{
\Gamma_F(\lambda)+\log(1/\delta)
}{\lambda}.
\label{eq:fusion-general}
\end{equation}

Under a zero-mean joint sub-Gaussian envelope with proxy matrix $\bm{\Sigma}_F$,
\begin{equation}
\R_F
\leq
\Rh_F+
\sqrt{
2\bm{a}^{\top}\bm{\Sigma}_F\bm{a}
\log(1/\delta)
}.
\label{eq:fusion-subg}
\end{equation}

Equation~\eqref{eq:fusion-subg} suggests a bound-aware fusion rule:
\begin{equation}
\bm{a}^{\star}
=
\arg\min_{\bm{a}\in\Delta_M}
\left[
\bm{a}^{\top}\widehat{\bm{R}}
+
\sqrt{
2\log(1/\delta)
\,\bm{a}^{\top}\bm{\Sigma}_F\bm{a}
}
\right],
\label{eq:fusion-opt}
\end{equation}
where
\begin{equation}
\Delta_M=
\left\{
\bm{a}\in\mathbb{R}^M:
a_m\geq0,\;
\sum_m a_m=1
\right\}.
\end{equation}

This criterion differs fundamentally from empirical-risk-optimal fusion. A source can have low empirical error but high uncertainty or strong redundancy with other sources. Bound-aware fusion penalises both effects.

\section{Summary of the proposed methods}

Table~\ref{tab:methods} summarises the main members of the framework developed in Part I; Section~10 shows how four of them (Hoeffding-CUBV, V-CUBV/Empirical-Bernstein-CUBV, Dependency-CUBV, and PAC-$\gamma$-CUBV) are put to work as a criterion for bounded feature discriminability.

\begin{table*}[t]
\centering
\caption{Representative members of the $\gamma$-CUBV family.}
\label{tab:methods}
\begin{tabular}{p{3.2cm}p{5.0cm}p{7.2cm}}
\toprule
Method & Uncertainty model & Main purpose\\
\midrule
Hoeffding-CUBV &
$\gamma(\lambda)=\lambda^2\sigma^2/2$ with worst-case bounded-loss proxy &
Baseline range-based concentration; recovers a standard CUBV-type upper confidence penalty.\\[1mm]

V-CUBV / Bernstein-CUBV &
Sub-gamma envelope with variance factor $v$ and scale $c$ &
Distinguishes models or experiments having similar mean error but different uncertainty.\\[1mm]

Dependency-CUBV &
Joint sub-Gaussian proxy matrix $\bm{\Sigma}$ for fold gaps &
Accounts explicitly for overlap and dependence among K-fold experiments; introduces $\keff$.\\[1mm]

H\"older-CUBV &
Fold-wise exponential envelopes combined through H\"older's inequality &
Provides a conservative construction without multiplying moment-generating functions of dependent folds.\\[1mm]

PAC-$\gamma$-CUBV &
Exponential envelope plus $\KL(Q\|P)$ complexity &
Separates predictive error, model complexity, data uncertainty, and confidence.\\[1mm]

Fusion-CUBV &
Joint uncertainty matrix across information sources &
Selects fusion weights by minimising an upper bound on risk rather than empirical error alone.\\
\bottomrule
\end{tabular}
\end{table*}

\begin{algorithm}[t]
\caption{Generic $\gamma$-CUBV evaluation}
\label{alg:gamma}
\begin{algorithmic}[1]
\Require Data $S$, learning algorithm $A$, confidence $\delta$, K-fold partition, selected uncertainty model.
\For{$k=1,\ldots,K$}
    \State Fit $h_k\gets A(T_k)$.
    \State Compute validation risk $\Rh_k$ on $V_k$.
\EndFor
\State Compute $\Rh_{\mathrm{CV}}=\sum_k w_k\Rh_k$.
\State Specify or estimate a valid exponential envelope:
\Statex \hspace{\algorithmicindent} $\gamma(\lambda)$ for a scalar bound, or
\Statex \hspace{\algorithmicindent} $\Gamma_K(\lambda)$ / $\bm{\Sigma}$ for dependent folds.
\State Compute the uncertainty penalty at level $\delta$.
\State Return $U_\delta=\Rh_{\mathrm{CV}}+\text{uncertainty penalty}$.
\end{algorithmic}
\end{algorithm}

\section{Bounded feature discriminability}

We now turn the exponential-envelope construction of Sections~3--8 towards a concrete applied question. Let
\begin{equation}
S=\{(z_i,y_i)\}_{i=1}^n
\end{equation}
be a sample drawn from an unknown population distribution $\mathcal D$, where $z_i\in\mathbb R^d$ is a low-dimensional feature vector and $y_i\in\{0,1\}$. Given this representation $Z$, we train a simple classifier $h$, calculate an empirical risk $\Rh$, and assess the representation through an upper confidence bound
\begin{equation}
U(\delta)=\Rh+B(\delta),
\end{equation}
where $B$ is one of the uncertainty corrections of Sections~4--6, derived from the general $\gamma$-CUBV theorem. A feature is useful not merely because $\Rh$ is low, but because the future risk compatible with the observed evidence is also low. We call
\begin{equation}
\mathrm{Tightness}
=
U-\R_{\mathrm{true}}
\end{equation}
the excess width of the bound relative to the true population risk (in simulations $\R_{\mathrm{true}}$ is approximated from a large independent population sample), and we evaluate every bound by both coverage, $\Pp(\R_{\mathrm{true}}\le U)$, and tightness.

For comparing resubstitution and K-fold we define
\begin{equation}
\Delta_{\mathrm{comp}}
=
\left(
U_{K\text{-fold}}-\R_{\mathrm{true},K}
\right)
-
\left(
U_{\mathrm{resub}}-\R_{\mathrm{true,resub}}
\right).
\end{equation}
A positive $\Delta_{\mathrm{comp}}$ favours resubstitution only if the resubstitution bound preserves the target coverage,
\begin{equation}
\Pp
\left(
\R_{\mathrm{true,resub}}
\le
U_{\mathrm{resub}}
\right)
\ge 1-\delta.
\end{equation}

\subsection{Validation paradigms}

\subsubsection{Resubstitution}

A classifier is trained on the complete sample $S$, $\widehat h=\mathcal A(S)$, where $\mathcal A$ denotes the learning algorithm. The resubstitution risk is
\begin{equation}
\Rh_{\mathrm{resub}}
=
\frac{1}{n}
\sum_{i=1}^{n}
\mathbf 1\{
\widehat h(z_i)\ne y_i
\}.
\end{equation}
Because the same data are used to fit and evaluate the classifier, this quantity is generally optimistic and cannot be interpreted as an unbiased estimate of future risk. The question studied here is whether a sufficiently conservative correction, in the sense of Sections~4--6, can transform it into a useful upper-risk estimate.

\subsubsection{K-fold cross-validation}

The sample is divided into $K$ disjoint validation folds. For fold $k$, the classifier $\widehat h^{(-k)}$ is trained on the other $K-1$ folds and evaluated on fold $k$. The K-fold empirical risk is
\begin{equation}
\Rh_K
=
\frac{1}{K}
\sum_{k=1}^{K}
\Rh_k.
\end{equation}
The fold estimators are not independent because the training sets overlap (Section~5). In heterogeneous small samples, fold composition may also vary substantially, causing individual sub-populations to be poorly represented.

\subsection{Instantiated upper bounds}

For resubstitution we use the scalar analogue of the dependency-aware bound with
\begin{equation}
G_{\mathrm{resub}}
=
\R_{\mathrm{true}}
-
\Rh_{\mathrm{resub}}.
\end{equation}
The four principal bound families instantiated for the experiments below are: Hoeffding-CUBV, $U_H=\Rh+\gamma_H$ with $\gamma_H$ as in \eqref{eq:hoeffding}; Empirical-Bernstein CUBV, $U_B=\Rh+\gamma_B$ with $\gamma_B$ as in \eqref{eq:empbern-applied}; the dependency-aware MGF bound $U_\Sigma$ of \eqref{eq:dep-applied}, with the gap mean and covariance calibrated on a Monte Carlo subset disjoint from the evaluation subset, to prevent using the same simulated experiments both to estimate the dependence correction and to measure its performance; and the PAC-Bayes bound $U_{PB}$ of \eqref{eq:pac-applied}, which avoids the artificial construction of a finite bank of random linear classifiers and targets the Gibbs classifier rather than exactly the deterministic posterior-mean classifier. In addition, we report a uniform VC-Hoeffding reference for the resubstitution classifier, a PSD-inflated dependency construction motivated by Section~5.4, and a H\"older-Hoeffding K-fold bound derived from Section~5.5. The VC-Hoeffding reference controls the entire affine-linear hypothesis class and therefore explicitly pays for data-dependent classifier fitting. The PSD-inflated rule is included as an intentionally conservative illustration of covariance-estimation uncertainty, while H\"older-CUBV avoids factorising dependent fold MGFs. Detailed derivations are given in the appendices.

\subsection{Feature and subset scores}

The upper-risk framework applies directly to individual features or low-dimensional feature subsets. For a single feature $z_j$, let $h_j$ be a classifier trained using only that feature; define
\begin{equation}
U_j(\delta)
=
\Rh_j+B_j(\delta).
\end{equation}
For binary classification with balanced classes, the chance-level risk is $\R_{\mathrm{chance}}=0.5$, and a natural sufficient condition for bounded discriminability is
\begin{equation}
U_j(\delta)<0.5.
\end{equation}
More generally, for a feature subset $S$,
\begin{equation}
U_S(\delta)
=
\Rh_S+B_S(\delta),
\end{equation}
and one may rank candidate representations by $U_S$ rather than by empirical accuracy alone, or define a bounded discriminability score
\begin{equation}
\mathrm{BDI}(S;\delta)
=
1-U_S(\delta).
\end{equation}
This construction is conceptually distinct from attribution. SHAP or LIME may indicate that a feature contributes strongly to a model prediction; bounded discriminability asks a different question: whether the classification information associated with that feature is supported by a controlled upper bound on future risk.

\section{Experimental evaluation}
We designed a sequence of controlled Monte Carlo experiments to study
different aspects of the proposed $\gamma$-CUBV framework. Rather than
varying several sources of difficulty simultaneously, the experiments
progressively isolate the effects of sample size, structural complexity,
fold dependence, and heterogeneous information fusion.

All binary classification experiments use genuinely trained
$\ell_2$-regularised logistic-regression classifiers after feature
standardisation. Population risk is approximated using a large
independent synthetic test sample. Unless otherwise stated, we use
$K=5$, confidence level $1-\delta=0.95$, and 60 Monte Carlo
repetitions.

\subsection{Common simulation framework}
\subsubsection{Multimodal data-generating process}

The baseline synthetic problem consists of balanced binary classes
generated from class-conditional Gaussian mixtures. The total number
of Gaussian components across both classes is denoted by $N_c$, with
$N_c/2$ components assigned to each class.

Within each class, one dominant component has mixture weight
$\pi_{\mathrm{major}}=0.6$, while the remaining probability mass is
distributed across the minority components. The dominant class means
are separated along the first coordinate, whereas the minority modes
are placed closer to the decision boundary and displaced in the
remaining coordinates.

Unless otherwise stated, the baseline generator uses
\[
\pi_{\mathrm{major}}=0.6,\qquad
a=1.5,\qquad
\sigma_{\mathrm{cluster}}=1.2.
\]

This construction produces heterogeneous multimodal classes while
preserving a globally exploitable linear signal. It therefore allows
the effect of finite-sample uncertainty and fold composition to be
studied without forcing the linear classifier into a near-chance
non-linear regime.

\subsubsection{Learning and population-risk estimation}

All experiments use an $\ell_2$-regularised logistic-regression
classifier preceded by feature standardisation. A new classifier is
genuinely trained for every Monte Carlo repetition and, in the
cross-validation experiments, independently within every training
fold.

For each generated training sample, population risk is approximated
using an independent test sample of $10\,000$ observations drawn from
the same data-generating distribution. For the full-sample estimator,
the classifier is trained on all $n$ observations and its
resubstitution risk is computed on the same training sample. For
$K$-fold cross-validation, we use stratified $K=5$ folds and average
the held-out empirical risks across folds.

For the calibrated MGF/Dependency constructions, Monte Carlo
repetitions are divided into disjoint calibration and evaluation
subsets. The calibration repetitions are used to estimate the mean
generalisation gap and, for the fold-dependent construction, the
covariance matrix of the fold-wise gaps. The resulting quantities are
then kept fixed when evaluating the bound on the held-out Monte Carlo
repetitions. This separation prevents the reported evaluation sample
from being reused to calibrate the dependency correction.

\subsubsection{Evaluation criteria}

We evaluate each bound along several complementary dimensions. Let
$U$ denote the reported upper risk, $\widehat R$ the corresponding
empirical risk, and $R_{\mathrm{true}}$ the population risk estimated
on the independent test sample.

Empirical coverage is defined as
\begin{equation}
\widehat{\mathrm{Cov}}
=
\frac{1}{B_{\mathrm{eval}}}
\sum_{b=1}^{B_{\mathrm{eval}}}
\mathbf{1}
\left\{
R_{\mathrm{true}}^{(b)}
\leq
U^{(b)}
\right\},
\end{equation}
and is compared with the nominal confidence level $1-\delta=0.95$.

Bound tightness is measured by
\begin{equation}
T = U-R_{\mathrm{true}},
\end{equation}
so that smaller positive values indicate more informative upper
bounds, provided that coverage is maintained.

We also report the generalisation gap
\begin{equation}
G = R_{\mathrm{true}}-\widehat R,
\end{equation}
and the bound penalty
\begin{equation}
P = U-\widehat R.
\end{equation}

Two additional criteria quantify whether a bound is operationally
informative. The non-triviality rate is
\begin{equation}
\Pr(U<1),
\end{equation}
whereas, for the balanced binary problems considered here, the
bounded-discriminability certification rate is
\begin{equation}
\Pr(U<0.5).
\end{equation}
The latter measures how often the available data are sufficient to
certify, at the prescribed confidence level, a future classification
risk below chance.

Because the reported quantities are estimated from repeated Monte Carlo
experiments, we additionally quantify simulation uncertainty. For
continuous Monte Carlo summaries, such as mean upper risk or mean
tightness, we report 95\% confidence intervals of the form
\[
\bar q \pm 1.96\,\frac{s_q}{\sqrt{B_{\mathrm{eval}}}},
\]
where $s_q$ is the empirical standard deviation across the independent
evaluation repetitions. Because coverage and certification are
estimated as binomial proportions, their corresponding 95\% confidence
intervals are computed using the Wilson score interval~\cite{wilson1927}. These intervals quantify Monte Carlo uncertainty and should not be confused with the
generalisation confidence level $1-\delta$ defining the
$\gamma$-CUBV risk bound itself.

\subsection{Experiment I: Effect of sample size}

We first isolate the effect of sample size while keeping the underlying
classification problem fixed. We set $d=2$, $N_c=4$, and vary $n\in\{50,100,200,500,1000,2000,5000\}$ For every sample size, 60 independent Monte Carlo repetitions are generated. The underlying population distribution, classifier family, and data-generating parameters remain unchanged across the sample-size sweep. Consequently, this experiment isolates the reduction in
statistical uncertainty produced by increasing the amount of available
data. For resubstitution, we evaluate Hoeffding-CUBV, Empirical-Bernstein CUBV, VC-Hoeffding, PAC-Bayes, and the calibrated MGF construction. For $K$-fold validation, we evaluate Hoeffding-CUBV, Empirical-Bernstein CUBV, PAC-Bayes, calibrated MGF/Dependency, Safe-MGF/Dependency, and H\"older-Hoeffding CUBV.

\subsubsection{Results}

The population risk remained essentially stable across the complete
sample-size sweep, from approximately $0.229$ at $n=50$ to $0.220$ at
$n=5000$. This confirms that the experiment primarily modifies the
amount of statistical evidence available to the learner rather than the
intrinsic difficulty of the classification problem.

Figure~\ref{fig:exp1_upper_bounds} shows the evolution of the
resubstitution upper risks as the available sample size increases.
The shaded regions represent 95\% Monte Carlo confidence intervals for
the estimated mean upper risk. These intervals quantify uncertainty due
to the finite number of Monte Carlo evaluation repetitions and should
not be confused with the $1-\delta=0.95$ generalisation confidence level
defining the risk bounds themselves.

\begin{figure}[t]
    \centering
    \includegraphics[width=\columnwidth]
    {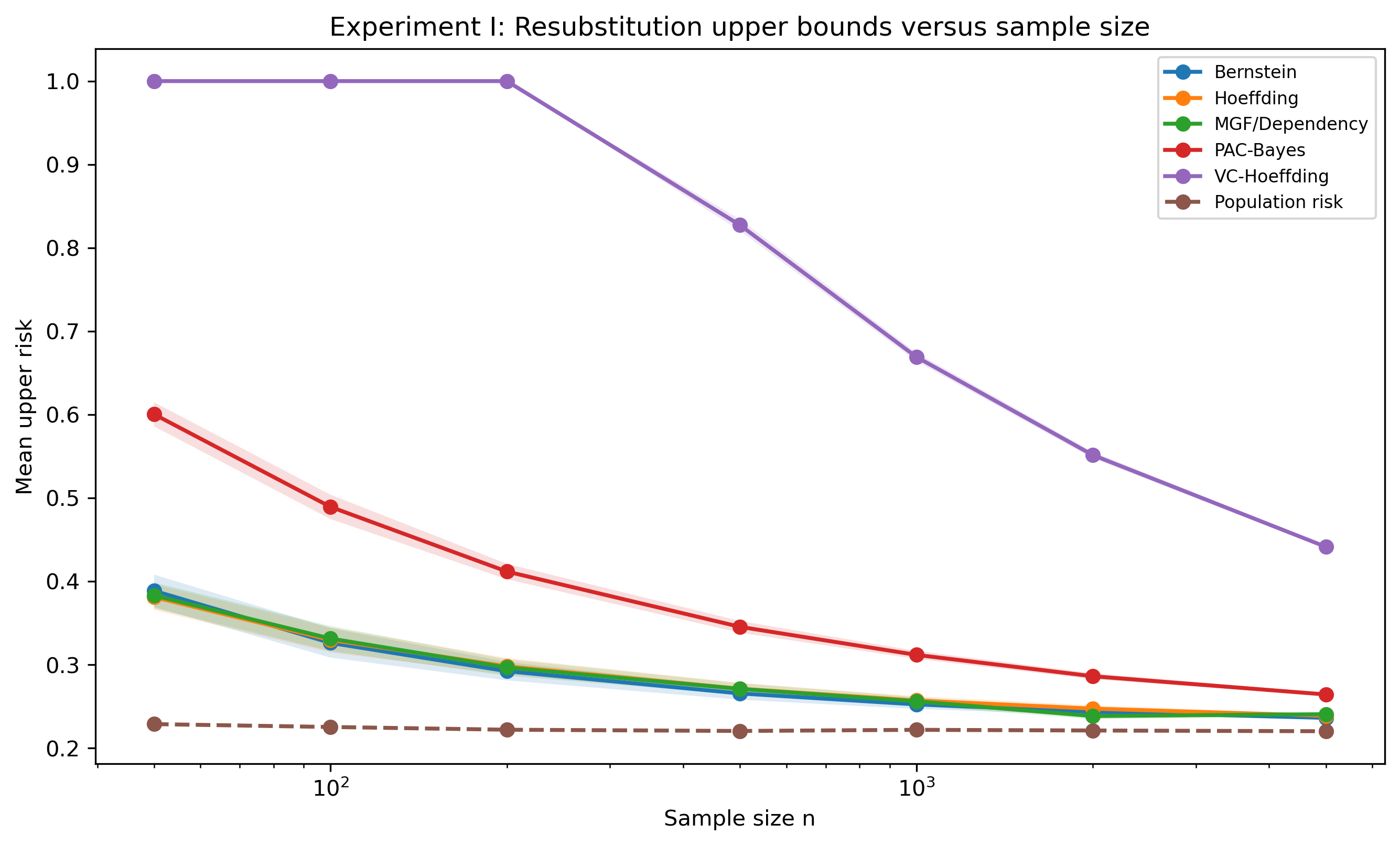}
    \caption{Mean resubstitution upper risk as a function of sample size.
    Shaded regions denote 95\% Monte Carlo confidence intervals for the
    estimated mean across independent evaluation repetitions. The dashed
    curve denotes the population risk estimated from the independent test
    sample. Monte Carlo confidence intervals quantify simulation
    uncertainty and are distinct from the $1-\delta=0.95$ generalisation
    confidence level defining the bounds.}
    \label{fig:exp1_upper_bounds}
\end{figure}

For the deterministic classifier, Hoeffding, empirical-Bernstein, and
the calibrated MGF constructions became progressively tighter as the
sample size increased. Under resubstitution, the mean Hoeffding upper
risk decreased from $0.381$ at $n=50$ to $0.239$ at $n=5000$,
whereas the empirical-Bernstein bound decreased from $0.389$ to
$0.236$. The calibrated MGF bound followed a similar trend, decreasing
from $0.383$ to $0.241$. Their corresponding mean tightness values
decreased from approximately $0.15$--$0.16$ at $n=50$ to approximately
$0.016$--$0.020$ at $n=5000$.

The same qualitative behaviour was observed under $K$-fold
cross-validation. At $n=50$, the mean Hoeffding,
empirical-Bernstein, and calibrated MGF upper risks were $0.395$,
$0.406$, and $0.401$, respectively. At $n=5000$, the corresponding
values were $0.239$, $0.237$, and $0.240$. Thus, the difference between
resubstitution and $K$-fold progressively diminished as the amount of
data increased.

Table~\ref{tab:experiment1_summary} summarises representative results
at the two extremes of the sample-size range. It highlights both the
reduction in upper risk and the substantial decrease in tightness as
$n$ increases.

\begin{table*}[t]
\centering
\caption{Representative results for Experiment I. Each entry reports
mean upper risk / mean tightness.}
\label{tab:experiment1_summary}
\begin{tabular}{lcccc}
\hline
& \multicolumn{2}{c}{$n=50$}
& \multicolumn{2}{c}{$n=5000$} \\
Method & Resub. & $K$-fold & Resub. & $K$-fold \\
\hline
Hoeffding
& 0.381 / 0.152
& 0.395 / 0.165
& 0.239 / 0.019
& 0.239 / 0.019 \\
Emp. Bernstein
& 0.389 / 0.160
& 0.406 / 0.175
& 0.236 / 0.016
& 0.237 / 0.016 \\
MGF/Dependency
& 0.383 / 0.154
& 0.401 / 0.170
& 0.241 / 0.020
& 0.240 / 0.020 \\
PAC-Bayes
& 0.600 / 0.370
& 0.994 / 0.762
& 0.264 / 0.043
& 0.319 / 0.097 \\
VC-Hoeffding
& 1.000 / 0.771
& --
& 0.441 / 0.221
& -- \\
H\"older-Hoeffding
& --
& 0.609 / 0.378
& --
& 0.261 / 0.040 \\
\hline
\end{tabular}
\end{table*}

PAC-Bayes produced wider bounds, particularly in the $K$-fold
setting. Under resubstitution, its mean upper risk decreased from
$0.600$ at $n=50$ to $0.264$ at $n=5000$. Under $K$-fold, the
corresponding values decreased from $0.994$ to $0.319$. In contrast,
the uniform VC-Hoeffding construction was substantially more
conservative: its resubstitution upper risk remained equal to one for
$n\leq 200$, was $0.827$ at $n=500$, $0.669$ at $n=1000$, and
$0.441$ at $n=5000$. The dependence-robust H\"older-Hoeffding construction yielded informative but wider $K$-fold bounds than the standard Hoeffding
specialisation, decreasing from $0.609$ at $n=50$ to $0.261$ at
$n=5000$. By contrast, the present PSD-inflated Safe-MGF/Dependency
implementation remained trivial, with $U=1$, across the complete
sample-size range. This illustrates the price paid by the conservative
PSD inflation used to protect against covariance-estimation uncertainty.

Empirical coverage was generally close to or above the nominal
$0.95$ confidence level. Isolated deviations occurred for
empirical-Bernstein under $K$-fold at $n=1000$, where coverage was
$0.933$, and for the calibrated MGF construction at $n=2000$, where
coverage decreased to $0.867$ under resubstitution and $0.900$ under
$K$-fold. Since coverage is estimated from a finite number of held-out
Monte Carlo repetitions, these point estimates should be interpreted
together with their binomial uncertainty. Wilson 95\% confidence
intervals for empirical coverage are therefore reported in the
supplementary analysis. Beyond coverage and tightness, the bounds can also be evaluated by whether they provide sufficiently strong evidence to certify
better-than-chance prediction. For the balanced binary problem, this
corresponds to the event $U<0.5$.

Figure~\ref{fig:exp1_certification} shows the resulting certification
probability as a function of sample size. Because certification is a
binary event at each Monte Carlo repetition, uncertainty is represented
using 95\% Wilson confidence intervals.

\begin{figure}[t]
    \centering
    \includegraphics[width=\columnwidth]
    {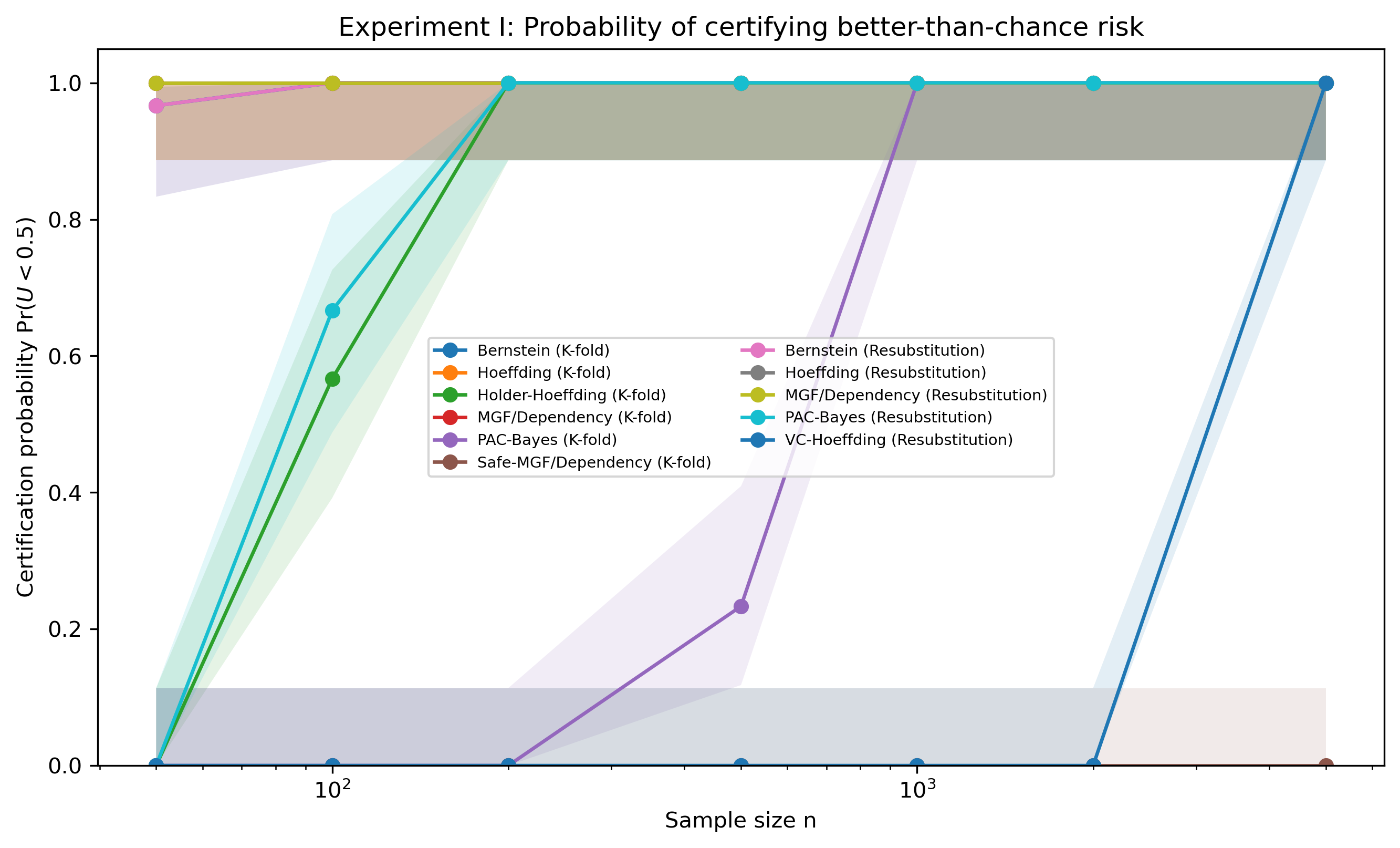}
    \caption{Probability of certifying better-than-chance future risk,
    defined by $U<0.5$, as a function of sample size. Shaded regions
    denote 95\% Wilson confidence intervals for the Monte Carlo
    certification probability.}
    \label{fig:exp1_certification}
\end{figure}

Bounded-discriminability certification was achieved rapidly by
Hoeffding, empirical-Bernstein, and calibrated MGF. Under
resubstitution, PAC-Bayes reached complete certification by $n=200$,
whereas under $K$-fold it required larger sample sizes. The uniform
VC-Hoeffding construction certified better-than-chance performance only
at the largest sample size, $n=5000$. The certification analysis thus
reveals a distinction that is not visible from empirical accuracy
alone: two procedures may achieve similar predictive risk while
providing substantially different levels of statistical evidence that
their future risk lies below chance.

Additional comparisons between resubstitution and $K$-fold validation are reported in Appendix~\ref{app:kfold-comparisons}. In particular, the appendix
shows the evolution of bound tightness for the two validation paradigms and
the empirical coverage of the dependence-aware $K$-fold constructions.

\subsection{Experiment II: Increasing structural complexity}

We next study how the proposed bounds behave as the underlying
classification problem becomes progressively more difficult. Rather
than modifying sample size alone, we construct an explicit five-level
complexity ladder, denoted by $L1$--$L5$, while keeping the feature
dimension fixed at $d=2$. The levels are designed to progressively
depart from an easily separable linear problem. $L1$ corresponds to a
simple linear configuration, $L2$ introduces multimodal linear
structure, and $L3$--$L5$ progressively increase non-linearity and
overlap. The highest level, $L5$, produces a problem for which the fixed
linear classifier approaches chance performance.

For each complexity level, we evaluate the same sample-size grid as in
Experiment I,
$n\in\{50,100,200,500,1000,2000,5000\}$,
using 60 Monte Carlo repetitions per configuration. All evaluation
criteria and Monte Carlo confidence intervals are computed as described
in the common simulation framework. To isolate the effect of structural
complexity without overloading the comparison, the main results in this
experiment are reported for the resubstitution bounds.

\subsubsection{Results}

Figure~\ref{fig:exp2_true_risk} first verifies that the predefined
complexity ladder translates into a progressively more difficult
classification problem for the fixed linear learner. Population risk
increases monotonically from $L1$ to $L5$ across all sample sizes.

\begin{figure}[t]
    \centering
    \includegraphics[width=\columnwidth]
    {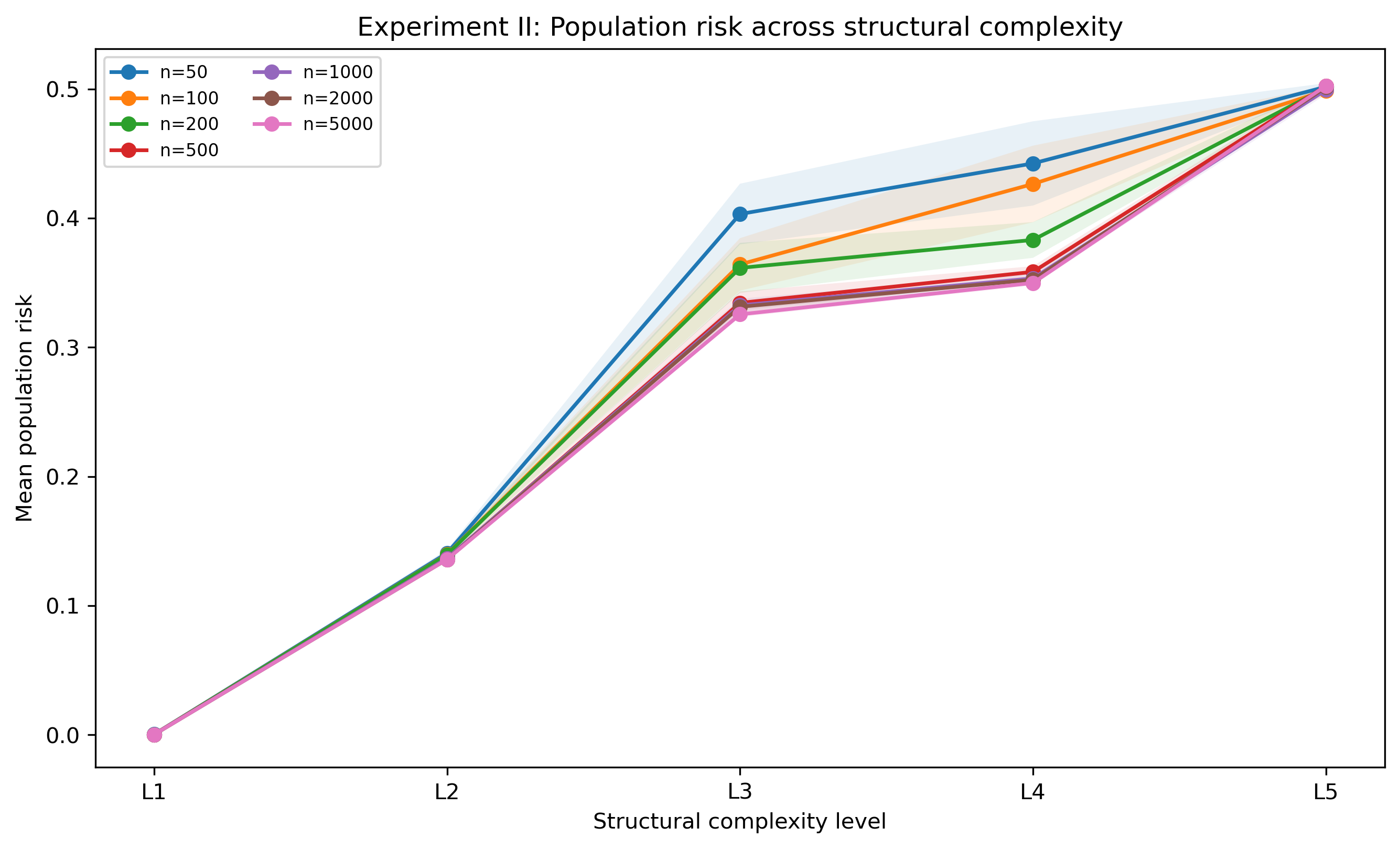}
    \caption{Mean population risk across the structural complexity
    ladder $L1$--$L5$ for different sample sizes. Shaded regions denote
    95\% Monte Carlo confidence intervals. The progression from $L1$ to
    $L5$ produces a monotonic increase in population risk for the fixed
    linear classifier.}
    \label{fig:exp2_true_risk}
\end{figure}

At $n=200$, the mean population risk increases from approximately
$3\times10^{-4}$ at $L1$ to $0.140$ at $L2$, $0.362$ at $L3$,
$0.383$ at $L4$, and $0.501$ at $L5$. This progression separates two
different sources of difficulty. For the intermediate levels, increasing
sample size reduces population risk, indicating that part of the
difficulty is attributable to finite-sample estimation. In contrast,
at $L5$ the population risk remains close to $0.5$ across the complete
sample-size range, indicating a structural mismatch between the linear
classifier and the underlying data geometry.

We next examine the resubstitution risk bounds at the representative
sample size $n=200$. As shown in
Fig.~\ref{fig:exp2_upper_bounds}, Hoeffding-CUBV,
empirical-Bernstein CUBV, and the calibrated MGF/Dependency
construction track the increasing problem difficulty while remaining
comparatively close to the population risk. At $L1$, their mean upper
risks are $0.087$, $0.010$, and $0.002$, respectively. At $L3$, they
increase to approximately $0.435$, $0.440$, and $0.434$, and at $L5$
to $0.555$, $0.565$, and $0.576$.

\begin{figure}[t]
    \centering
    \includegraphics[width=\columnwidth]
    {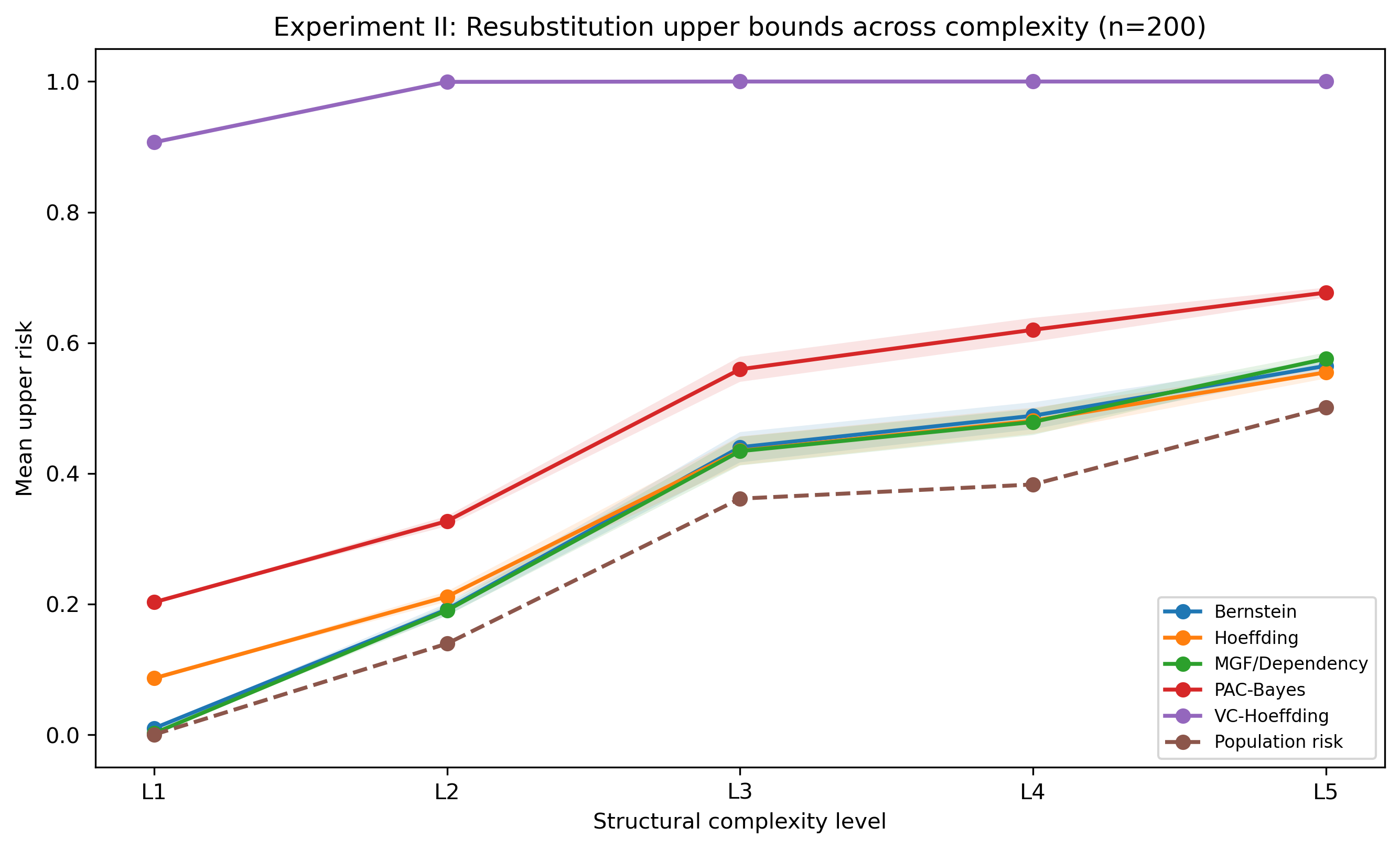}
    \caption{Mean resubstitution upper risk across structural complexity
    levels at $n=200$. Shaded regions denote 95\% Monte Carlo confidence
    intervals. The dashed curve denotes the corresponding population
    risk.}
    \label{fig:exp2_upper_bounds}
\end{figure}

PAC-Bayes remains more conservative throughout the complexity ladder.
At $n=200$, its mean resubstitution upper risk increases from $0.203$
at $L1$ to $0.327$ at $L2$, $0.560$ at $L3$, $0.620$ at $L4$,
and $0.677$ at $L5$. The VC-Hoeffding construction is substantially
more conservative, reaching approximately one from $L2$ onward and
therefore becoming largely uninformative in the more difficult regimes.

Table~\ref{tab:experiment2_summary} summarises the representative
resubstitution results at $n=200$. The tighter constructions remain
substantially closer to the population risk across $L1$--$L4$, whereas
the more conservative bounds exhibit considerably larger tightness.

\begin{table*}[t]
\centering
\caption{Representative resubstitution results for Experiment II at
$n=200$. Each entry reports mean upper risk / mean tightness.}
\label{tab:experiment2_summary}
\begin{tabular}{lccccc}
\hline
Method & $L1$ & $L2$ & $L3$ & $L4$ & $L5$ \\
\hline
Bernstein
& 0.010 / 0.010
& 0.192 / 0.053
& 0.440 / 0.079
& 0.488 / 0.105
& 0.565 / 0.064 \\

Hoeffding
& 0.087 / 0.086
& 0.212 / 0.072
& 0.435 / 0.073
& 0.481 / 0.098
& 0.555 / 0.054 \\

MGF/Dependency
& 0.002 / 0.002
& 0.190 / 0.051
& 0.434 / 0.073
& 0.479 / 0.096
& 0.576 / 0.075 \\

PAC-Bayes
& 0.203 / 0.203
& 0.327 / 0.187
& 0.560 / 0.188
& 0.620 / 0.201
& 0.677 / 0.177 \\

VC-Hoeffding
& 0.907 / 0.907
& 0.999 / 0.860
& 1.000 / 0.638
& 1.000 / 0.617
& 1.000 / 0.499 \\
\hline
\end{tabular}
\end{table*}

The effect of structural complexity is even more apparent when the
resubstitution bounds are interpreted through bounded-discriminability
certification. Figure~\ref{fig:exp2_certification} reports the
probability of certifying
\[
U<0.5
\]
at $n=200$.

\begin{figure}[t]
    \centering
    \includegraphics[width=\columnwidth]
    {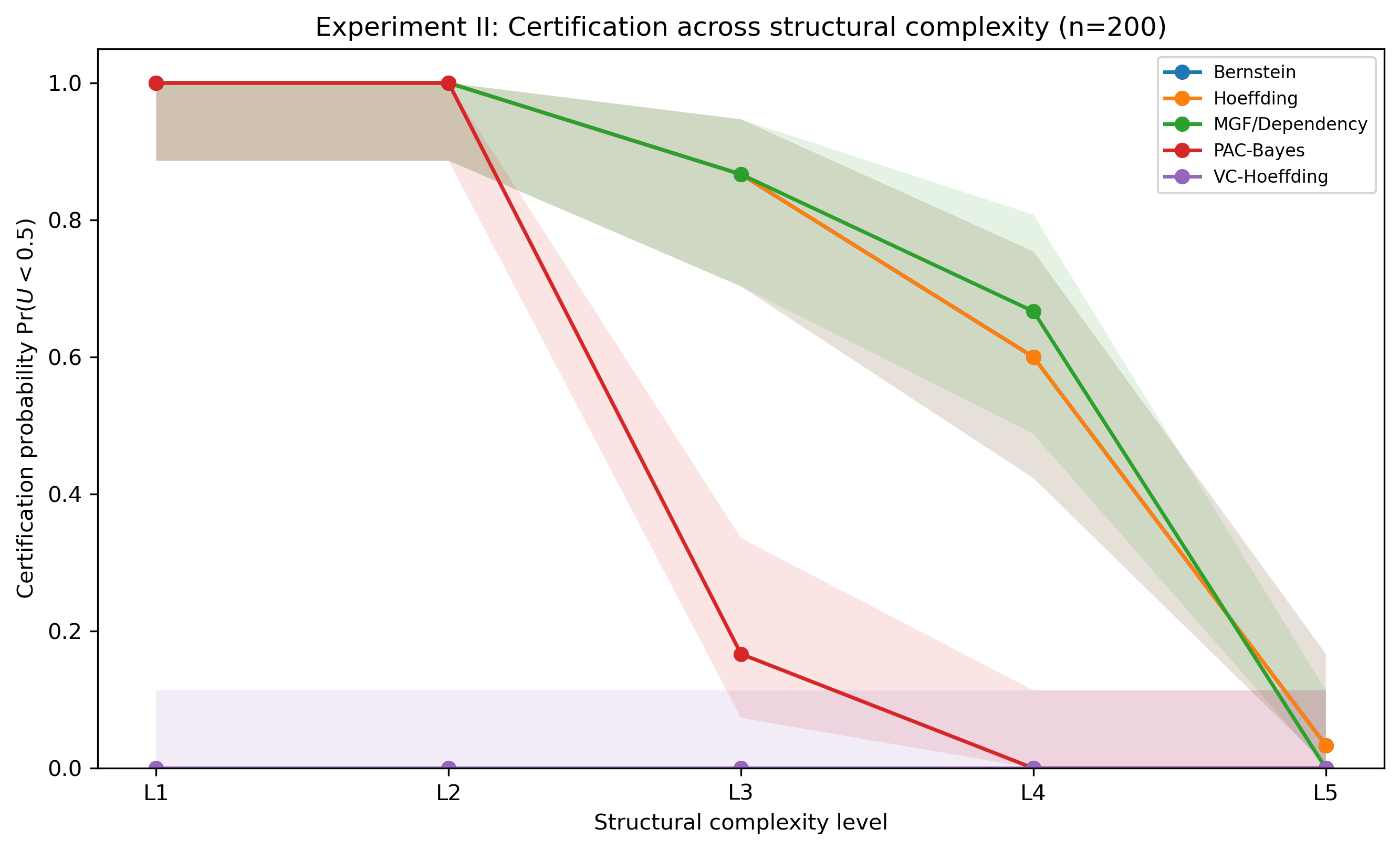}
    \caption{Probability of certifying better-than-chance future risk
    from the resubstitution bounds across structural complexity levels
    at $n=200$. Shaded regions denote 95\% Wilson confidence intervals
    for the Monte Carlo certification probability.}
    \label{fig:exp2_certification}
\end{figure}

Hoeffding-CUBV, empirical-Bernstein CUBV, and calibrated
MGF/Dependency certify better-than-chance future performance with
probability one at $L1$ and $L2$. At $L3$, their certification
probability decreases to approximately $0.867$. At $L4$, certification
falls further, to $0.600$ for Hoeffding and empirical-Bernstein and
$0.667$ for MGF/Dependency. At $L5$, where the population risk is
itself approximately $0.5$, certification essentially disappears.

PAC-Bayes exhibits the same qualitative transition but loses
certification earlier: its certification probability falls from one at
$L1$ and $L2$ to $0.167$ at $L3$ and zero at $L4$ and $L5$.
VC-Hoeffding does not certify $U<0.5$ at any complexity level for
$n=200$.

These results illustrate an important distinction between predictive
performance and statistical evidence. As the problem becomes more
complex, empirical classification may still remain better than chance
for several regimes, while the available data may no longer support a
high-confidence claim that future risk is below the chance threshold.
The bounded-discriminability criterion therefore exposes the point at
which increasing structural complexity exhausts the evidential strength
of the available sample.

\subsection{Experiment III: Multimodality versus non-linearity}

We next disentangle two different notions of structural complexity:
multimodality and non-linearity. The objective is to determine whether
increasing the number of mixture components is itself sufficient to make
the classification problem harder, or whether the geometry induced by
those components is the dominant factor.

We compare two families of synthetic problems. In the
\emph{linear-multimodal} family, the number of Gaussian components is
increased while preserving a globally exploitable linear structure. In
the \emph{nonlinear-alternating} family, the components are arranged so
that the class labels increasingly alternate across the feature space,
progressively violating the inductive bias of the linear logistic
classifier.

For both geometries, we consider
\[
N_c\in\{2,4,6,8,10\},
\]
and the same sample-size grid used in the previous experiments,
\[
n\in\{50,100,200,500,1000,2000,5000\}.
\]
Each configuration is evaluated over 60 Monte Carlo repetitions. The
main comparison below focuses on the resubstitution bounds at the
representative sample size $n=200$.

\subsubsection{Results}

Figure~\ref{fig:exp3_true_risk} reveals a clear distinction between
multimodality and non-linearity. In the linear-multimodal family,
population risk remains well below chance despite increasing the number
of mixture components. At $n=200$, mean population risk is
approximately $0.108$ for $N_c=2$, increases to $0.221$ for $N_c=4$,
and then remains close to $0.20$ for $N_c=6$, $8$, and $10$.

By contrast, in the nonlinear-alternating family, the problem undergoes
a sharp transition. The population risk is only $0.034$ for $N_c=2$,
but increases to $0.492$ at $N_c=4$ and remains essentially at chance
for all larger values of $N_c$. Thus, increasing multimodality alone
does not necessarily destroy linear discriminability; the decisive
factor is whether the induced class geometry remains compatible with
the classifier.

\begin{figure}[t]
    \centering
    \includegraphics[width=\columnwidth]
    {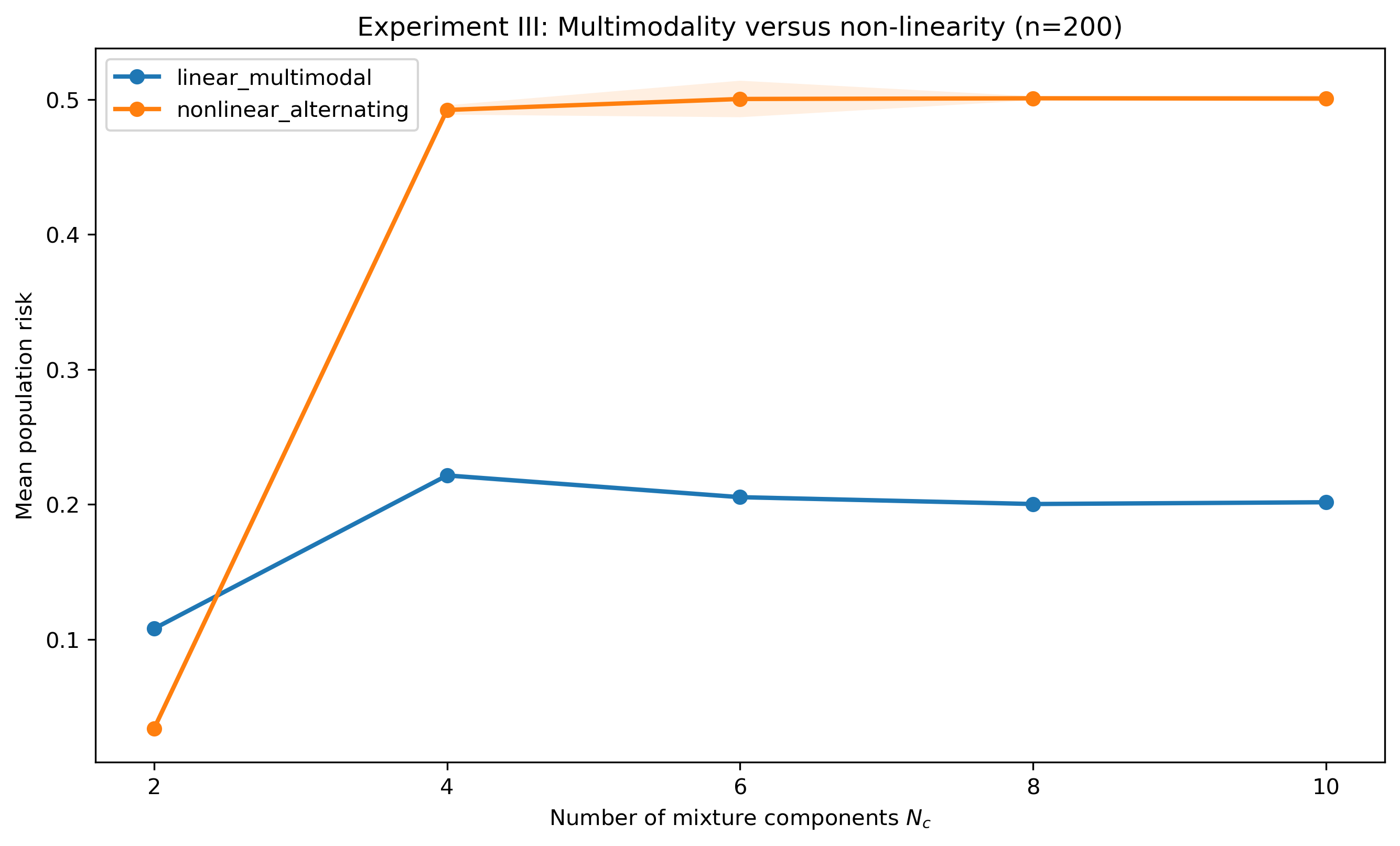}
    \caption{Mean population risk at $n=200$ as the number of mixture
    components increases under linear-multimodal and nonlinear-alternating
    geometries. Shaded regions denote 95\% Monte Carlo confidence
    intervals.}
    \label{fig:exp3_true_risk}
\end{figure}

The same distinction is reflected in the resubstitution bounds.
Figure~\ref{fig:exp3_hoeffding} shows the Hoeffding-CUBV upper risk.
For the linear-multimodal family, the upper bound remains below $0.31$
throughout the complete range of $N_c$. In the nonlinear-alternating
family, however, the upper risk rises above $0.53$ from $N_c=4$
onward, closely reflecting the transition of the underlying population
risk toward chance.

\begin{figure}[t]
    \centering
    \includegraphics[width=\columnwidth]
    {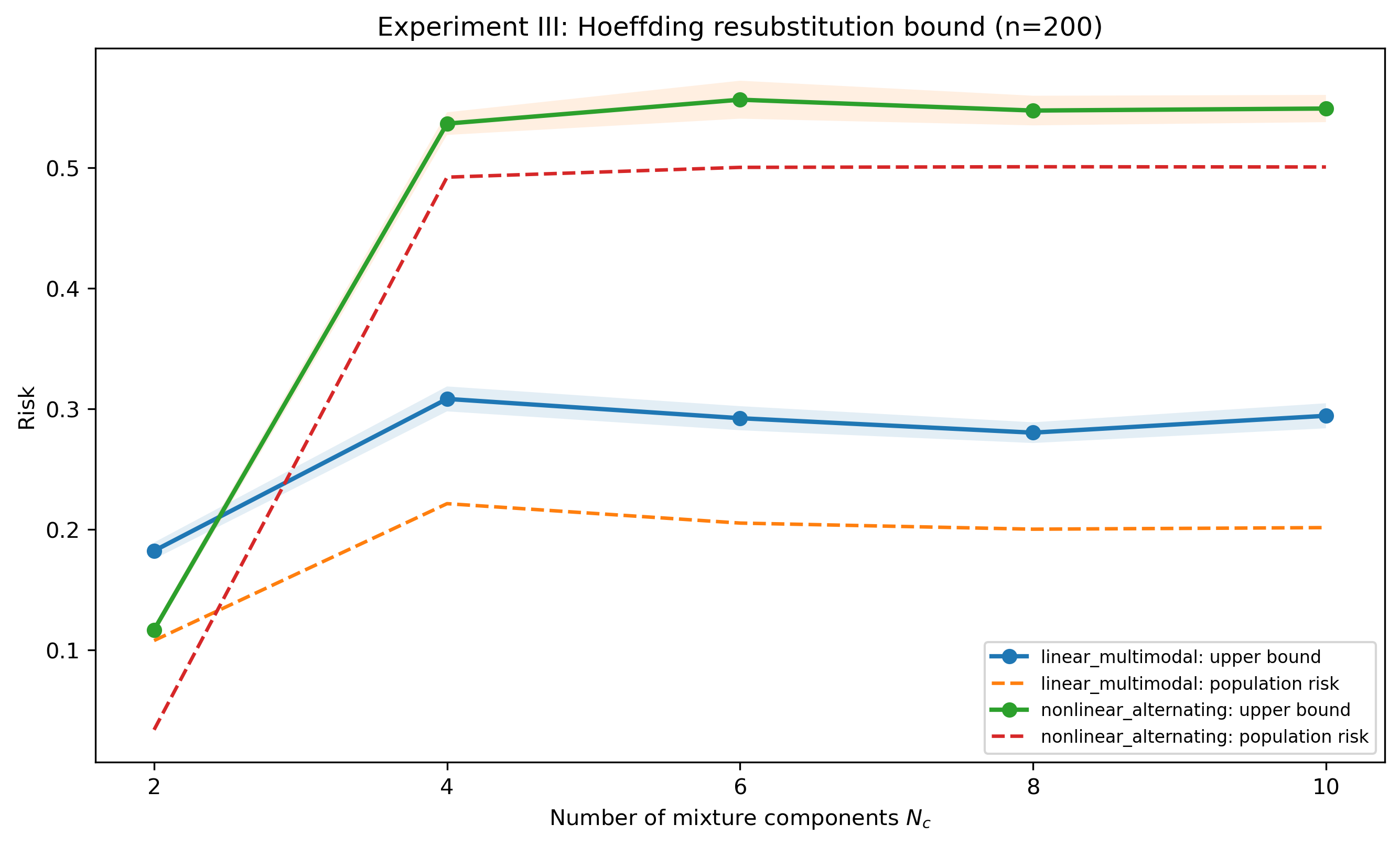}
    \caption{Hoeffding-CUBV resubstitution upper risk at $n=200$ for
    linear-multimodal and nonlinear-alternating geometries. Solid curves
    denote mean upper risks, dashed curves denote the corresponding
    population risks, and shaded regions denote 95\% Monte Carlo
    confidence intervals.}
    \label{fig:exp3_hoeffding}
\end{figure}

At $N_c=4$, for example, the linear-multimodal population risk is
$0.221$ and the corresponding Hoeffding upper risk is $0.308$. Under
the nonlinear-alternating geometry, the population risk increases to
$0.492$ and the Hoeffding upper risk to $0.537$. For $N_c\geq 6$,
the nonlinear-alternating population risk remains approximately $0.5$,
and the corresponding upper bounds remain above the chance threshold.

The distinction becomes particularly clear under
bounded-discriminability certification. Figure~\ref{fig:exp3_certification}
shows the probability of certifying
\[
U<0.5.
\]

\begin{figure}[t]
    \centering
    \includegraphics[width=\columnwidth]
    {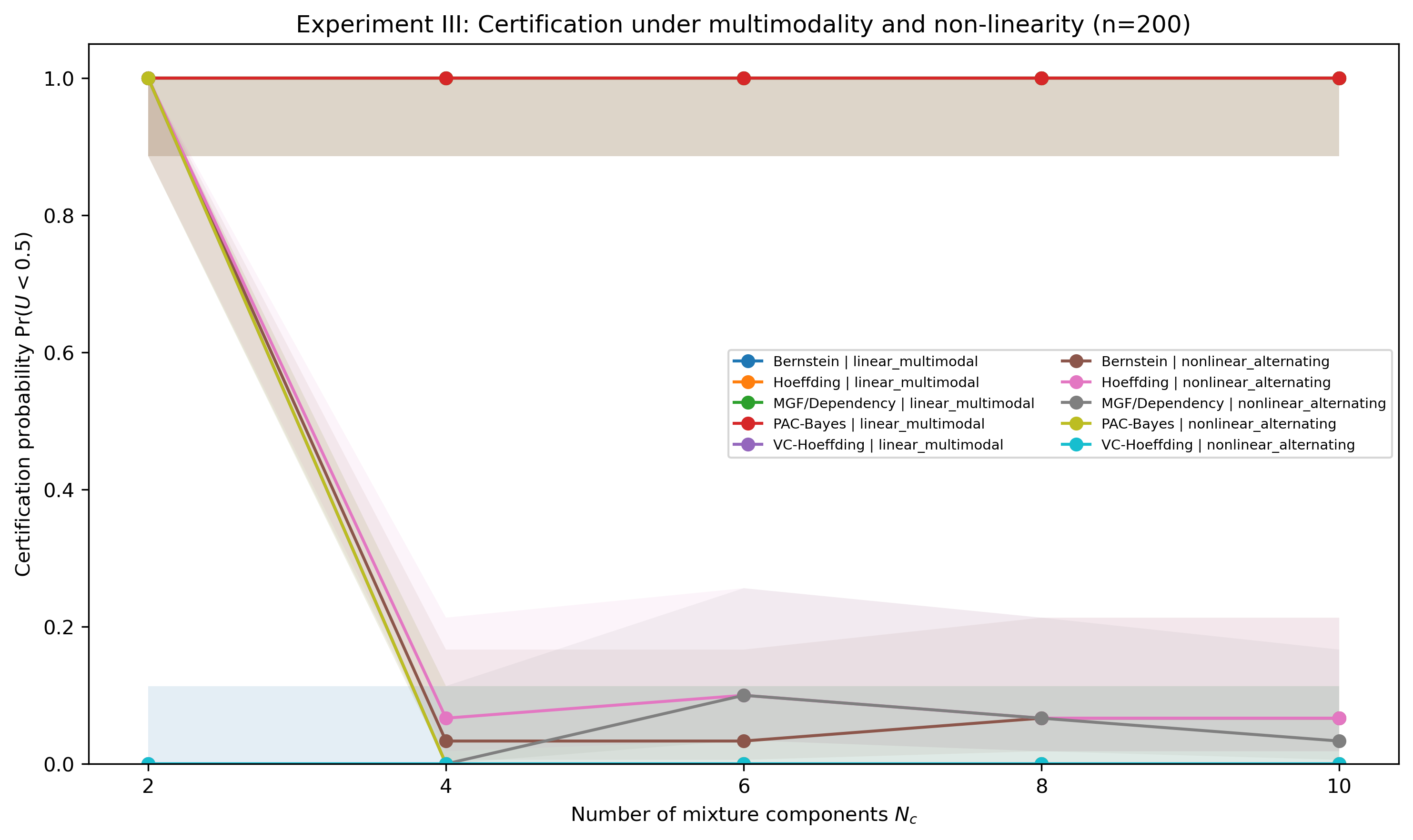}
    \caption{Probability of certifying better-than-chance future risk
    at $n=200$ under linear-multimodal and nonlinear-alternating
    geometries. Shaded regions denote 95\% Wilson confidence intervals
    for the Monte Carlo certification probability.}
    \label{fig:exp3_certification}
\end{figure}

For the linear-multimodal family, Hoeffding-CUBV,
empirical-Bernstein CUBV, calibrated MGF/Dependency, and PAC-Bayes
achieve certification probability one across the complete range
$N_c=2,\ldots,10$. VC-Hoeffding remains too conservative to certify
$U<0.5$.

For the nonlinear-alternating family, certification collapses once the
geometry becomes incompatible with the linear learner. Although all
non-VC constructions certify the easy $N_c=2$ problem with probability
one, certification is close to zero from $N_c=4$ onward. At $N_c=4$,
for example, the certification probability is $0.067$ for
Hoeffding-CUBV, $0.033$ for empirical-Bernstein CUBV, and zero for
MGF/Dependency and PAC-Bayes.

These results show that multimodality and non-linearity should not be
treated as equivalent sources of difficulty. A representation may
contain many modes while still preserving strong global
discriminability for a linear classifier. Conversely, a comparatively
small number of components can destroy linear discriminability when
their arrangement induces an incompatible decision geometry.
Bounded-discriminability certification captures this distinction
directly: certification is retained under increasing multimodality but
vanishes when the representation ceases to support reliable
better-than-chance prediction for the chosen learner.

\subsection{Experiment IV: Bound-aware fusion of heterogeneous multimodal sources}
\label{sec:expIV}

We next evaluate the information-fusion component of the proposed
framework under controlled source heterogeneity. The objective is to
determine whether a fusion rule that optimises the resulting
generalisation bound can provide more informative risk guarantees than
simpler weighting strategies when the available sources represent
different heterogeneous observations of the same classification task.

We consider $M=4$ information sources defined over the same subjects
and the same balanced binary labels. Each source is itself multimodal:
both classes are represented by two Gaussian components. Before
source-specific rotation, the four component centres follow the
interlaced configuration
\begin{equation}
\begin{aligned}
\mathcal C_0:\quad
&\bm{\mu}_{01}=(-a,+b),
&
\bm{\mu}_{02}=(+a,-b),
\\
\mathcal C_1:\quad
&\bm{\mu}_{11}=(+a,+b),
&
\bm{\mu}_{12}=(-a,-b).
\end{aligned}
\label{eq:expIV-interlaced}
\end{equation}
Hence, the two modes belonging to each class are spatially interlaced
with those of the opposite class.

The binary label of each subject is shared across sources, whereas the
within-class Gaussian component is sampled independently in each
source. The source distributions are therefore non-homogeneous even
though the underlying binary task and subject labels are common. In
addition to the source-specific mixture proportions, the separation,
dispersion, and orientation vary across sources. The configurations
used in the simulation are
\begin{center}
\begin{tabular}{lccccc}
\toprule
Source & $a$ & $b$ & $\sigma$ & Rotation & Mode-1 probability\\
\midrule
1 & 1.50 & 1.30 & 1.05 & $0^\circ$   & 0.65\\
2 & 1.45 & 1.35 & 1.08 & $15^\circ$  & 0.60\\
3 & 1.55 & 1.25 & 1.10 & $-15^\circ$ & 0.68\\
4 & 1.40 & 1.40 & 1.08 & $30^\circ$  & 0.58\\
\bottomrule
\end{tabular}
\end{center}

For every sample size,
\[
n\in\{50,100,200,500,1000,2000,5000\},
\]
we compare five fusion rules: uniform weighting, empirical-risk
weighting, variance-only weighting, Fusion-CUBV, and
Safe-Fusion-CUBV. Each source is classified independently using the
same L2-regularised logistic-regression learner. Fusion therefore takes
place at the risk/evidence level rather than by concatenating the source
features.

For each $n$, 60 Monte Carlo repetitions are generated. The first 30
repetitions are used exclusively to estimate the calibration risk, mean
generalisation-gap vector, covariance matrix, and fusion weights. These
weights are then frozen before evaluation on the remaining 30
repetitions. Population risk is approximated using an independent test
sample of size $10\,000$. Continuous Monte Carlo summaries are reported
with 95\% confidence intervals for the mean, while empirical coverage
uses Wilson 95\% binomial intervals.

\subsubsection{Results}

Figure~\ref{fig:exp4_upper_bound} shows the fused upper risks as a
function of sample size. Excluding the deliberately conservative
Safe-Fusion-CUBV construction, the bounds become progressively tighter
as the available sample size increases.

\begin{figure}[t]
    \centering
    \includegraphics[width=\columnwidth]
    {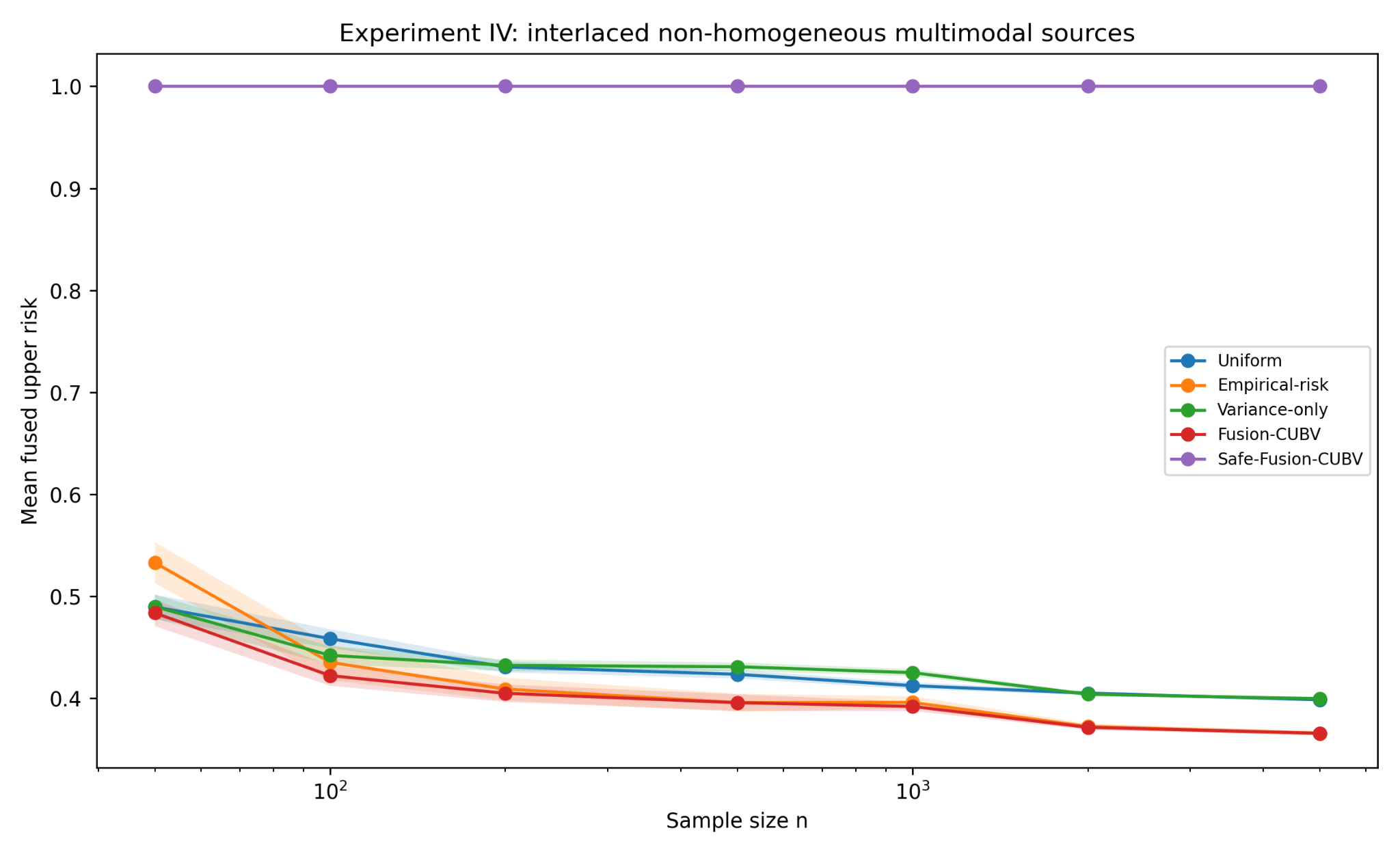}
    \caption{Mean fused upper risk as a function of sample size for the
    five information-fusion strategies. Shaded regions denote 95\%
    Monte Carlo confidence intervals.}
    \label{fig:exp4_upper_bound}
\end{figure}

Fusion-CUBV consistently produces a smaller mean upper risk than
uniform and variance-only fusion throughout the sample-size sweep. At
$n=50$, the mean upper risk is $0.484$ for Fusion-CUBV, compared with
$0.490$ for both uniform and variance-only weighting. At $n=200$, the
corresponding values are $0.405$, $0.431$, and $0.432$, respectively.
At $n=2000$, Fusion-CUBV attains $0.372$, compared with $0.405$ for
uniform weighting and $0.404$ for variance-only weighting.

The comparison with empirical-risk weighting exhibits a different
pattern. At $n=50$, empirical-risk weighting attains a mean upper risk
of $0.533$, substantially above the $0.484$ obtained by Fusion-CUBV.
At $n=100$, the corresponding values are $0.435$ and $0.422$. From
$n=200$ onward, however, the difference becomes small, and the two
procedures coincide at $n=5000$, where both attain a mean upper risk of
approximately $0.366$.

This result should not be interpreted as Fusion-CUBV uniformly
minimising population risk. Empirical-risk weighting generally attains
the smaller true fused risk: for example, at $n=50$ the mean true risk
is $0.357$ for empirical-risk weighting and $0.389$ for Fusion-CUBV.
The smaller Fusion-CUBV upper risk therefore reflects a different
objective: empirical performance is balanced against the estimated mean
generalisation gap and the covariance structure of the source-specific
gaps.

Because all strategies are evaluated on the same Monte Carlo
repetitions, paired differences provide a more sensitive comparison.
Figure~\ref{fig:exp4_paired} reports
$U_{\mathrm{other}}-U_{\mathrm{Fusion}}$, so positive values indicate a
smaller upper risk for Fusion-CUBV.

\begin{figure}[t]
    \centering
    \includegraphics[width=\columnwidth]
    {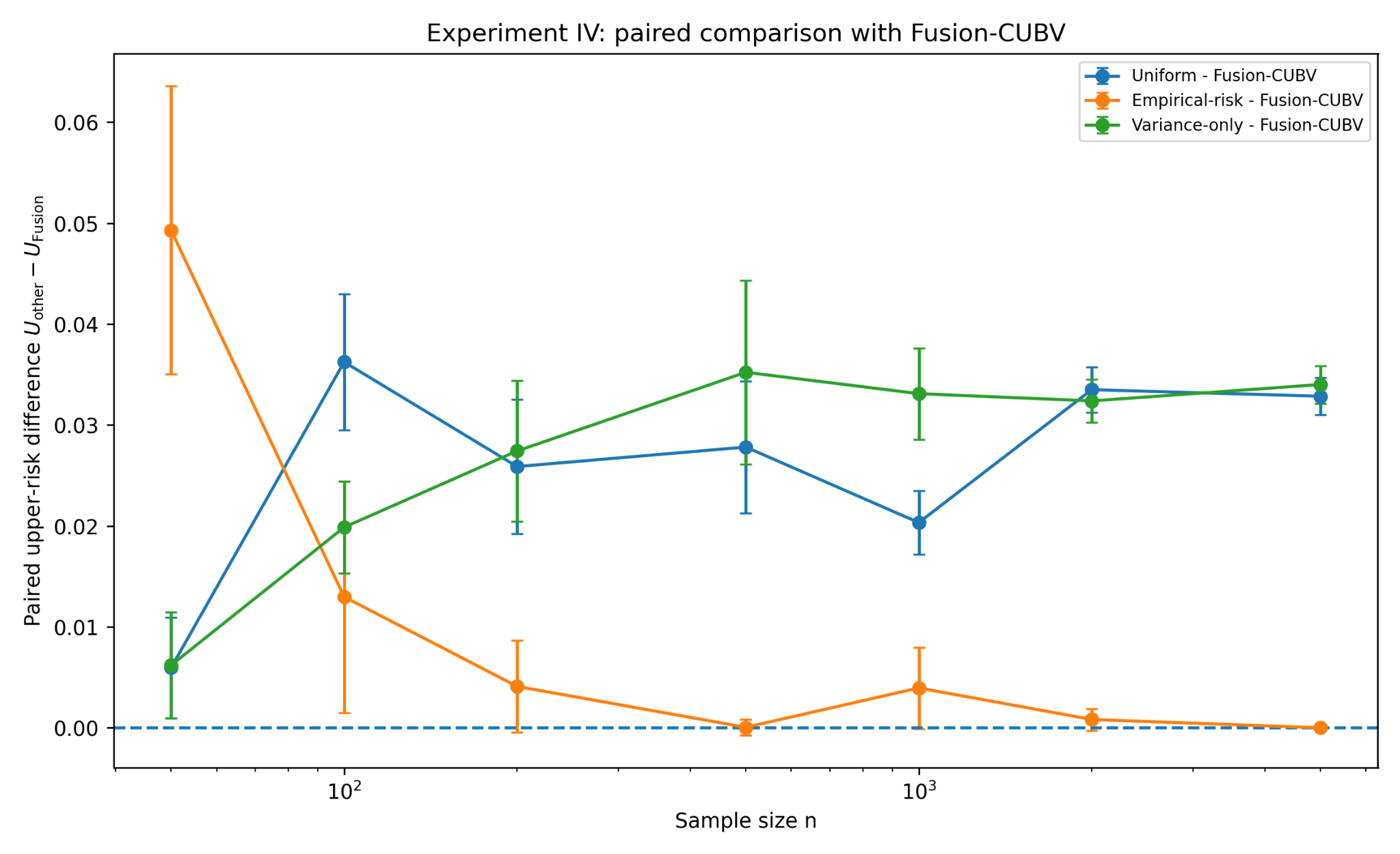}
    \caption{Paired difference in fused upper risk relative to
    Fusion-CUBV. Positive values indicate a larger upper risk for the
    comparison method. Error bars denote 95\% Monte Carlo confidence
    intervals for the paired differences.}
    \label{fig:exp4_paired}
\end{figure}

Relative to empirical-risk weighting, the paired mean difference is
$0.0493$ at $n=50$ (95\% CI $[0.0350,0.0636]$) and $0.0130$ at
$n=100$ (95\% CI $[0.0015,0.0244]$). From $n=200$ onward, the
empirical-risk--Fusion-CUBV interval includes zero in several
configurations, and the two methods are numerically identical at
$n=5000$. In contrast, both uniform and variance-only fusion remain
above Fusion-CUBV throughout the complete sample-size sweep, with
paired 95\% intervals that remain on the positive side of zero.

The learned source weights further clarify the mechanism.
Figure~\ref{fig:exp4_weights} shows the Fusion-CUBV weights at each
sample size.

\begin{figure}[t]
    \centering
    \includegraphics[width=\columnwidth]
    {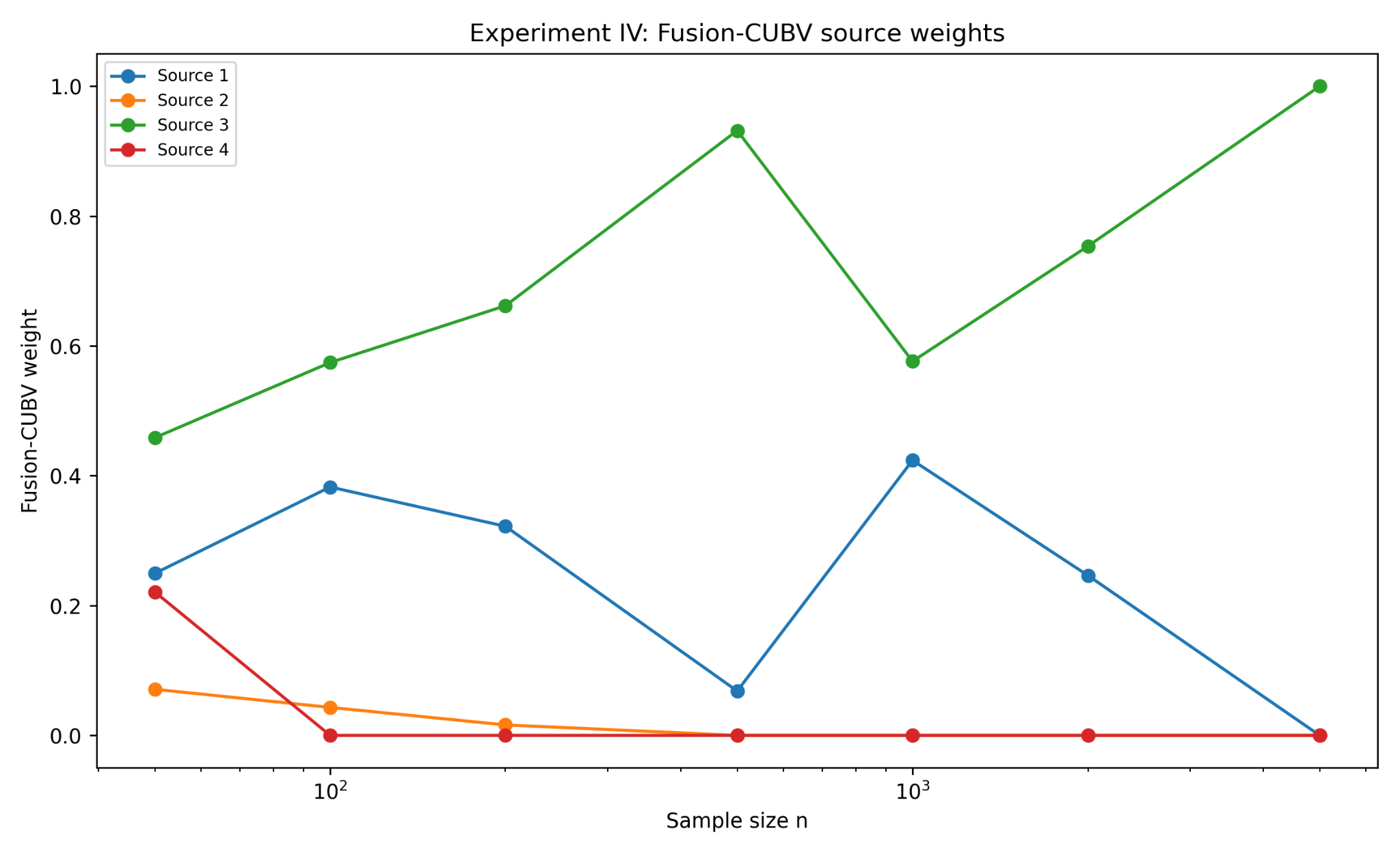}
    \caption{Fusion-CUBV source weights at each sample size. Weights are
    estimated using only the calibration repetitions and frozen before
    evaluation. Because calibration is performed independently for each
    $n$, the curves should not be interpreted as a deterministic
    trajectory of source relevance with sample size.}
    \label{fig:exp4_weights}
\end{figure}

At $n=50$, Fusion-CUBV assigns approximately
\[
(0.250,\;0.071,\;0.459,\;0.221)
\]
to Sources 1--4. At $n=200$, the corresponding weights are
approximately
\[
(0.322,\;0.016,\;0.662,\;0),
\]
whereas at $n=1000$ Sources~1 and~3 receive approximately $0.424$ and
$0.576$. At $n=5000$, the solution concentrates on Source~3. Thus, the
small- and moderate-sample solutions combine sources when this improves
the calibrated certificate, while the large-sample solution approaches
the empirical-risk rule, which selects Source~3 throughout this
experiment.

Table~\ref{tab:experiment4_summary} reports representative upper-risk
and coverage results at three sample sizes.

\begin{table*}[t]
\centering
\caption{Representative results for Experiment IV. Each entry reports
mean fused upper risk / empirical coverage.}
\label{tab:experiment4_summary}
\begin{tabular}{lccc}
\toprule
Method & $n=50$ & $n=500$ & $n=5000$ \\
\midrule
Uniform
& 0.490 / 1.000
& 0.424 / 0.967
& 0.399 / 0.900 \\
Empirical-risk
& 0.533 / 1.000
& 0.396 / 0.967
& 0.366 / 1.000 \\
Variance-only
& 0.490 / 1.000
& 0.431 / 0.933
& 0.400 / 0.800 \\
Fusion-CUBV
& 0.484 / 1.000
& 0.396 / 0.967
& 0.366 / 1.000 \\
Safe-Fusion-CUBV
& 1.000 / 1.000
& 1.000 / 1.000
& 1.000 / 1.000 \\
\bottomrule
\end{tabular}
\end{table*}

Empirical coverage remains high for Fusion-CUBV and empirical-risk
weighting over the sample-size sweep, with one isolated value of
$0.967$ at $n=500$ and coverage equal to one at the remaining tested
sample sizes. Uniform and variance-only weighting show lower coverage
at $n=5000$, reaching $0.900$ and $0.800$, respectively. These values
are based on 30 independent evaluation repetitions and should therefore
be interpreted together with their Wilson intervals; in particular,
they should not be read as precise estimates of asymptotic coverage.

The Safe-Fusion-CUBV construction behaves markedly differently. Its
upper risk remains equal to one across the complete sample-size range,
with mean tightness approximately $0.60$. This reflects the substantial
price paid by the conservative positive-semidefinite covariance
inflation used to protect against covariance-estimation uncertainty.
The construction preserves empirical coverage but is uninformative in
this experimental regime.

The better-than-chance threshold $U<0.5$ also becomes informative in
this heterogeneous setting. Fusion-CUBV has a mean upper risk below
$0.5$ already at $n=50$, whereas empirical-risk weighting remains above
this threshold on average at the smallest sample size. This distinction
concerns the risk certificate rather than the population risk itself.

Overall, Experiment IV shows that bound-aware fusion is not equivalent
to selecting the source with the smallest empirical error. Under
non-homogeneously sampled multimodal sources, Fusion-CUBV combines
sources when uncertainty is substantial and can produce a materially
smaller upper-risk certificate than empirical-risk weighting in the
small-sample regime. As $n$ increases, the difference diminishes and
the solution approaches empirical-risk source selection, consistent
with the decreasing role of sampling uncertainty.

\subsection{Experiment V: Increasing the number of information sources}

We next investigate the effect of increasing the number of available
information sources while keeping the fusion rule fixed to Fusion-CUBV.
The objective is to determine whether the addition of progressively
weaker or noisier sources necessarily degrades the resulting risk
certificate, or whether the bound-aware weighting mechanism can
effectively suppress uninformative sources.

A nested source construction is used. Source 1 corresponds to the
original representation, whereas additional sources are progressively
attenuated and corrupted by increasing noise. The complete source
ladder spans signal strengths from $1.0$ to $0.4$ and noise standard
deviations from $0$ to $1.0$. Importantly, increasing the number of
sources genuinely adds new sources without redefining those already
present. Thus,
\[
M=2
\]
uses the first two sources,
\[
M=4
\]
uses the first four sources, and so forth up to
\[
M=10.
\]

We consider
\[
M\in\{2,4,6,8,10\}
\]
and the same sample-size grid used in the previous experiments,
\[
n\in\{50,100,200,500,1000,2000,5000\}.
\]
Each configuration is evaluated over 60 Monte Carlo repetitions using
the same independent calibration/evaluation protocol.

\subsubsection{Results}

Figure~\ref{fig:exp5_bound_vs_n} shows the Fusion-CUBV upper risk as
a function of sample size for different numbers of information sources.
The curves are almost indistinguishable across $M$, indicating that
adding progressively weaker sources does not materially degrade the
resulting risk certificate.

\begin{figure}[t]
    \centering
    \includegraphics[width=\columnwidth]
    {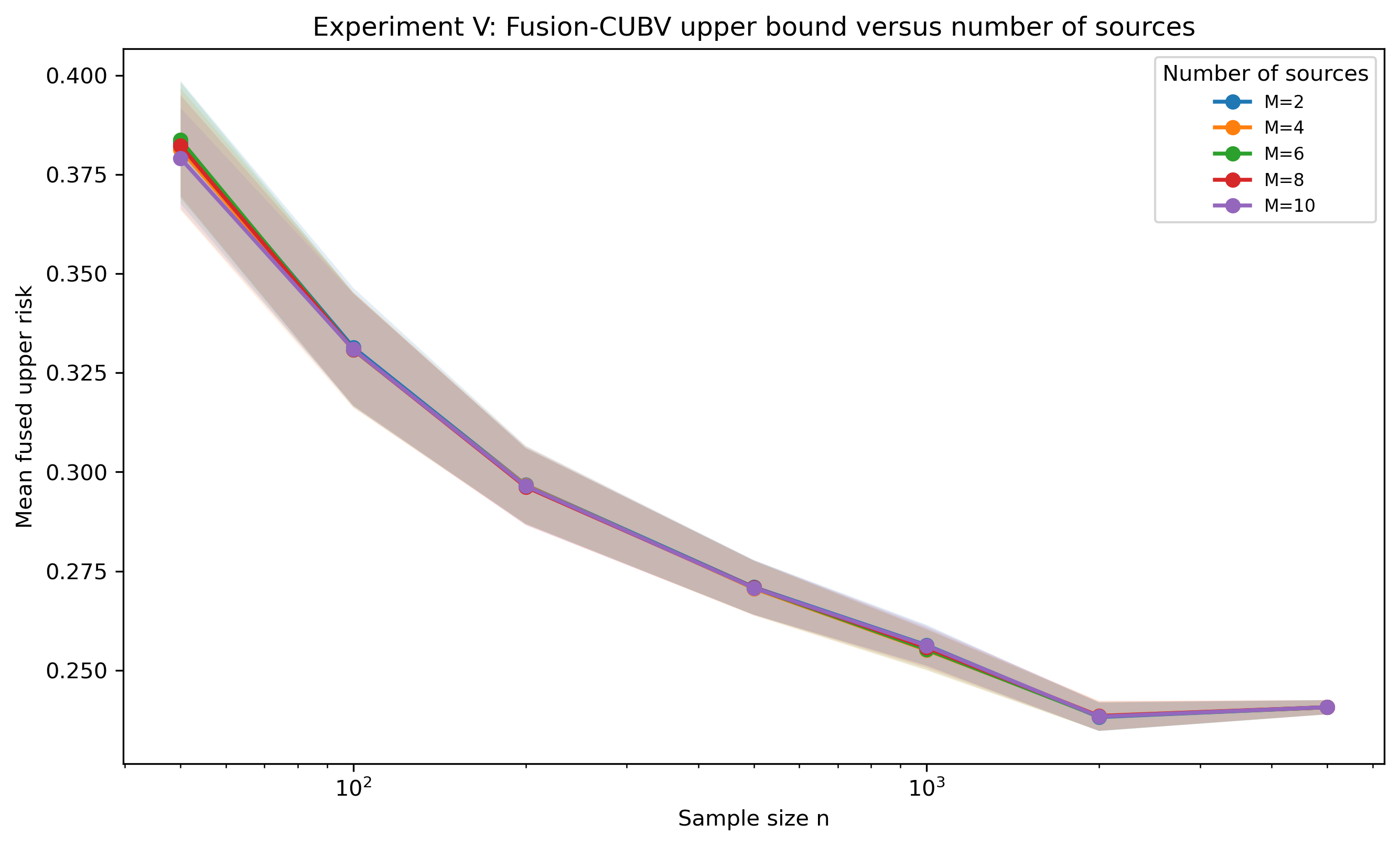}
    \caption{Fusion-CUBV mean upper risk as a function of sample size
    for different numbers of information sources. Shaded regions denote
    95\% Monte Carlo confidence intervals. The near-overlap of the
    curves shows that the bound remains stable as progressively weaker
    sources are added.}
    \label{fig:exp5_bound_vs_n}
\end{figure}

This behaviour is particularly clear at $n=200$. As shown in
Fig.~\ref{fig:exp5_vs_M}, the mean upper risk remains essentially
constant as the number of sources increases:
\[
0.2967,\;0.2967,\;0.2965,\;0.2962,\;0.2965
\]
for $M=2,4,6,8,$ and $10$, respectively. The corresponding population
risk also remains stable, ranging approximately from $0.222$ to
$0.224$.

\begin{figure}[t]
    \centering
    \includegraphics[width=\columnwidth]
    {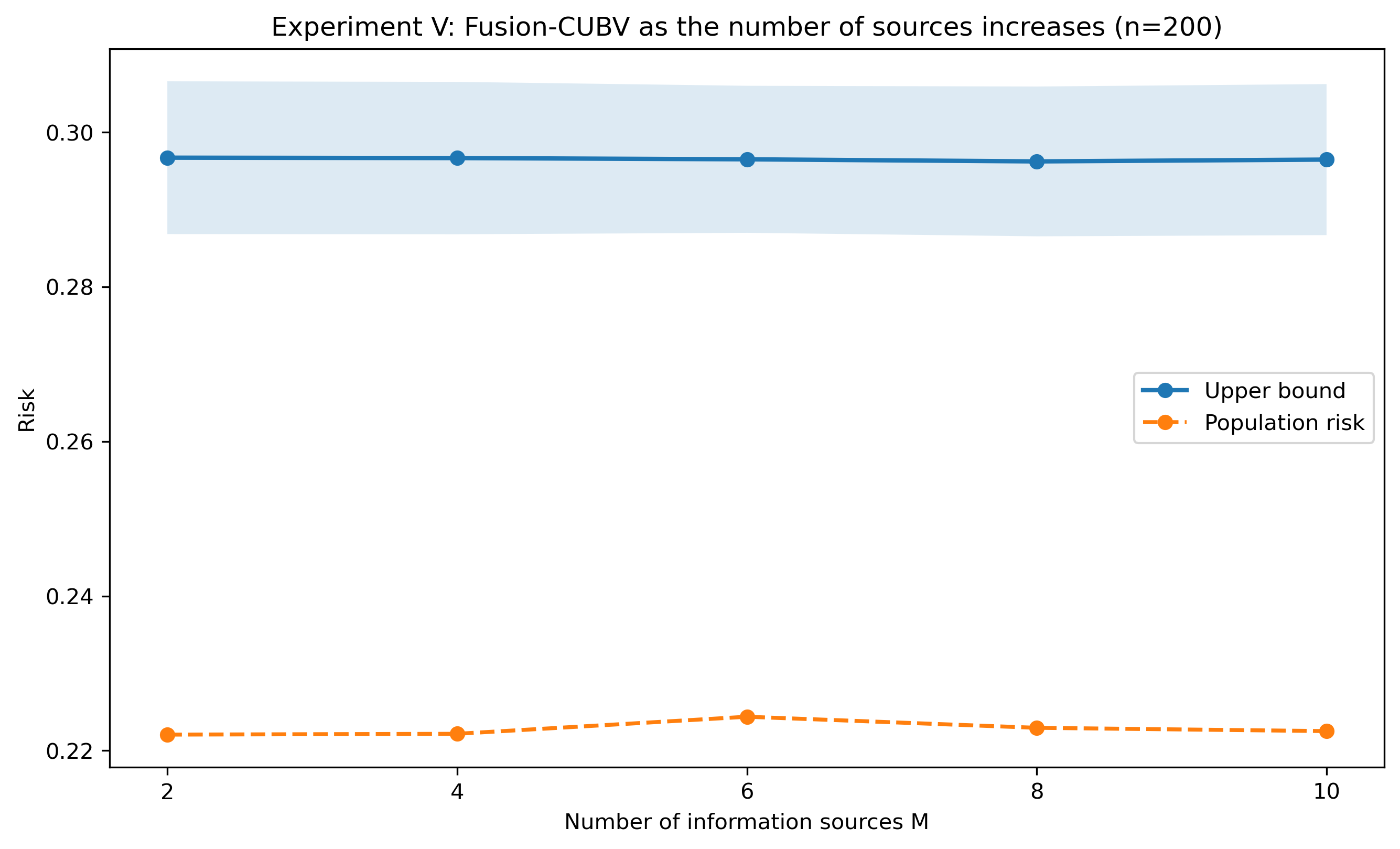}
    \caption{Fusion-CUBV upper risk and corresponding population risk as
    the number of information sources increases at $n=200$. Shaded
    regions denote 95\% Monte Carlo confidence intervals for the mean
    upper risk.}
    \label{fig:exp5_vs_M}
\end{figure}

The bound tightness exhibits the same robustness. At $n=200$, mean
tightness varies only from approximately $0.072$ to $0.075$ across the
complete range of $M$. The effective variance
\[
a^\top\Sigma a
\]
also remains of the same order of magnitude, with only small variations
as additional sources are introduced.

The stability of the bound is explained by the learned Fusion-CUBV
weights. The optimisation does not force every available source to
contribute. Instead, weak or redundant sources can receive zero or
near-zero weight. At $n=200$, for example, the learned weights for
$M=2$ are approximately
\[
(1,0),
\]
whereas for $M=4$ they are
\[
(0.991,0,0,0.009).
\]
For $M=6$, the solution distributes a small fraction of the total weight
over selected additional sources,
\[
(0.906,0,0,0.050,0,0.045),
\]
while for $M=10$ approximately $96\%$ of the weight is again assigned
to the primary source.

Thus, increasing the number of available sources does not imply that
the fused predictor must use all of them. Instead, Fusion-CUBV acts as
a selective weighting mechanism that can effectively suppress
uninformative or excessively noisy sources.

Empirical coverage remains equal to one at $n=200$ for all considered
values of $M$. Similarly, the bounded-discriminability certification
probability remains equal to one across the complete source range.
These quantities therefore provide little additional separation between
the different values of $M$ in this experiment.

Overall, the results demonstrate that the proposed fusion criterion is
robust to the presence of additional weak information sources. Rather
than suffering systematic degradation as $M$ increases, Fusion-CUBV
preserves both the population risk and the resulting upper-risk
certificate by assigning negligible weight to sources that do not
improve the bound.

\subsection{Experiment VI: Structurally heterogeneous information sources}

The final experiment evaluates Fusion-CUBV under a more demanding
multi-source setting in which additional sources are not merely noisier
versions of the same representation, but exhibit genuinely different
class geometries.

Each additional source is generated from a distinct structural profile,
with progressively increasing non-linearity and multimodal complexity.
Thus, increasing the number of sources simultaneously increases the
heterogeneity of the available representations. We consider
\[
M\in\{2,4,6,8,10\},
\]
together with the sample-size grid
\[
n\in\{50,100,200,500,1000,2000,5000\}.
\]

As in the previous fusion experiments, Fusion-CUBV weights are learned
exclusively from the calibration repetitions and subsequently frozen
before evaluation on the independent Monte Carlo repetitions.

\subsubsection{Results}

Figure~\ref{fig:exp6_bound_vs_n} shows the fused upper risk as a
function of sample size for different numbers of structurally
heterogeneous sources. Despite the progressively more non-linear source
profiles, the resulting curves remain remarkably close across all
values of $M$.

\begin{figure}[t]
    \centering
    \includegraphics[width=\columnwidth]
    {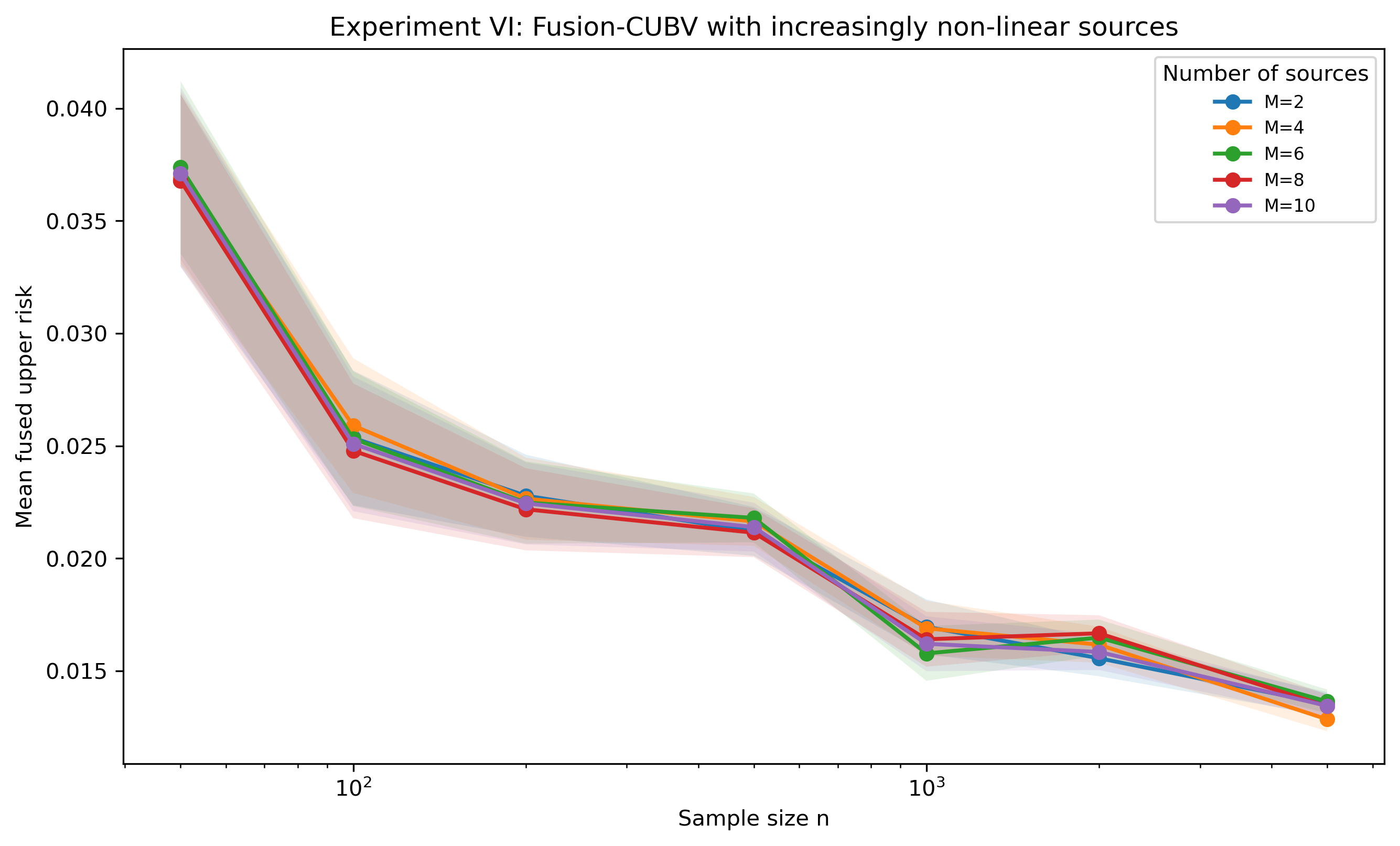}
    \caption{Fusion-CUBV mean upper risk as a function of sample size
    for increasing numbers of structurally heterogeneous information
    sources. Shaded regions denote 95\% Monte Carlo confidence
    intervals.}
    \label{fig:exp6_bound_vs_n}
\end{figure}

The upper risk decreases systematically with sample size, from
approximately $0.037$ at $n=50$ to approximately $0.013$--$0.014$
at $n=5000$, with only minor differences across the number of sources.
Thus, increasing source heterogeneity does not produce a systematic
deterioration of the fused risk certificate.

The corresponding population risks are shown in
Fig.~\ref{fig:exp6_true_risk}. Across the complete sample-size range,
the mean true fused risk remains close to $0.01$ for all values of
$M$, demonstrating that the inclusion of additional non-linear sources
does not force the fused predictor toward a higher-risk regime.

\begin{figure}[t]
    \centering
    \includegraphics[width=\columnwidth]
    {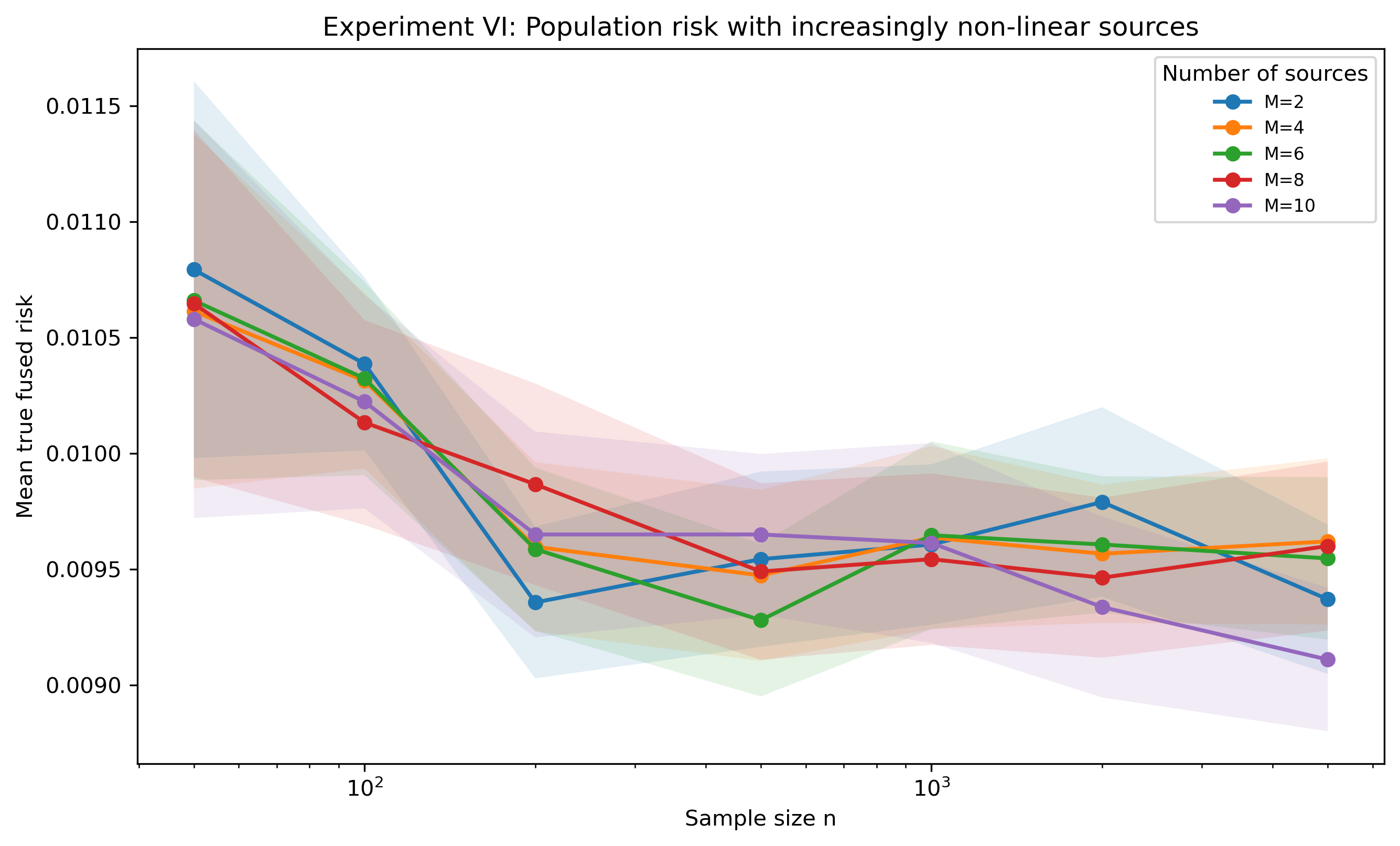}
    \caption{Mean population risk of the Fusion-CUBV predictor as a
    function of sample size for increasing numbers of structurally
    heterogeneous sources. Shaded regions denote 95\% Monte Carlo
    confidence intervals.}
    \label{fig:exp6_true_risk}
\end{figure}

The stability of the result is particularly clear at the representative
sample size $n=200$. As shown in
Fig.~\ref{fig:exp6_vs_M}, the mean fused upper risk changes only from
$0.0228$ at $M=2$ to $0.0224$ at $M=10$. The corresponding true fused
risk remains between approximately $0.0094$ and $0.0099$.

\begin{figure}[t]
    \centering
    \includegraphics[width=\columnwidth]
    {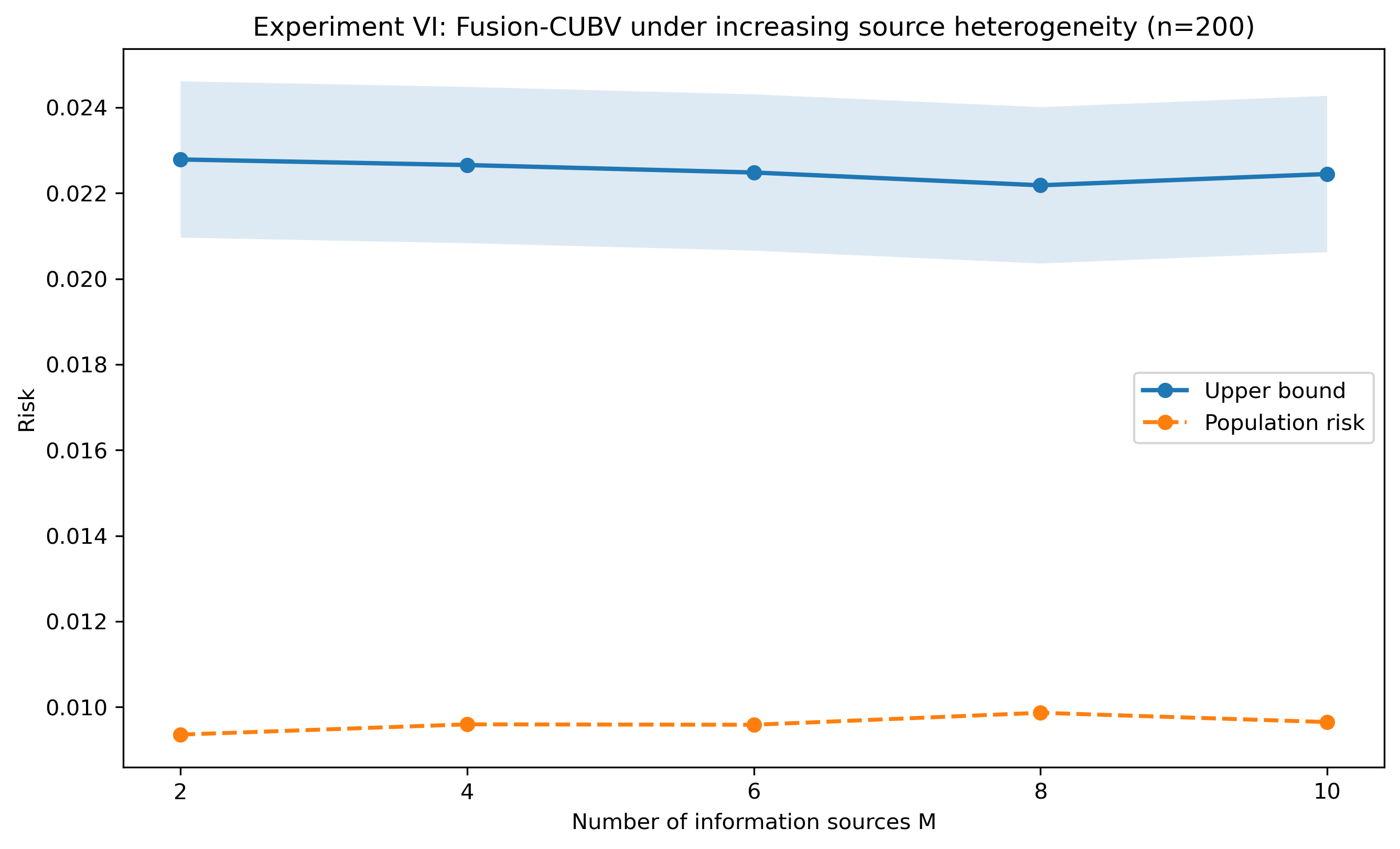}
    \caption{Fusion-CUBV upper risk and corresponding population risk
    as the number of structurally heterogeneous sources increases at
    $n=200$. Shaded regions denote 95\% Monte Carlo confidence
    intervals for the mean upper risk.}
    \label{fig:exp6_vs_M}
\end{figure}

The mean tightness at $n=200$ remains small across the complete source
range, decreasing from approximately $0.0134$ at $M=2$ to $0.0123$
at $M=8$ and remaining below $0.013$ at $M=10$. The effective
variance $a^\top\Sigma a$ also remains stable, indicating that the
fusion mechanism is not accumulating substantial additional uncertainty
as structurally heterogeneous sources are introduced.

Empirical coverage is equal to one for all values of $M$ at $n=200$,
and the bounded-discriminability certification probability
\[
\Pr(U<0.5)
\]
is also equal to one throughout. Consequently, these quantities provide
little separation between source counts in this experiment; the more
informative observation is the stability of both the fused population
risk and its upper certificate.

These results extend the findings of Experiment V. There, robustness
was observed when progressively weaker and noisier versions of the same
underlying representation were added. Here, the same qualitative
behaviour persists when the additional sources differ in their actual
class geometry. Fusion-CUBV therefore appears able to preserve a stable
risk certificate even when the available information sources are
structurally heterogeneous and increasingly non-linear.

\section{Why can resubstitution be tighter?}

The behaviour observed in the preceding experiments can be understood
in terms of two distinct effects: reduced effective training size under
partitioning and instability in the representation of heterogeneous
sub-populations across folds.

Suppose the sample contains $N_c$ latent sub-populations. Under
$K$-fold cross-validation, each trained model sees only
\begin{equation}
n_{\mathrm{train}}
=
\frac{K-1}{K}n
\end{equation}
observations. More importantly, this reduction need not be uniform
across latent modes. When some sub-populations are weakly represented,
individual folds may contain only a small number of observations from a
particular mode, or may omit it almost entirely.

A crude structural occupancy indicator is
\begin{equation}
\eta
=
\frac{n}{dN_c},
\end{equation}
which can be interpreted as the number of observations available per
dimension and mixture component. Small $\eta$ indicates that individual
sub-populations may be poorly supported. However, $\eta$ should not be
interpreted as a complete measure of problem complexity. As shown in
Experiments II and III, increasing the number of mixture components
does not necessarily increase classification difficulty when the global
decision geometry remains compatible with the learner.

The relevant mechanism is therefore better represented as
\begin{equation}
\begin{aligned}
&\text{heterogeneity}
+\text{small }n
+\text{sparse mode occupancy}
\\
&\qquad\Longrightarrow
\text{fold variability}
\Longrightarrow
\text{model variability}
\\
&\qquad\Longrightarrow
\text{wider K-fold uncertainty}.
\end{aligned}
\end{equation}

Resubstitution avoids this partition-induced loss of support by training
the final classifier on the complete sample. Its empirical risk is
optimistically biased, but when this optimism is controlled by a valid
upper correction, the resulting risk certificate can be narrower than
the corresponding cross-validation bound.

The conclusion is therefore not that resubstitution should universally
replace $K$-fold cross-validation. Rather,
\begin{quote}
\emph{partitioning can itself be a source of uncertainty, and in small
heterogeneous samples a properly corrected full-sample risk can be more
informative than a cross-validated risk.}
\end{quote}

Two mechanisms contribute to this effect. First, overlapping
cross-validation folds induce statistical dependence, reducing the
effective number of independent validation estimates through
$K_{\mathrm{eff}}$, as formalised in Section~5. Second, heterogeneous
sub-populations can be represented unevenly across folds, producing
additional variability in the fitted models. These mechanisms are
conceptually distinct: fold dependence reduces the effective amount of
independent validation information, whereas fold-composition
variability changes the training distribution seen by each fitted
classifier. In small heterogeneous samples, both effects can operate
simultaneously.

\section{Implications for neuroimaging}

Neuroimaging provides a natural motivating domain for the proposed framework \cite{gorriz2026latent}. Cross-validation is widely used in brain decoding and biomarker assessment, yet small neuroimaging samples can lead to highly variable performance estimates, and variability across folds may substantially understate the uncertainty associated with model generalisation \cite{varoquaux2017,varoquaux2018}.

Even after dimensionality reduction or deep feature extraction, diagnostic classes are unlikely to form homogeneous populations. Differences in age, sex, acquisition protocol, disease subtype, comorbidity, disease stage, scanner characteristics, and biological variability can induce latent sub-populations and structurally heterogeneous feature distributions.

The experiments presented here suggest that the relevant issue is not multimodality alone. As shown in Experiments II and III, multiple modes can remain compatible with a simple classifier when the global decision geometry is preserved, whereas comparatively modest structural changes can substantially reduce discriminability when they induce model mismatch. A more realistic working hypothesis for neuroimaging is therefore
\begin{equation}
\begin{aligned}
&\text{small }n
+\text{heterogeneous representations}
\\
&\qquad
+\text{possible model mismatch}.
\end{aligned}
\end{equation}

In such settings, conventional $K$-fold validation may be sensitive not only to sample size but also to which latent sub-populations are represented in each training fold. Bounded discriminability poses a
different question: rather than asking only whether a representation achieves high cross-validated accuracy, one asks whether its future risk can be bounded tightly enough to support a discriminability claim.

For a single latent feature $z_j$, one may associate a certified upper risk $U_j(\delta)$, whereas for a feature subset $S$ one may consider $U_S(\delta)$. This leads naturally to the selection criterion
\begin{equation}
S^\star
=
\arg\min_S U_S(\delta),
\end{equation}
instead of selecting features solely according to
\[
S^\star
=
\arg\max_S \mathrm{Accuracy}(S).
\]

This distinction is particularly relevant in neuroimaging, where two representations may attain similar empirical accuracy while supporting
very different levels of statistical evidence about future discriminability. The bounded-discriminability criterion therefore provides a way to distinguish predictive performance from the strength of the corresponding generalisation certificate.

A real-data proof of concept is reported in Appendix~\ref{app:real-neuroimaging}. We considered unsupervised whole-brain latent representations learned from structural MRI in a balanced NOR--AD classification problem. Each latent coordinate was
evaluated independently using deterministic and PAC--Bayesian upper-risk constructions, while a completely held-out subject-level audit cohort was used only as an external empirical reference.

The real-data experiment reproduces several of the behaviours suggested by the synthetic analysis. In particular, individual latent coordinates
with comparable empirical discrimination can require substantially different amounts of data before their upper risks fall below the balanced binary chance-risk level. Moreover, combining all coordinates does not necessarily strengthen the corresponding certificate: although the joint latent representation retains similar empirical predictive performance, its PAC--Bayesian complexity penalty is larger than that of the individual one-dimensional representations.

A label-permutation negative control provides an additional check. With the learned latent representation held fixed, destroying the association
between latent coordinates and diagnosis moves the deterministic upper risks above the chance-risk threshold. The complete experimental protocol, audit analysis, sample-size curves, and permutation results are provided in Appendix~\ref{app:real-neuroimaging}.

Leave-site-out validation remains an important extension for future work, as it would test whether a certificate obtained within one acquisition environment remains informative under scanner or site shift.

\section{Discussion}

The proposed framework reframes validation as an uncertainty problem.
A cross-validated accuracy or mean risk is an observed performance
estimate; by itself, it does not quantify the strength of the evidence
supporting future generalisation. The $\gamma$-CUBV formulation makes
the additional ingredients explicit by separating empirical performance
from the concentration mechanism used to control sampling uncertainty,
dependence, model complexity, and, in the multi-source setting,
information-fusion uncertainty.

The experiments support several complementary conclusions. First,
sample size controls how rapidly useful risk certificates can be
obtained. Second, structural complexity cannot be reduced to
multimodality alone: the geometry of the representation relative to the
learner is crucial. Third, partitioning introduces an additional source
of uncertainty that can make corrected full-sample estimates competitive
with, or tighter than, cross-validation bounds in some regimes.
Finally, bound-aware information fusion can remain stable when weak,
noisy, or structurally heterogeneous sources are added, because
uninformative sources need not receive substantial weight.

\subsection{Why dependence matters}

If $K$ folds represented independent replications, averaging them would
reduce uncertainty at the usual $K^{-1/2}$ rate. Standard $K$-fold
cross-validation does not generally admit this interpretation because
the fitted models are trained on strongly overlapping datasets.
Equation~\eqref{eq:keff} provides an interpretable representation of
the resulting loss of effective independent information under an
equicorrelation model.

This special case should not be interpreted as implying that a single
scalar $\rho$ fully characterises cross-validation dependence. The
more general result is the positive-semidefinite covariance-proxy
formulation, which allows non-uniform dependence across folds.

A second, conceptually distinct mechanism arises from heterogeneous
data. In small samples, latent sub-populations may be represented
unevenly across training folds, so that different folds induce
meaningfully different fitted models. Dependence between validation
estimates and fold-composition variability are therefore distinct
phenomena: the former reduces effective independent information,
whereas the latter changes the learning problem encountered by each
model. Both can contribute simultaneously to cross-validation
uncertainty.

\subsection{Accuracy versus evidence}

The experiments illustrate that predictive performance and statistical
evidence are not interchangeable. Two procedures can achieve similar
empirical risks while producing substantially different upper-risk
certificates. Within the proposed framework, the strength of the
certificate depends not only on empirical performance but also on
sampling uncertainty, dependence, and model complexity.

This distinction is particularly important for bounded
discriminability. A representation may achieve an apparently useful
classification accuracy while the available sample remains insufficient
to certify
\[
U<0.5
\]
for a balanced binary problem. Conversely, as sample size increases,
the same predictive problem may cross from an empirically promising
regime into one in which better-than-chance future performance can be
certified at the prescribed confidence level.

The certification experiments therefore provide information that raw
accuracy alone cannot supply: they identify when the observed predictive
signal is accompanied by a sufficiently strong future-risk guarantee.

\subsection{Multimodality, geometry, and model mismatch}

The structural experiments refine an important point from our previous
work. Multimodality alone is not synonymous with classification
difficulty. In the linear-multimodal experiments, increasing the number
of mixture components did not destroy discriminability as long as a
globally exploitable decision geometry remained present.

By contrast, the nonlinear-alternating configurations rapidly drove the
fixed linear classifier toward chance performance even with relatively
few components. Thus, the relevant quantity is not simply the number of
latent modes but the interaction between sample size, latent
heterogeneity, and the inductive bias of the learner.

This distinction also clarifies the interpretation of simple structural
indicators such as
\[
\eta=\frac{n}{dN_c}.
\]
Such quantities may indicate sparse support of latent sub-populations,
but they should not be treated as complete measures of problem
complexity. A representation with many modes may still be easy for a
given learner, whereas a lower-dimensional or less multimodal
representation may be intrinsically difficult if its geometry is poorly
matched to the classifier.

\subsection{Information fusion versus uncertainty fusion}

Information-fusion methods commonly optimise predictive performance
after combining several sources. The optimisation in
Eq.~\eqref{eq:fusion-opt} addresses a related but different objective:
it also accounts for the uncertainty associated with each source and
the dependence among their generalisation gaps.

Experiment IV shows why this distinction matters. Under
non-homogeneously sampled multimodal sources, Fusion-CUBV can produce a
substantially smaller upper-risk certificate than empirical-risk
weighting even when the latter attains a lower true fused risk. At
small sample sizes, the uncertainty and covariance terms materially
alter the selected weights and favour multi-source combinations. As
the sample size increases, this difference diminishes and the
Fusion-CUBV solution approaches empirical-risk source selection. Thus,
predictive risk and evidence-adjusted future risk need not induce the
same fusion rule.

The subsequent source-number experiments provide an additional
observation. Adding progressively weaker or noisier sources did not
systematically deteriorate the Fusion-CUBV certificate. The same
qualitative robustness was observed when the additional sources
possessed genuinely different and increasingly non-linear geometries.
In both cases, the optimisation could assign negligible weight to
sources that did not improve the resulting certificate.

These results suggest that the relevant question in multi-source
learning is not merely whether an additional source contains predictive
information, but whether it contributes sufficiently independent and
reliable information to improve the certified future risk.

The Safe-Fusion-CUBV construction illustrates the opposite extreme.
The conservative positive-semidefinite inflation used to protect
against covariance-estimation uncertainty produced trivial bounds in
the present simulations. This does not invalidate the safe construction,
but it demonstrates the practical cost of obtaining protection from a
deliberately conservative covariance upper bound.

\subsection{Relationship to the original K-fold CUBV}

The present framework is intended as a generalisation of the original
K-fold CUBV formulation \cite{gorriz2026kfold}, rather than as a
replacement for it. Classical concentration inequalities, PAC-Bayesian
bounds, dependency-aware constructions, and information-fusion bounds
are represented through a common exponential-envelope formulation.

The principal conceptual extension is to make the cumulant envelope
$\gamma$ explicit and allow it to encode different assumptions about
variance, dependence, model complexity, and information fusion. The
bounded-discriminability formulation then applies the resulting upper
risk to a different scientific question: whether a feature or
representation supports a statistically controlled claim about future
discriminability.

\subsection{Resubstitution, K-fold, and the coverage criterion}

The sample-size experiments reproduce an important qualitative
observation from our previous CUBV work \cite{gorriz2026kfold}:
cross-validation uncertainty is not determined solely by the nominal
sample size. Partitioning can reduce the support available to individual
fitted models and, in heterogeneous samples, alter the representation
of latent sub-populations across folds.

In the present simulations, corrected resubstitution and $K$-fold
bounds become increasingly similar as sample size grows. At smaller
sample sizes, however, differences between the two paradigms can remain
substantial. This should not be interpreted as a general recommendation
to replace cross-validation with resubstitution. Resubstitution is
optimistically biased, and a smaller reported upper bound is meaningful
only when the correction used to construct it provides adequate
coverage.

This point is visible in the Monte Carlo results. Most constructions
exhibited empirical coverage close to or above the nominal level, but
isolated deviations were observed for some calibrated or
data-dependent bounds. The experiments therefore reinforce the need to
report coverage together with tightness: numerical sharpness without
adequate coverage is not evidence of a superior bound.

\subsection{Limitations and open theoretical questions}

Several aspects require further development before the
dependency-aware and bounded-discriminability constructions can be
treated as turnkey statistical procedures.

First, $\bm{\Sigma}$ is a covariance \emph{proxy} for latent
generalisation gaps. Estimating it and subsequently inserting it into a
risk bound introduces another layer of uncertainty. Bengio and
Grandvalet \cite{bengio2004} showed that no universally unbiased
estimator of the variance of $K$-fold cross-validation exists from a
single $K$-fold experiment, while Nadeau and Bengio
\cite{nadeau2003} demonstrated the inferential consequences of ignoring
training-set-induced variability. The positive-semidefinite
upper-confidence construction developed here is intended to expose this
uncertainty rather than conceal it in a plug-in estimate.

The simulations also reveal the practical difficulty of this approach:
the present conservative PSD inflation can be sufficiently large to
produce trivial upper bounds. Developing sharper covariance
upper-confidence constructions is therefore an important theoretical
and practical direction.

Second, repeated cross-validation generates additional observations but
does not automatically generate independent ones. Repeated partitions
may assist estimation of dependence, but their own correlation structure
must be accounted for.

Third, the direct Hoeffding and empirical-Bernstein inequalities are
fixed-hypothesis concentration results. Applying them after fitting a
classifier on the same observations does not automatically provide a
post-selection guarantee. In the present experiments they are used as
comparative bounding constructions and their behaviour is assessed
through Monte Carlo coverage. A stronger theoretical treatment should
derive uniform or algorithm-dependent counterparts appropriate to the
learned classifier.

Fourth, the calibrated MGF constructions depend on an independent
calibration/evaluation separation in the present Monte Carlo design.
Deployment on a single real dataset would require an analytically
justified dependence model, an external calibration sample, a
hierarchical dependence estimate, or another procedure that propagates
calibration uncertainty into the final guarantee.

Fifth, the real-data proof of concept reported in
Appendix~\ref{app:real-neuroimaging} shows that bounded-discriminability
behaviour can also be observed in learned neuroimaging representations.
However, this empirical result does not resolve the broader theoretical
problem of post-training and post-selection validity. The next theoretical
stage should therefore derive feature-level upper bounds that explicitly
account for model training, fold dependence, and feature-selection
multiplicity, particularly when latent coordinates or feature subsets are
selected adaptively from the same data.

Sixth, selection introduces an additional multiplicity problem. Searching
over algorithms, hyperparameters, values of $K$, fusion rules, features,
or feature subsets and then reporting the smallest upper bound can
invalidate a nominal guarantee unless the selection step is itself
controlled. In particular, selecting
\[
S^\star=\arg\min_S U_S(\delta)
\]
over many candidate representations requires an appropriate uniform,
nested, or multiple-comparison correction.

Finally, all bounds considered here address finite-sample
generalisation under a common population. Distribution shift between
the training population and a future target population is a separate
problem. A more complete extension could take the form
\begin{equation}
\R_{\mathrm{target}}
\leq
\Rh
+
B_{\mathrm{sampling}}
+
B_{\mathrm{dependence}}
+
B_{\mathrm{shift}},
\end{equation}
where $B_{\mathrm{shift}}$ controls the discrepancy between source and
target populations. This issue is especially relevant to multi-site
neuroimaging, where scanner, acquisition, and population differences
can dominate ordinary finite-sample uncertainty.

\section{Conclusion}

We have proposed $\gamma$-CUBV, a unified exponential framework for
constructing upper bounds on generalisation risk from resubstitution,
cross-validation, and heterogeneous information sources. The central
object is a cumulant-generating envelope $\gamma$, which translates
assumptions about uncertainty into an explicit confidence penalty.
Classical range-based concentration, variance-aware bounds,
dependency-aware constructions, PAC-Bayesian penalties, and
multi-source fusion can all be represented within the same structure.

The dependency analysis yields a particularly transparent consequence.
Under an equicorrelated sub-Gaussian model, $K$ overlapping folds behave
as
\begin{equation}
\keff
=
\frac{K}{1+(K-1)\rho}
\end{equation}
effective independent units of evidence. When $\rho$ is large,
increasing $K$ may improve the granularity of cross-validation without
providing a proportional increase in inferential information.

Building on this framework, we introduced \emph{bounded
discriminability} as a statistical-learning-theory criterion for
validating features and representations. Rather than asking only whether
a feature achieves a high empirical or cross-validated accuracy, the
criterion asks whether its future classification risk can be bounded
tightly enough to support a controlled discriminability claim.

The simulation study revealed several complementary behaviours. First,
increasing sample size progressively tightens the resulting risk
certificates and increases the probability of certifying
better-than-chance future performance. Second, structural complexity
cannot be identified with multimodality alone: increasing the number of
modes may leave discriminability largely intact when the global decision
geometry remains compatible with the learner, whereas comparatively
modest non-linear structure can drive a fixed linear classifier toward
chance performance. Third, $K$-fold partitioning can introduce
additional uncertainty when heterogeneous sub-populations are weakly
represented across folds, so that corrected full-sample estimates may
be competitive with, or tighter than, cross-validation bounds in some
regimes, provided that their empirical coverage remains adequate.

The information-fusion experiments further showed that Fusion-CUBV can
adapt to several distinct forms of source heterogeneity. In the fixed
four-source experiment, each source contained a non-homogeneously
sampled interlaced multimodal representation of the same binary task.
Fusion-CUBV produced tighter upper-risk certificates than uniform and
variance-only weighting and, in the small-sample regime, also improved
materially upon empirical-risk weighting despite the latter attaining
lower population risk. As the sample size increased, the bound-aware
solution approached empirical-risk source selection. Experiments V and
VI further showed that the resulting certificates remained stable when
progressively weaker, noisier, or structurally incompatible sources
were added. Conversely, the conservative Safe-Fusion-CUBV construction
demonstrated the practical cost of strong protection against
covariance-estimation uncertainty, producing uninformative bounds in
the present simulations.

The broader message is that predictive performance and evidence should not
be identified with one another, whether at the level of a fitted model or at the level of a single feature. Accuracy is an observed average.
Generalisation requires an uncertainty statement compatible with the data,
the learning procedure, the dependence structure, and the hypothesis class. The $\gamma$-CUBV framework, and its bounded-discriminability instantiation, are designed to make those components explicit.

The real neuroimaging proof of concept reported in Appendix~\ref{app:real-neuroimaging} provides an initial demonstration of this principle in unsupervised latent representations learned from structural
MRI. Individual latent coordinates can support similar empirical
classification performance while requiring substantially different sample
sizes before their upper risks become non-vacuous. Moreover, combining all
latent coordinates need not strengthen the corresponding statistical
evidence, because improved or preserved empirical performance can be offset by a larger model-complexity penalty.

The next step is therefore not simply to test whether bounded
discriminability can be observed in real neuroimaging data, but to determine how robust these certificates remain under external-cohort validation, multi-centre acquisition, distribution shift, and adaptive feature selection.

This perspective is particularly relevant to low-dimensional latent
representations in neuroimaging, where small cohorts, biological
heterogeneity, acquisition variability, and possible model mismatch are
common. A natural next step is therefore to test bounded
discriminability on real neuroimaging representations and to determine
whether feature-level upper-risk certificates improve reproducibility
and scientific interpretability beyond conventional accuracy and
cross-validation alone.

\section*{Data and code availability}
Code implementing the proposed bounds, the multimodal imbalanced generator, and all simulation experiments will be made publicly available upon publication at \href{https://github.com/SiPBA}{https://github.com/SiPBA}

\section*{Declaration of competing interest}
The authors declare that they have no known competing financial interests or personal relationships that could have appeared to influence the work reported in this paper.

\section*{Acknowledgements}
The author also acknowledges the Spanish Ministry of Science, Innovation and Universities for the rather unusual circumstance of leaving a Highly Cited Researcher in Spain — one of only 42 nationally — outside the National Research Plan for reasons unrelated to the author’s scientific record, productivity, impact, or academic merits.

\newpage
\section{Appendices}


\subsection{Experiment VII: bounded discriminability in real neuroimaging data}
\label{app:real-neuroimaging}

The synthetic experiments allow direct evaluation of coverage because the
data-generating distribution is known and population risk can be approximated
using arbitrarily large independent samples. We finally considered a real
neuroimaging application, for which the population risk is unknown. The goal
of this experiment is therefore different: to determine whether individual
coordinates of a learned low-dimensional representation support non-vacuous
upper-risk statements, whether those statements are consistent with an
independent audit sample, and how the resulting evidence changes with sample
size and under label randomisation.

\subsubsection{Dataset and development--audit design}

We considered a binary structural-MRI discrimination problem between cognitively normal subjects (NOR) and subjects diagnosed with Alzheimer's disease (AD)\footnote{Data used in the preparation of this paper were obtained from the ADNI database (\url{adni.loni.usc.edu}). Please see \cite{gorriz2026latent} for further details.}. After balancing the two diagnostic groups without replacement,
the dataset contained 376 distinct subjects, with 188 NOR and 188 AD observations. Each subject contributed a single MRI volume.

The balanced sample was stratified into a development cohort of 282 subjects,
comprising 141 NOR and 141 AD observations, and an independent audit cohort of
94 subjects, comprising 47 NOR and 47 AD observations. Thus,

\begin{equation}
    \mathcal D_{\mathrm{dev}}
    \cap
    \mathcal D_{\mathrm{audit}}
    =
    \varnothing .
\end{equation}

The audit subjects were excluded from autoencoder training, classifier fitting,
bound construction, and the sample-size analyses described below. Audit risks
are therefore used only as independent out-of-sample empirical references.
They should not be interpreted as known population risks and cannot provide,
by themselves, an empirical coverage estimate analogous to that available in
the synthetic experiments.

\subsubsection{Unsupervised whole-brain latent representation}

A three-dimensional autoencoder was trained using only the development MRI
volumes. Diagnostic labels were not used during representation learning; the
autoencoder was optimised exclusively through reconstruction.

The resulting reconstruction mean-squared error was

\begin{equation}
    \mathrm{MSE}_{\mathrm{AE}} = 0.04758 .
\end{equation}

After training, the encoder was frozen and applied separately to the
development and audit cohorts,

\begin{align}
X_{\mathrm{dev}}
&\longrightarrow
Z_{\mathrm{dev}},
\\
X_{\mathrm{audit}}
&\longrightarrow
Z_{\mathrm{audit}},
\end{align}

where each MRI volume was represented by an eight-dimensional latent vector,

\begin{equation}
    Z
    =
    (z_1,z_2,\ldots,z_8)
    \in
    \mathbb R^8 .
\end{equation}

The eight coordinates are learned from the complete brain volume. They are
therefore global latent variables and should not be interpreted as individual
anatomical regions. As discussed in our previous analysis of latent-space
projections and atlases in deep neuroimaging \cite{gorriz2026latent},
individual autoencoder coordinates may combine distributed anatomical
information and should not be assigned a direct neuroanatomical interpretation
without an explicit localisation analysis. In particular, $z_j$ cannot be
identified directly with the hippocampus, temporal cortex, or another specific
structure.

The principal analysis below consequently asks a representation-level
question:

\begin{quote}
\emph{Does an individual unsupervised whole-brain latent coordinate contain
sufficient evidence to support better-than-chance NOR--AD discrimination?}
\end{quote}

\subsubsection{Component-wise bounded discriminability}

Each latent coordinate was considered separately. For every
$j\in\{1,\ldots,8\}$, a standardised one-dimensional logistic classifier was
fitted using only

\begin{equation}
    (z_j,y)_{\mathrm{dev}}.
\end{equation}

This gives the sequence

\begin{equation}
z_j
\longrightarrow
\widehat h_j
\longrightarrow
\widehat R_j
\longrightarrow
U_j(\delta),
\qquad
\delta=0.05 .
\end{equation}

For balanced binary classification, the chance-risk level is

\begin{equation}
    R_{\mathrm{chance}}=0.5,
\end{equation}

so that

\begin{equation}
    U_j(\delta)<0.5
\label{eq:real-certification}
\end{equation}

provides the operational bounded-discriminability criterion used in this
experiment.

We report Hoeffding and empirical-Bernstein upper-risk corrections as
fixed-hypothesis-style reference rules. Because the logistic classifier is
fitted using the same development observations on which the resubstitution
error is computed, these direct forms do not themselves pay for
data-dependent hypothesis selection. A uniform VC-Hoeffding construction is
therefore also reported as a conservative reference.

PAC--Bayes provides a distinct post-training construction. For every
coordinate we used

\begin{equation}
P
=
\mathcal N(0,4^2I),
\qquad
Q_j
=
\mathcal N(\widehat\theta_j,0.1^2I),
\end{equation}

where $\widehat\theta_j$ denotes the fitted logistic-regression parameters.
The associated upper bound targets the Gibbs risk

\begin{equation}
R_G(Q_j)
=
\mathbb E_{\theta\sim Q_j}
R(h_\theta),
\end{equation}

rather than the risk of the deterministic posterior-centre classifier.
Consequently, deterministic and PAC--Bayesian quantities are kept separate
throughout the analysis.

\subsubsection{Component-wise results using all development subjects}

Table~\ref{tab:real-component-bounds} reports the results obtained from the
complete development sample, $n=282$.

\begin{table*}[t]
\centering
\small
\begin{tabular}{cccccccc}
\toprule
Feature &
$\widehat R$ &
$R_{\mathrm{audit}}$ &
$U_H$ &
$U_B$ &
$\widehat R_G(Q)$ &
$R_{G,\mathrm{audit}}$ &
$U_{PB}$ \\
\midrule
$z_1$ & 0.287 & 0.330 & 0.360 & 0.360 & 0.284 & 0.338 & 0.435 \\
$z_2$ & 0.323 & 0.351 & 0.396 & 0.398 & 0.338 & 0.329 & 0.490 \\
$z_3$ & 0.316 & 0.277 & 0.388 & 0.391 & 0.311 & 0.289 & 0.463 \\
$z_4$ & 0.277 & 0.340 & 0.349 & 0.349 & 0.284 & 0.333 & 0.435 \\
$z_5$ & 0.326 & 0.298 & 0.399 & 0.402 & 0.328 & 0.316 & 0.479 \\
$z_6$ & 0.291 & 0.319 & 0.364 & 0.364 & 0.290 & 0.324 & 0.442 \\
$z_7$ & 0.298 & 0.362 & 0.371 & 0.372 & 0.307 & 0.337 & 0.458 \\
$z_8$ & 0.348 & 0.330 & 0.420 & 0.424 & 0.348 & 0.340 & 0.499 \\
\bottomrule
\end{tabular}
\caption{Experiment VII: component-wise bounded discriminability using all
$n=282$ development subjects. $\widehat R$ and
$R_{\mathrm{audit}}$ refer to the deterministic logistic classifier, whereas
$\widehat R_G(Q)$ and $R_{G,\mathrm{audit}}$ refer to the corresponding Gibbs
classifier and should therefore be compared with the PAC--Bayes upper risk
$U_{PB}$. The audit cohort is not used to construct any bound.}
\label{tab:real-component-bounds}
\end{table*}

All eight coordinates satisfied

\begin{equation}
    U_H<0.5
    \qquad\text{and}\qquad
    U_B<0.5.
\end{equation}

More importantly, all eight coordinates also satisfied the PAC--Bayesian
criterion

\begin{equation}
    U_{PB}(z_j)<0.5.
\end{equation}

The resulting PAC--Bayes upper risks ranged from approximately $0.435$ to
$0.499$. Thus, although the eight coordinates originate from the same
unsupervised representation, they do not carry identical amounts of
classification evidence.

For all coordinates, the deterministic audit risk was below the corresponding
Hoeffding and Bernstein upper-risk values. Likewise, the independent Gibbs
audit risks were below their PAC--Bayes upper risks. These comparisons provide
an external empirical consistency check, but they are not direct tests of
population-risk coverage.

The uniform VC-Hoeffding construction was substantially more conservative.
Coordinate-wise upper risks ranged from approximately $0.958$ to $1$, despite
the use of only a single latent variable. This illustrates the statistical
cost of protecting uniformly against data-dependent selection over the
complete affine-linear hypothesis class.

\subsubsection{The full eight-dimensional representation}

For comparison, we also fitted a classifier using all eight latent
coordinates simultaneously,

\begin{equation}
(z_1,\ldots,z_8)
\longrightarrow
\widehat h_Z.
\end{equation}

This analysis does \emph{not} average the latent coordinates. All eight
variables enter the logistic classifier jointly.

The deterministic classifier obtained

\begin{equation}
\widehat R_Z
=
0.277,
\qquad
R_{\mathrm{audit},Z}
=
0.287,
\end{equation}

and the corresponding reference upper risks were

\begin{equation}
U_H(Z)
=
0.349,
\qquad
U_B(Z)
=
0.349.
\end{equation}

The uniform VC-Hoeffding upper risk was equal to $1$.

PAC--Bayes gave

\begin{equation}
\widehat R_G(Q_Z)
=
0.279,
\qquad
R_{G,\mathrm{audit}}(Q_Z)
=
0.299,
\end{equation}

but

\begin{equation}
U_{PB}(Z)
=
0.530.
\end{equation}

Hence the complete eight-dimensional representation did not satisfy
Eq.~\eqref{eq:real-certification} under the PAC--Bayesian criterion, despite
retaining a low empirical risk.

The difference is explained by the explicit complexity term. For the
one-dimensional models,

\begin{equation}
\KL(Q_j\|P)
\simeq
6.4,
\end{equation}

whereas for the joint eight-dimensional classifier,

\begin{equation}
\KL(Q_Z\|P)
=
28.90.
\end{equation}

The larger representation therefore preserves predictive performance while
incurring a substantially larger PAC--Bayesian complexity penalty. This
provides a concrete example of the distinction between predictive accuracy
and statistical evidence: adding features need not produce a stronger
upper-risk certificate.

\subsubsection{Comparison with K-fold validation}

The complete latent representation was additionally evaluated by five-fold
cross-validation. The observed K-fold empirical risk was

\begin{equation}
\widehat R_{\mathrm{CV}}
=
0.294,
\end{equation}

with

\begin{equation}
U_{H,\mathrm{CV}}
=
0.367,
\qquad
U_{B,\mathrm{CV}}
=
0.368.
\end{equation}

For the corresponding full-development fit,

\begin{equation}
\widehat R_{\mathrm{full}}
=
0.277,
\qquad
U_{H,\mathrm{full}}
=
0.349,
\qquad
U_{B,\mathrm{full}}
=
0.349.
\end{equation}

Thus, in this dataset, partitioning the development cohort produced slightly
larger empirical and upper risks than fitting the same representation using
all available development observations. This comparison is descriptive and
should not be interpreted as establishing a universal ordering between
resubstitution and K-fold validation.

\subsubsection{Sample-size dependence and variability}

We next studied how the component-wise upper risks evolve with the amount of
labelled development data. Importantly, the autoencoder was \emph{not}
retrained in this experiment. The already learned latent representation was
kept fixed, and only the labelled classification stage was repeated at
different sample sizes.

Consequently, this experiment addresses

\begin{quote}
\emph{How many labelled subjects are required to obtain a non-vacuous
classification certificate from an already learned representation?}
\end{quote}

rather than the different question of how autoencoder uncertainty changes
when representation learning itself is repeated at smaller sample sizes.

For every coordinate, sample sizes

\begin{equation}
n
\in
\{25,50,100,200,282\}
\end{equation}

were considered. At each $n<282$, 30 stratified subsamples were drawn without
replacement. Within each repetition, the same subjects were used for the
Hoeffding, empirical-Bernstein, and PAC--Bayes calculations. The curves report
the mean upper risk and empirical 10th--90th percentile bands across
repetitions.

For PAC--Bayes, the empirical Gibbs risk was estimated using 1000 posterior
draws per fitted classifier, reducing the Monte Carlo contribution to the
observed variability.

We define the observed certification sample size on the evaluated grid as

\begin{equation}
n_{\mathrm{cert}}
=
\min
\left\{
n:
\overline U(n)<0.5
\right\},
\label{eq:n-cert-real}
\end{equation}

where $\overline U(n)$ denotes the mean upper risk across the repeated
subsamples. Because only five sample sizes were evaluated,
$n_{\mathrm{cert}}$ should be interpreted as a grid crossing, not as an exact
continuous threshold.

\begin{table}[t]
\centering
\begin{tabular}{cccc}
\toprule
Feature &
$n_{\mathrm{cert}}^{H}$ &
$n_{\mathrm{cert}}^{B}$ &
$n_{\mathrm{cert}}^{PB}$ \\
\midrule
$z_1$ & 50  & 50  & 200 \\
$z_2$ & 50  & 100 & 282 \\
$z_3$ & 50  & 100 & 200 \\
$z_4$ & 50  & 50  & 200 \\
$z_5$ & 50  & 100 & 282 \\
$z_6$ & 50  & 50  & 200 \\
$z_7$ & 50  & 50  & 200 \\
$z_8$ & 100 & 100 & 282 \\
\bottomrule
\end{tabular}
\caption{First evaluated sample size at which the mean component-wise upper
risk falls below $0.5$. Hoeffding and Bernstein refer to the deterministic
classifier, whereas PAC--Bayes targets Gibbs risk.}
\label{tab:real-ncert}
\end{table}

The PAC--Bayes results show a clear hierarchy. Coordinates
$z_1,z_3,z_4,z_6,$ and $z_7$ crossed the $0.5$ threshold at $n=200$ on the
tested grid, whereas $z_2,z_5,$ and $z_8$ required the complete
$n=282$ development cohort.

For example, the mean PAC--Bayes upper risk of $z_4$ evolved as

\begin{equation}
0.782,\;
0.635,\;
0.530,\;
0.468,\;
0.438
\end{equation}

for

\begin{equation}
n
=
25,\;50,\;100,\;200,\;282,
\end{equation}

respectively.

For the weaker coordinate $z_8$, certification occurred only at the largest
sample size. The important effect is therefore not merely a change in
classification accuracy. Both the upper risk and its between-sample
variability decrease as the number of observed subjects grows.

\begin{figure*}[t]
\centering
\includegraphics[width=.48\linewidth]{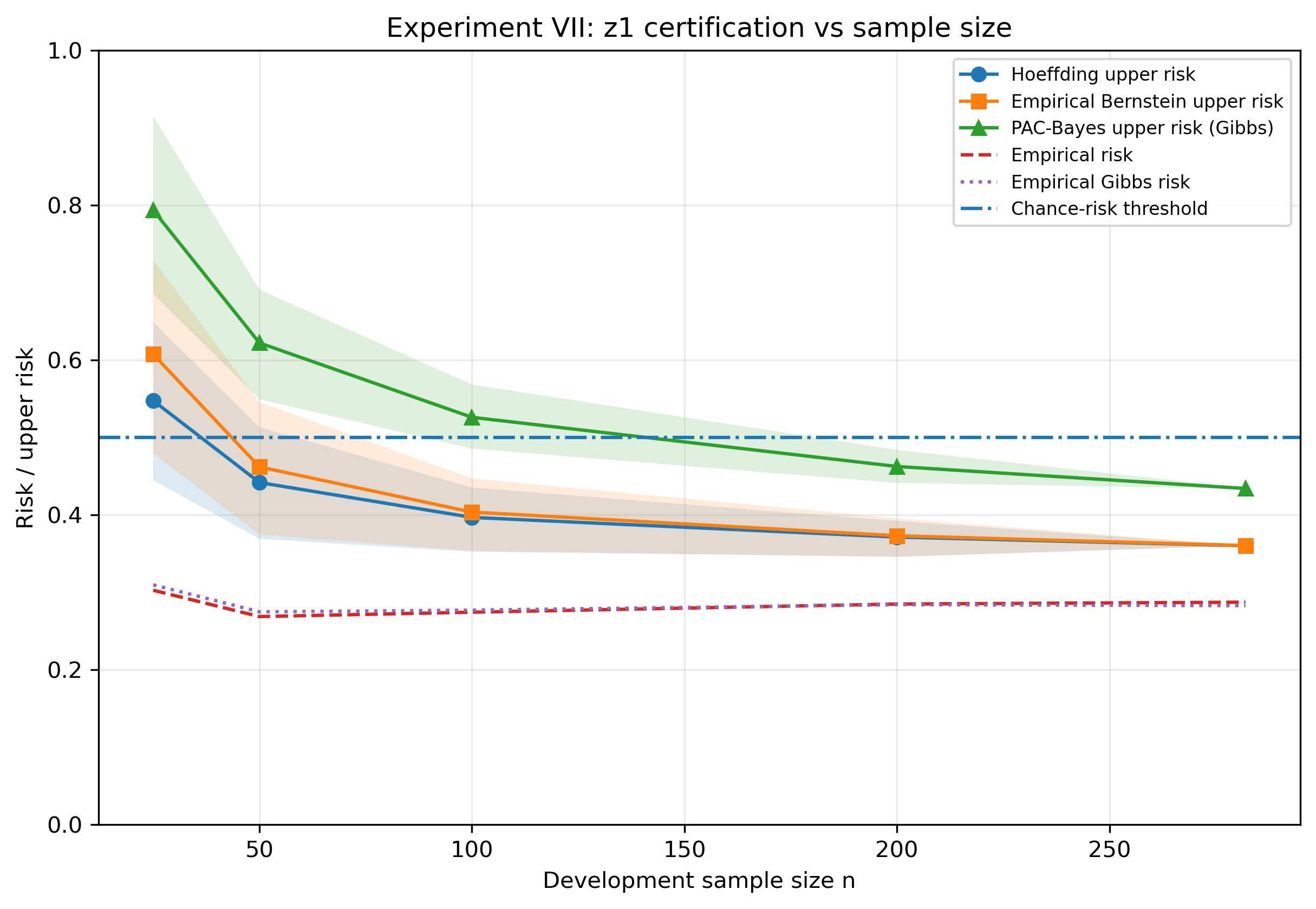}
\includegraphics[width=.48\linewidth]{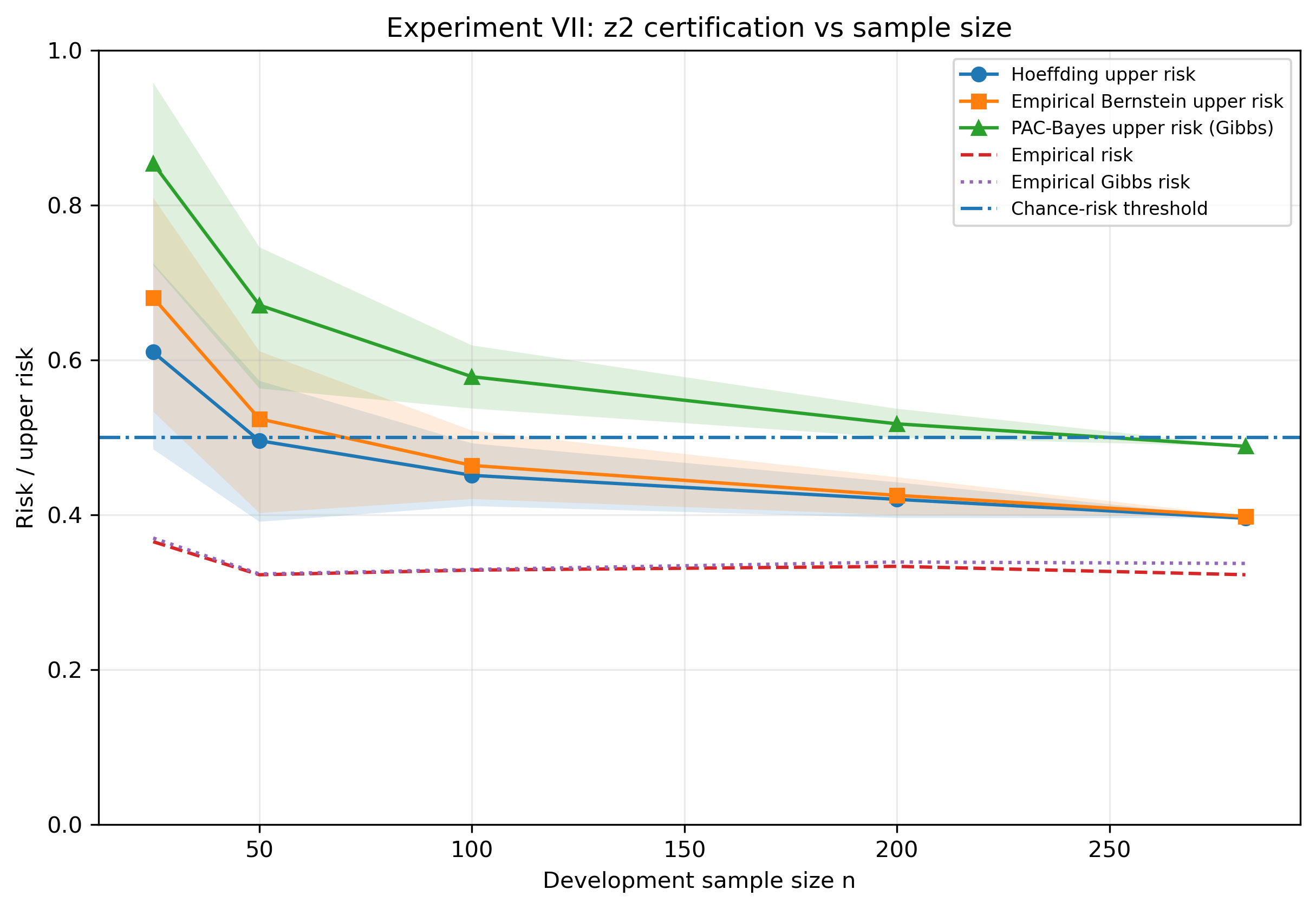}

\includegraphics[width=.48\linewidth]{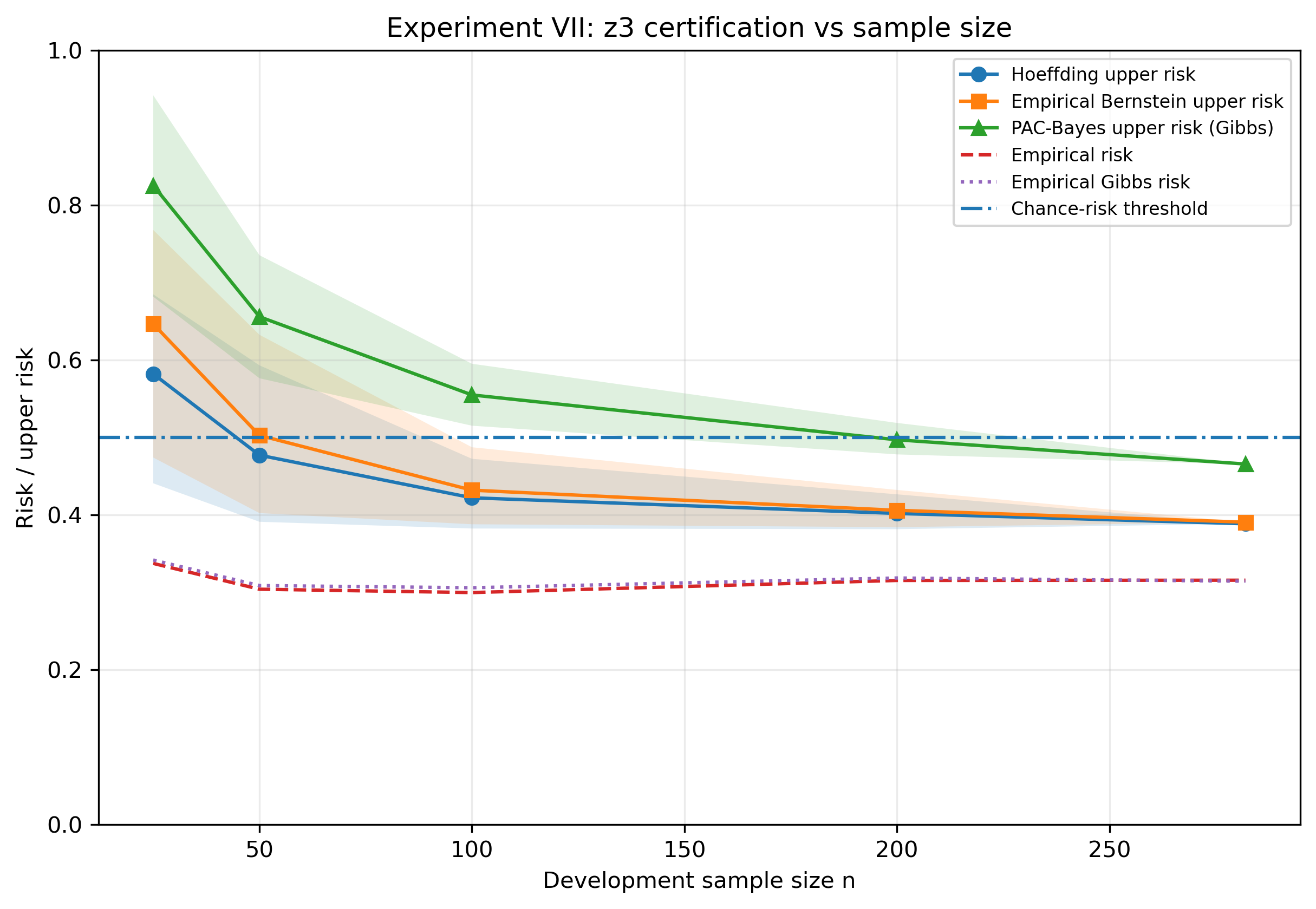}
\includegraphics[width=.48\linewidth]{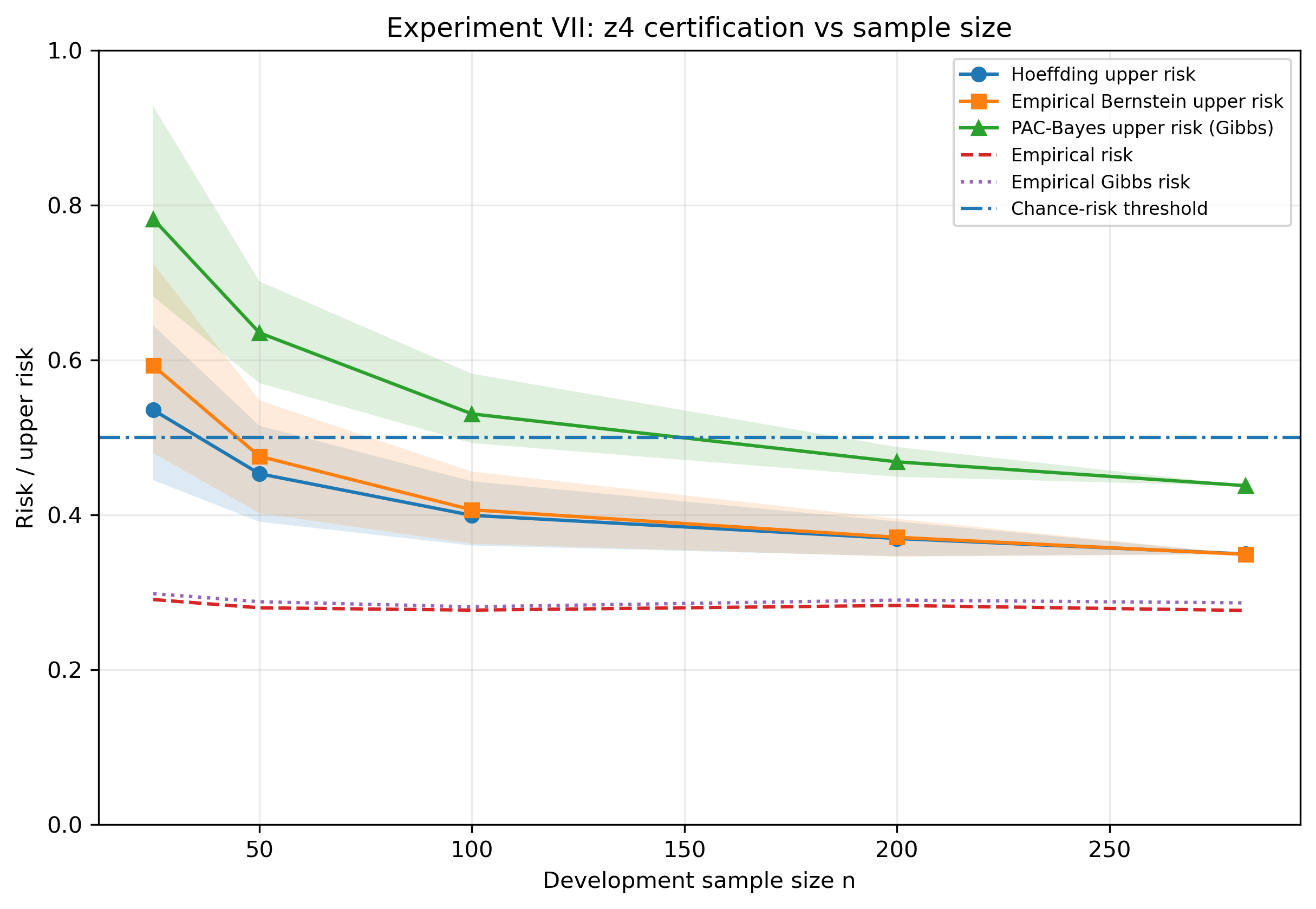}

\caption{Experiment VII: sample-size dependence of bounded discriminability
for latent coordinates $z_1$--$z_4$. Solid curves report mean upper risks and
the shaded regions show empirical 10th--90th percentile bands across repeated
stratified subsamples. The horizontal reference at $0.5$ denotes balanced
binary chance risk. PAC--Bayes targets Gibbs risk; Hoeffding and empirical
Bernstein target the deterministic classifier.}
\label{fig:real-z1-z4}
\end{figure*}

\begin{figure*}[t]
\centering
\includegraphics[width=.48\linewidth]{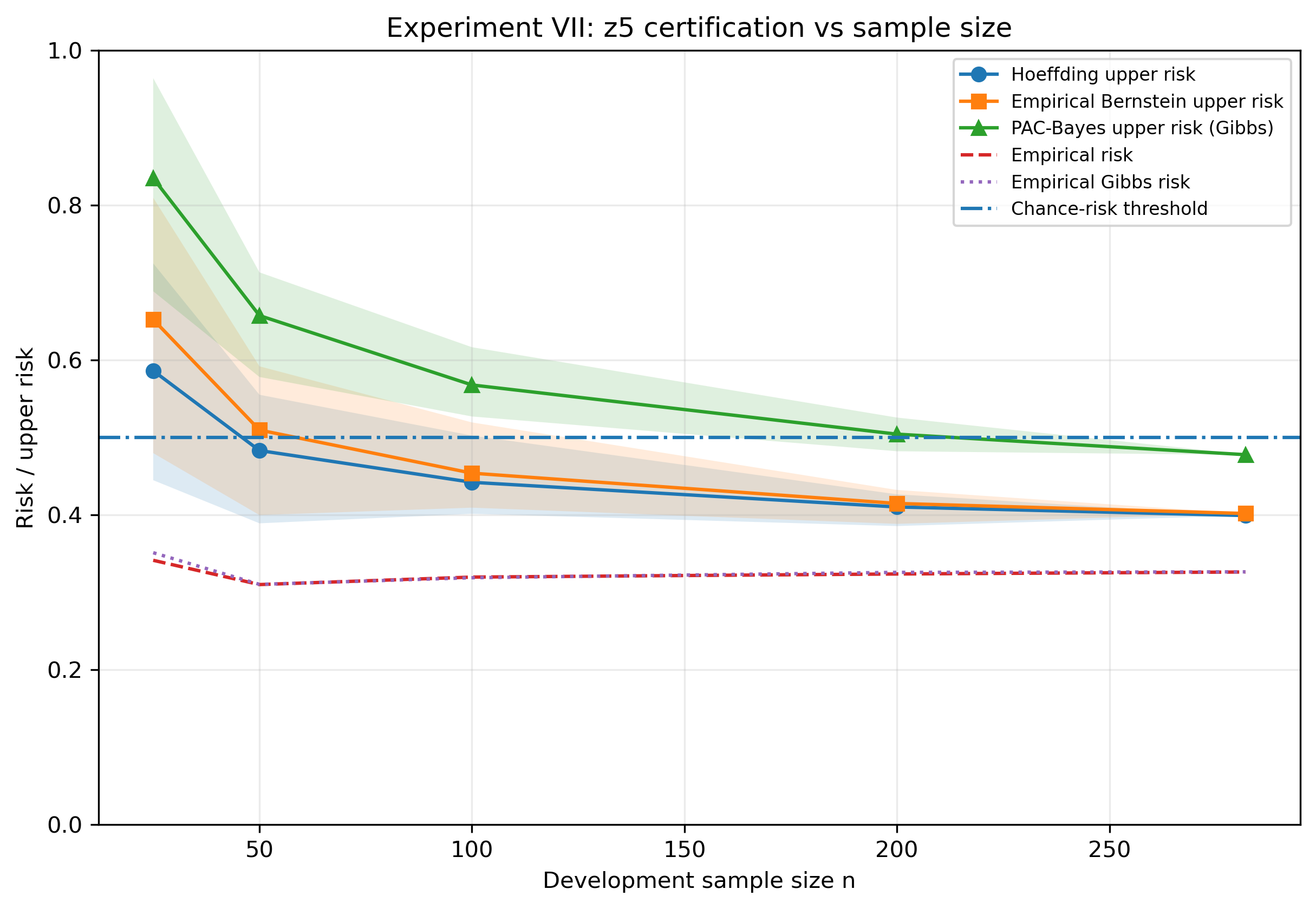}
\includegraphics[width=.48\linewidth]{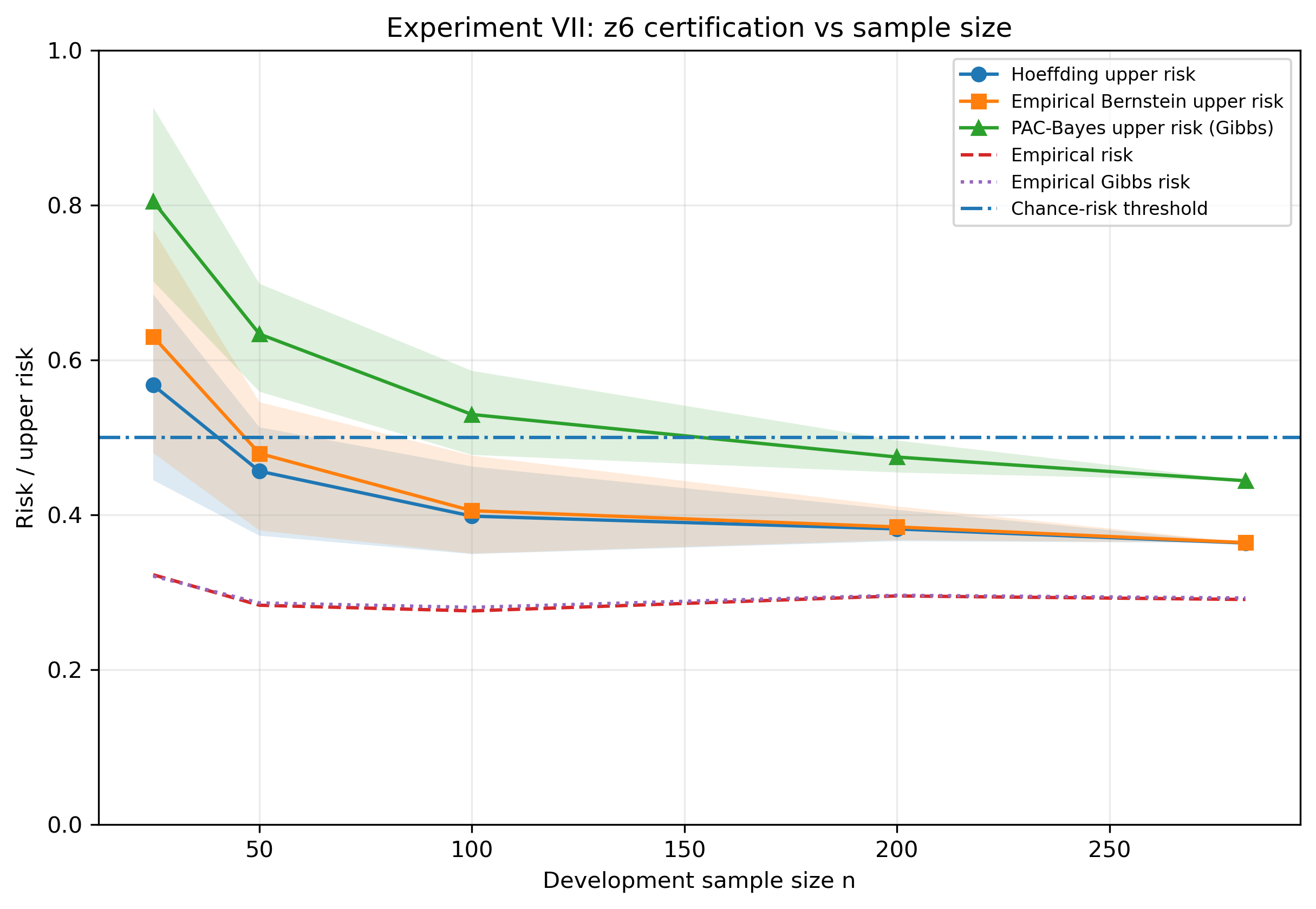}

\includegraphics[width=.48\linewidth]{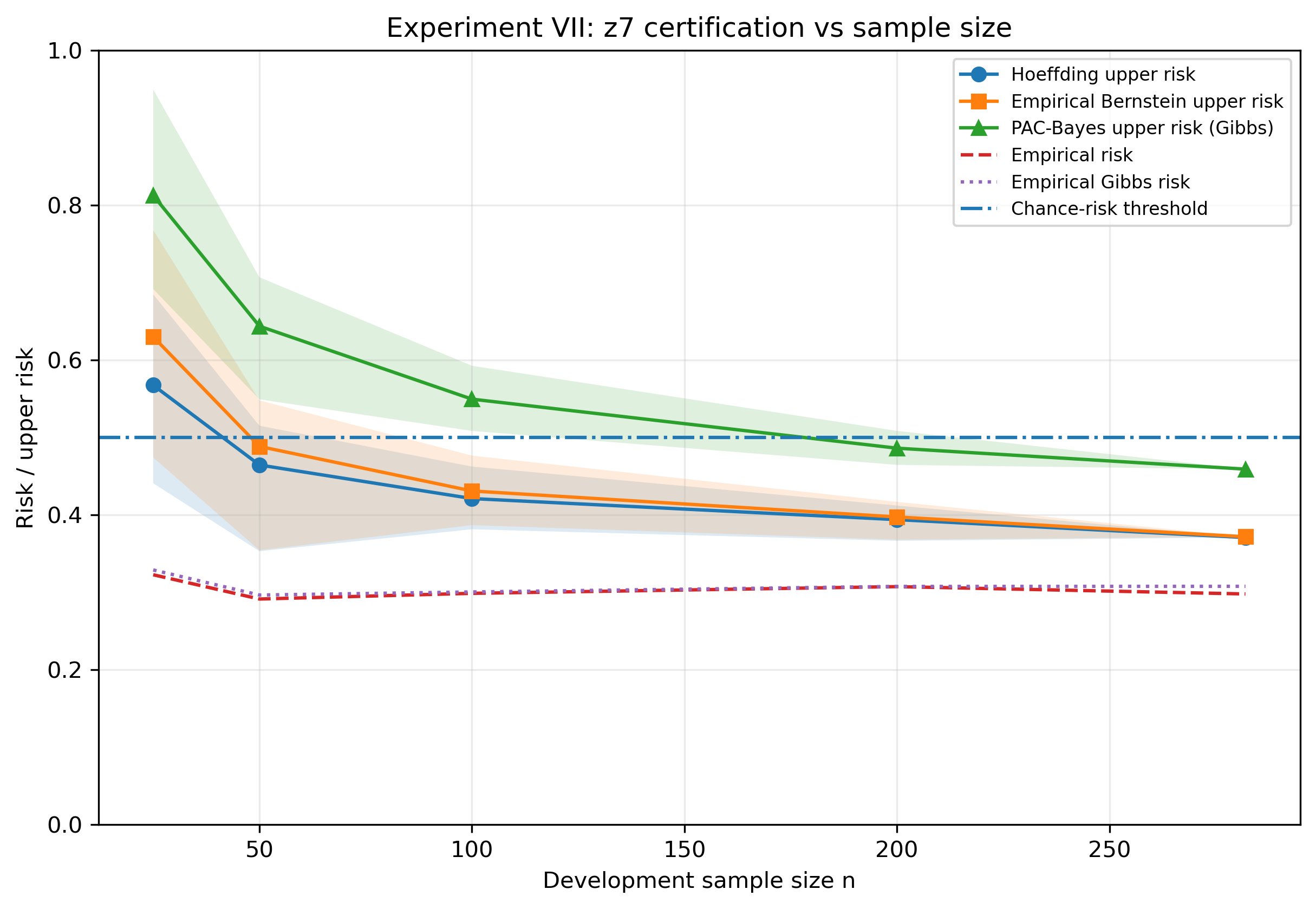}
\includegraphics[width=.48\linewidth]{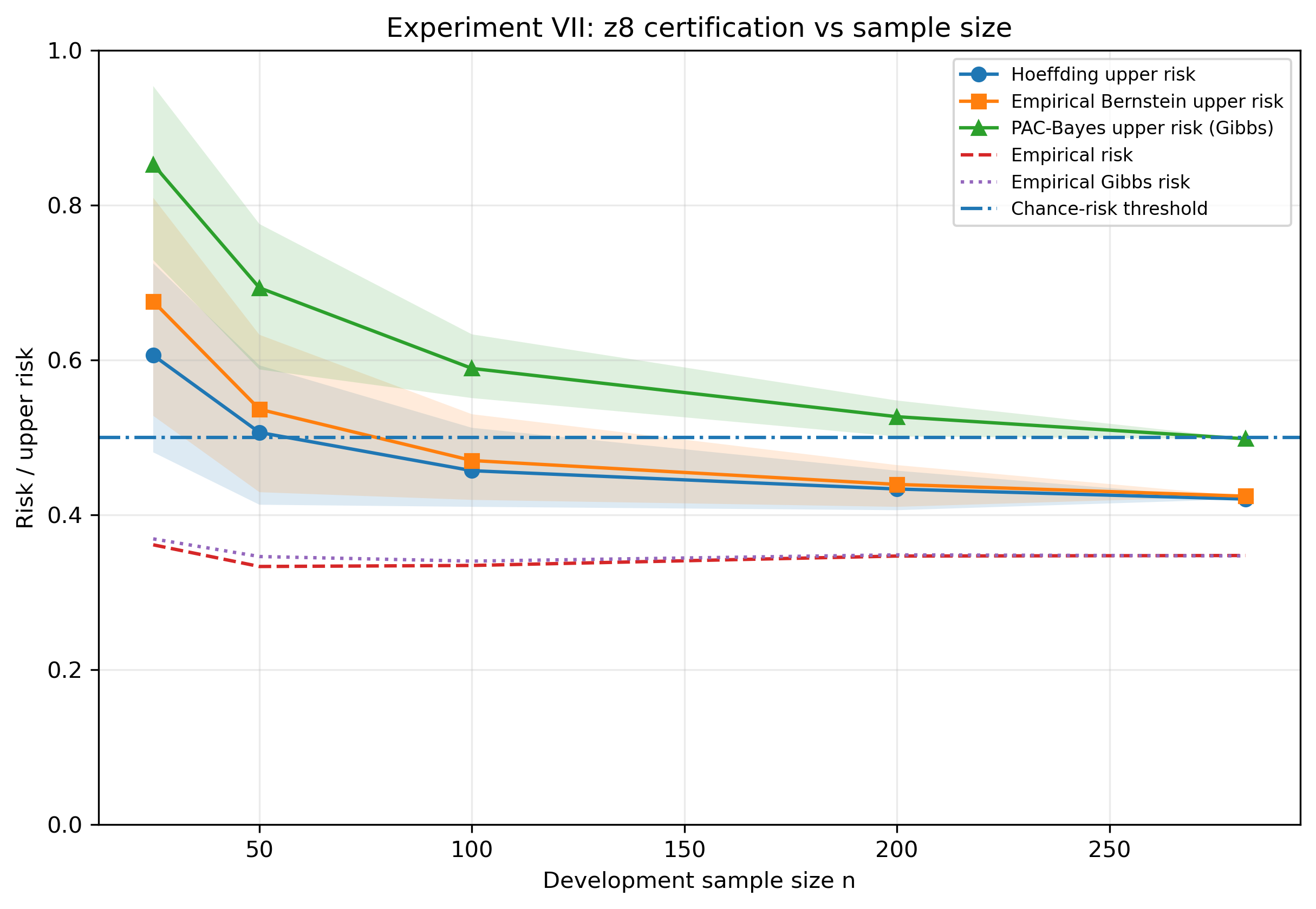}

\caption{Experiment VII: sample-size dependence of bounded discriminability
for latent coordinates $z_5$--$z_8$. The later PAC--Bayes crossing of some
coordinates illustrates that an empirical discriminative signal can appear
before the available sample supports a better-than-chance upper-risk
statement.}
\label{fig:real-z5-z8}
\end{figure*}

For comparison, the PAC--Bayes upper risk of the complete
eight-dimensional representation remained above $0.5$ throughout the
sample-size sweep, reaching approximately

\begin{equation}
U_{PB}(Z)
\simeq
0.531
\end{equation}

at $n=282$. This behaviour contrasts with the individual-coordinate results
and again illustrates the contribution of the complexity term.

\subsubsection{Permutation negative control}

A label-randomisation experiment was used as a negative control. The learned
latent representation was kept fixed while the diagnostic labels were
randomly permuted. Thus, representation geometry was preserved while the
association between latent coordinates and diagnosis was destroyed.

For the true labels, coordinate-wise Hoeffding upper risks ranged from

\begin{equation}
0.349
\quad\text{to}\quad
0.420,
\end{equation}

whereas the corresponding permutation-null means ranged approximately from

\begin{equation}
0.553
\quad\text{to}\quad
0.557.
\end{equation}

Similarly, real-label empirical-Bernstein upper risks ranged from

\begin{equation}
0.349
\quad\text{to}\quad
0.424,
\end{equation}

whereas the permutation-null means ranged from approximately

\begin{equation}
0.560
\quad\text{to}\quad
0.564.
\end{equation}

Hence the deterministic better-than-chance upper-risk pattern disappeared
after label randomisation.

PAC--Bayes was not included in this permutation experiment, and no
PAC--Bayesian permutation-null conclusion is therefore made.

\subsubsection{Interpretation}

Experiment VII plays a different role from the controlled simulations.
Because the true population risk is unknown in real neuroimaging data, the
experiment cannot establish empirical coverage of the proposed bounds.
Instead, it evaluates whether bounded discriminability produces coherent
behaviour under three independent challenges: held-out subjects, decreasing
sample size, and destruction of the diagnostic association through label
permutation.

The component-wise results show that all eight coordinates of the learned
whole-brain representation contain detectable NOR--AD information at the
full development sample size. Nevertheless, the amount of evidence is not
identical across coordinates. Under PAC--Bayes, some coordinates become
non-vacuously bounded with fewer labelled subjects than others.

The complete eight-dimensional representation provides a complementary
observation. Although using all coordinates jointly preserves or improves
empirical predictive performance relative to many individual coordinates,
its larger PAC--Bayesian complexity penalty prevents the corresponding upper
risk from falling below $0.5$.

Thus,

\begin{equation}
\text{more features}
\;\not\Rightarrow\;
\text{stronger evidence},
\end{equation}

and similarly,

\begin{equation}
\text{observed accuracy}
\;\not\equiv\;
\text{bounded discriminability}.
\end{equation}

The real-data analysis therefore provides a proof-of-concept application of
the central principle of this work: empirical performance describes what was
observed in a particular sample, whereas a generalisation bound quantifies
how strongly the available evidence supports extrapolation beyond that
sample.

\subsection{Synthetic-experiment implementation details}
\label{app:synthetic}

This appendix contains only experiment-specific details required for
reproducibility. The statistical theory and the scientific interpretation of
the results are given in the main text.

\subsubsection{Common simulation settings}

Unless otherwise stated, all synthetic experiments use balanced binary
classification with $d=2$. The baseline multimodal generator draws each class
from a Gaussian mixture,
\begin{equation}
p(z\mid Y=c)
=
\sum_{m=1}^{M_c}
\pi_{cm}\,
\mathcal N(z;\mu_{cm},\Sigma_{cm}),
\end{equation}
with one dominant component per class and one or more minority components. The
baseline parameters are
\begin{equation}
\pi_{\mathrm{major}}=0.6,
\qquad
\mathrm{separation}=1.5,
\qquad
\mathrm{cluster\;SD}=1.2,
\end{equation}
unless modified by a specific experiment.

A logistic-regression classifier with $\ell_2$ regularisation is trained after
feature standardisation. For resubstitution, the classifier is fitted on all
$n$ observations and evaluated on the same sample. For $K$-fold validation, a
new classifier is fitted independently inside every training fold. Unless
otherwise stated, $K=5$ and $1-\delta=0.95$.

For simulation only, population risk is approximated using an independent test
sample of size
\begin{equation}
N_{\mathrm{test}}=10\,000.
\end{equation}
Each experimental configuration uses 60 Monte Carlo repetitions. Whenever a
bound requires calibration of latent generalisation gaps, the repetitions are
split into disjoint calibration and evaluation blocks, with 50\% assigned to
each whenever possible. Parameters estimated from the calibration block are
frozen before evaluation.

For continuous Monte Carlo summaries, 95\% intervals are computed as
\begin{equation}
\bar q\pm1.96\frac{s_q}{\sqrt{B_{\mathrm{eval}}}},
\end{equation}
where $B_{\mathrm{eval}}$ is the number of independent evaluation repetitions.
Coverage and certification probabilities are binomial proportions and use
Wilson score intervals. These Monte Carlo intervals quantify simulation
uncertainty and are distinct from the generalisation confidence level
$1-\delta$.

\subsubsection{Experiment-specific configurations}

The six experiments differ in the structural factor under study.

\begin{table*}[t]
\centering
\small
\begin{tabular}{p{0.055\textwidth}p{0.19\textwidth}p{0.64\textwidth}}
\toprule
Exp. & Main factor & Configuration \\
\midrule
I & Sample size &
$n\in\{50,100,200,500,1000,2000,5000\}$, $d=2$, $N_c=4$;
resubstitution and $K$-fold bounds compared. \\
II & Structural complexity &
Five-level ladder from a simple linear problem to a highly nonlinear
overlapping alternating-ring problem. \\
III & Multimodality vs.\ nonlinearity &
$N_c\in\{2,4,6,8,10\}$; globally linear multimodal geometry compared
with alternating nonlinear geometry. \\
IV & Fusion rule &
$M=4$ non-homogeneously sampled multimodal sources with two interlaced
Gaussian modes per class; Uniform, Empirical-risk, Variance-only,
Fusion-CUBV, and Safe-Fusion-CUBV compared across the Experiment-I
sample-size grid. \\
V & Number of sources &
Fusion-CUBV fixed; $M\in\{2,4,6,8,10\}$ using a nested ladder of
progressively weaker/noisier views of the same subjects. \\
VI & Structural source heterogeneity &
Fusion-CUBV fixed; $M\in\{2,4,6,8,10\}$ with additional sources having
progressively different and more nonlinear class geometries. \\
\bottomrule
\end{tabular}
\caption{Summary of the synthetic experiments.}
\label{tab:appendix-experiments}
\end{table*}

For Experiment II, the explicit difficulty ladder is
\begin{align*}
L1:\;& \text{linear},\quad N_c=2,\\
    & \mathrm{sep}=2.4,\quad \sigma=0.70,\\[1mm]
L2:\;& \text{linear multimodal},\quad N_c=4,\\
    & \mathrm{sep}=1.8,\quad \sigma=0.90,\\[1mm]
L3:\;& \text{biased alternating ring},\quad N_c=4,\\
    & r=2.4,\quad \sigma=0.60,\quad b=0.70,\\[1mm]
L4:\;& \text{biased alternating ring},\quad N_c=6,\\
    & r=2.2,\quad \sigma=0.75,\quad b=0.35,\\[1mm]
L5:\;& \text{alternating ring},\quad N_c=10,\\
    & r=2.0,\quad \sigma=0.95,\quad b=0.
\end{align*}
This is an ordered difficulty ladder for the fixed linear learner rather than
a pure measure of multimodality.

For Experiment IV, all four sources correspond to the same balanced
binary classification task, but each source is generated from its own
non-homogeneous two-component-per-class Gaussian mixture. Before
rotation, the class centres form the interlaced pattern of
Eq.~\eqref{eq:expIV-interlaced}. The source-specific parameter tuples
$(a,b,\sigma,\theta,p)$ are
\[
(1.50,1.30,1.05,0^\circ,0.65),\qquad
(1.45,1.35,1.08,15^\circ,0.60),
\]
\[
(1.55,1.25,1.10,-15^\circ,0.68),\qquad
(1.40,1.40,1.08,30^\circ,0.58),
\]
where $p$ denotes the probability of sampling the first within-class
mode. Subject labels are common across sources, but the latent
within-class mode assignment is sampled independently for each source.

For Experiment V, Source 1 is the exact baseline representation and the
remaining views are generated as
\begin{equation}
X_m=s_m Z+\varepsilon_m,
\end{equation}
where $s_m$ decreases and the noise standard deviation increases across a
single master source ladder. The largest configuration uses strengths
decreasing from $1.0$ to $0.4$ and noise standard deviations increasing from
$0$ to $1.0$. Smaller-$M$ configurations are prefixes of the same ladder, so
increasing $M$ genuinely adds sources without redefining those already present.

For Experiment VI, source geometry changes rather than merely adding noise.
Source 1 is linear, Source 2 is globally linearly exploitable but multimodal,
and Sources $j\ge3$ use alternating modes on a circle with $N_c=2j$. Thus,
increasing $M$ progressively increases source heterogeneity and nonlinearity
while retaining the same subject labels across sources.

\subsubsection{Representative synthetic datasets}
\label{app:synthetic-examples}

To make the structural manipulations directly interpretable,
Figs.~\ref{fig:app-complexity-examples}--\ref{fig:app-heterogeneous-source-examples}
show representative realisations generated by the same routines and
parameterisations used in Experiments II and IV--VI. These panels are
illustrative draws rather than additional experimental results.

Figure~\ref{fig:app-complexity-examples} visualises the five-level structural
ladder of Experiment II. The progression changes both multimodality and
compatibility with the fixed linear classifier: the first two levels preserve
a globally exploitable linear direction, whereas the later levels introduce
increasingly nonlinear alternating geometry and overlap.

\begin{figure*}[t]
\centering
\includegraphics[width=0.98\textwidth]
{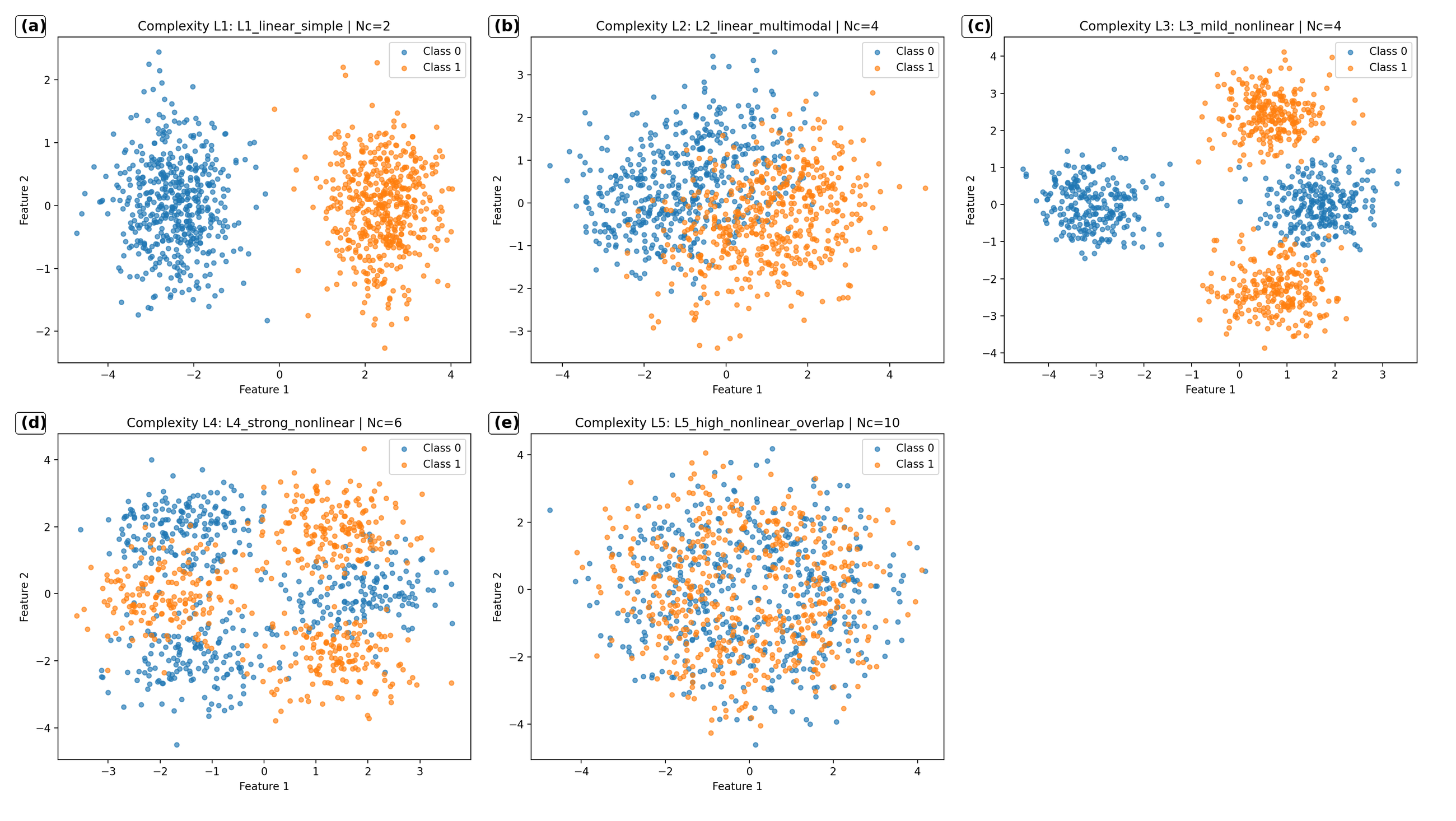}
\caption{Representative datasets from Experiment II:
(a) $L1$, simple linear;
(b) $L2$, linear multimodal;
(c) $L3$, mildly nonlinear;
(d) $L4$, strongly nonlinear; and
(e) $L5$, highly nonlinear with substantial class overlap.
The panels are generated by the same data-generating routines used in the
Monte Carlo study.}
\label{fig:app-complexity-examples}
\end{figure*}

Figure~\ref{fig:app-expIV-sources} visualises the four heterogeneous
multimodal sources used in Experiment IV. Each source preserves the same
binary labels, while the latent within-class component is resampled
according to its source-specific non-homogeneous mixture proportion.

\begin{figure*}[t]
\centering
\includegraphics[width=0.98\textwidth]
{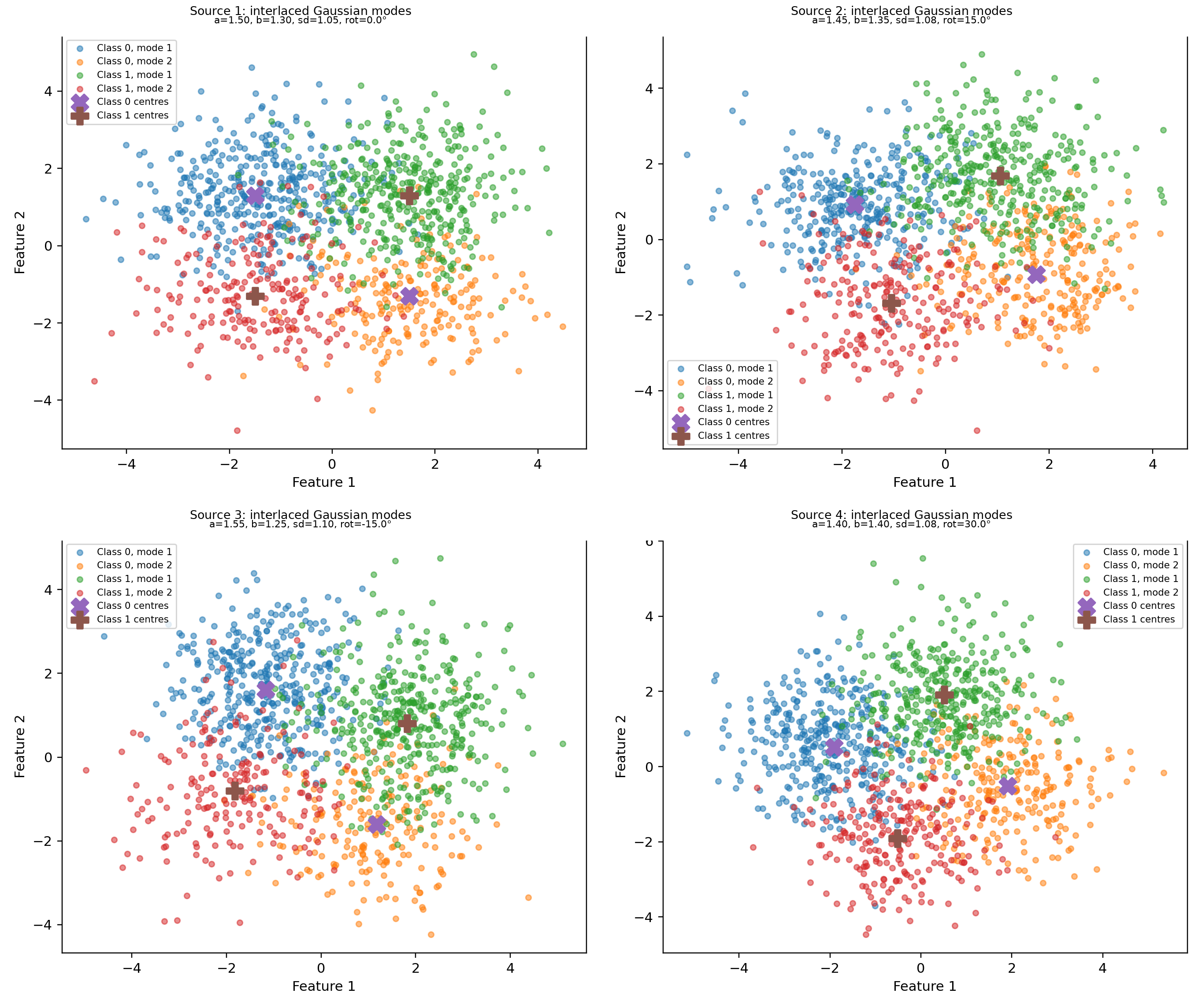}
\caption{Representative heterogeneous multimodal sources from
Experiment IV. Each source contains two interlaced Gaussian modes per
class. Subject labels are shared across sources, whereas the latent
within-class component is sampled independently using source-specific
mixture probabilities; separation, dispersion, and orientation also
vary across sources. Cross-shaped markers indicate the generating
component centres.}
\label{fig:app-expIV-sources}
\end{figure*}

Figure~\ref{fig:app-noisy-source-examples} illustrates the nested source
construction of Experiment V. All panels contain the same underlying subjects
and labels. Source 1 is the exact baseline representation, while later sources
progressively attenuate the common signal and add noise.

\begin{figure*}[t]
\centering
\includegraphics[width=0.98\textwidth]
{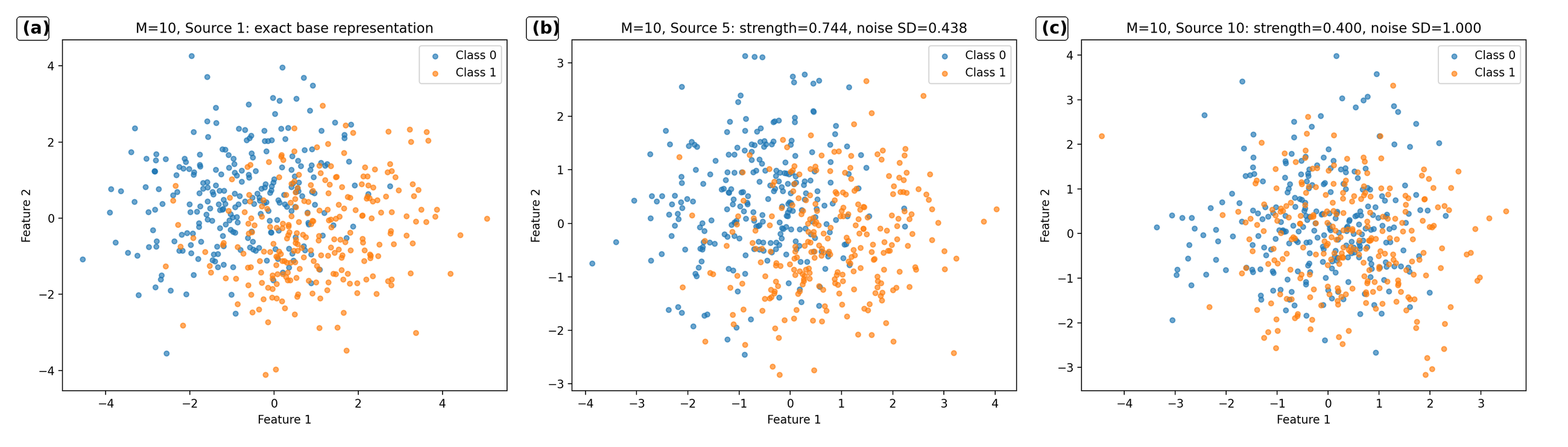}
\caption{Representative views from the $M=10$ nested source ladder of
Experiment V: (a) Source 1, the exact baseline representation;
(b) Source 5, an intermediate weakened/noisy representation; and
(c) Source 10, the final and weakest/noisiest representation.
The subject identities and class labels are shared across all three panels.}
\label{fig:app-noisy-source-examples}
\end{figure*}

Finally, Fig.~\ref{fig:app-heterogeneous-source-examples} visualises the
structurally heterogeneous construction of Experiment VI. Unlike Experiment V,
the source index changes the class geometry itself rather than merely adding
noise to a common representation.

\begin{figure*}[t]
\centering
\includegraphics[width=0.90\textwidth]
{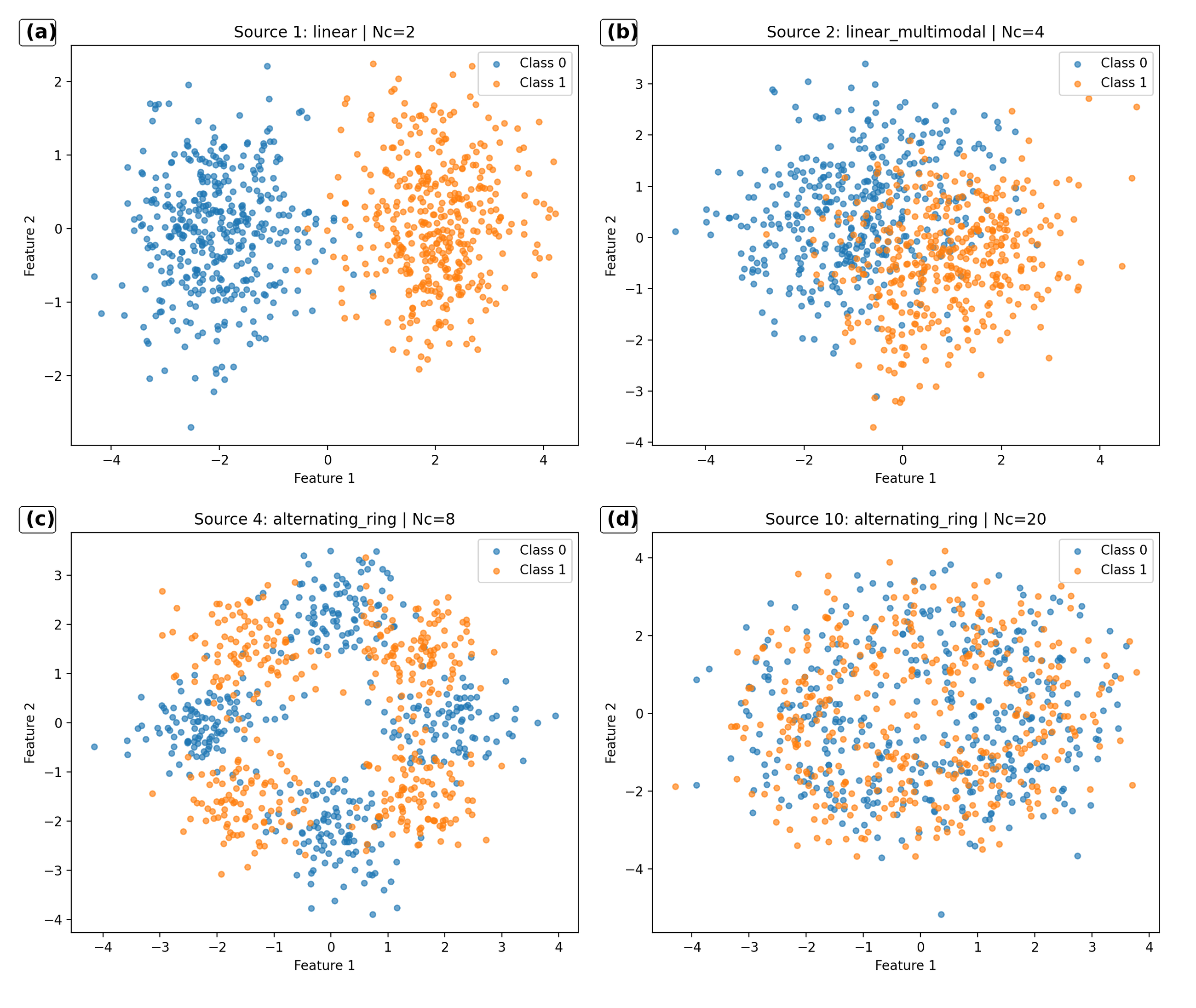}
\caption{Representative structurally heterogeneous sources from Experiment VI:
(a) Source 1, linear with $N_c=2$;
(b) Source 2, globally linearly exploitable but multimodal with $N_c=4$;
(c) Source 4, alternating nonlinear geometry with $N_c=8$; and
(d) Source 10, the most structurally complex alternating geometry with
$N_c=20$.}
\label{fig:app-heterogeneous-source-examples}
\end{figure*}

\subsubsection{Additional K-fold comparisons}
\label{app:kfold-comparisons}

Because the main experimental discussion focuses on the broader
generalisation-bound behaviour, we additionally report here a direct comparison
between resubstitution and $K$-fold validation for the three bound families
that share a common deterministic-classifier target: Hoeffding, empirical
Bernstein, and MGF/Dependency.

Figure~\ref{fig:app-kfold-tightness} reports the mean tightness
\[
U-\R_{\mathrm{true}}
\]
as a function of sample size. At small sample sizes, the $K$-fold curves are
consistently wider than their resubstitution counterparts. For example, at
$n=50$, mean tightness is approximately $0.165$ versus $0.152$ for Hoeffding,
$0.175$ versus $0.160$ for empirical Bernstein, and $0.170$ versus $0.154$ for
MGF/Dependency, with the first value in each pair corresponding to $K$-fold.
The difference contracts rapidly with increasing $n$. By $n=5000$, the two
validation paradigms are nearly indistinguishable for all three constructions.

This pattern is consistent with the interpretation developed in the main text:
partitioning the available observations can introduce an additional finite-
sample uncertainty penalty because each fitted $K$-fold model is trained on
only a fraction of the data. As the sample becomes sufficiently large, this
partition-induced penalty becomes negligible relative to the total
generalisation uncertainty.

\begin{figure*}[t]
\centering
\includegraphics[width=0.90\textwidth]
{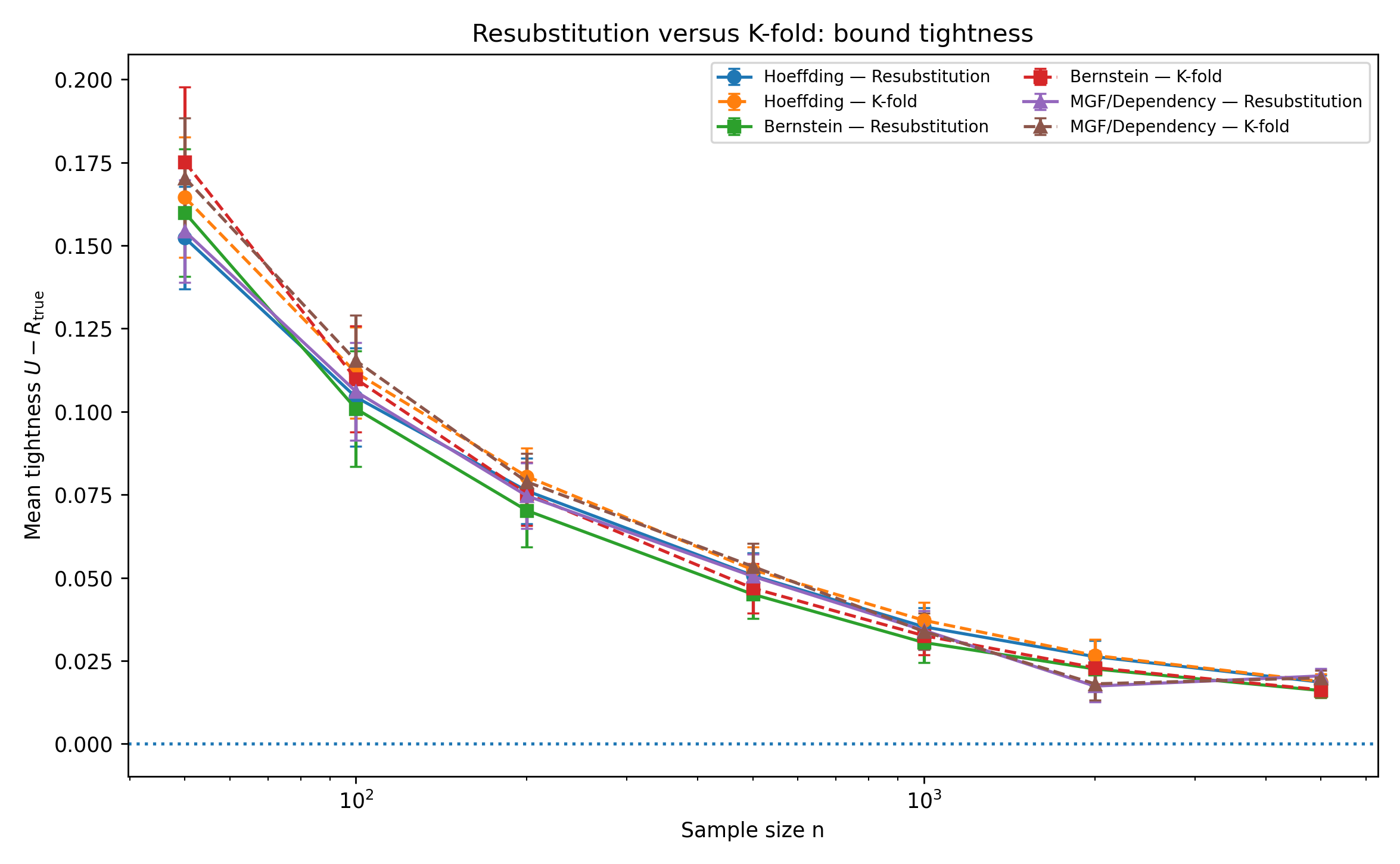}
\caption{Mean bound tightness, $U-\R_{\mathrm{true}}$, for resubstitution and
$K$-fold validation across sample size. Error bars are 95\% Monte Carlo
confidence intervals. At small $n$, the $K$-fold bounds are systematically
wider for the three displayed constructions; the difference decreases as
sample size grows.}
\label{fig:app-kfold-tightness}
\end{figure*}

Figure~\ref{fig:app-kfold-coverage} complements the tightness comparison by
showing empirical $K$-fold coverage for the three dependence-oriented
constructions. Safe-MGF/Dependency and H\"older-Hoeffding achieve coverage
equal to one at every tested sample size. The plug-in MGF/Dependency rule also
achieves coverage equal to one in all but one configuration, where it attains
$27/30=0.90$ at $n=2000$.

Because each reported coverage value is estimated from only
$B_{\mathrm{eval}}=30$ independent evaluation repetitions, the corresponding
Wilson intervals are necessarily broad. In particular, the interval
associated with $27/30$ coverage extends approximately from $0.744$ to
$0.965$. This isolated deviation should therefore not be interpreted as
precise evidence of systematic undercoverage. Rather, it illustrates the
distinction between the empirically calibrated MGF construction and the more
conservative Safe-MGF and H\"older alternatives.

\begin{figure*}[t]
\centering
\includegraphics[width=0.90\textwidth]
{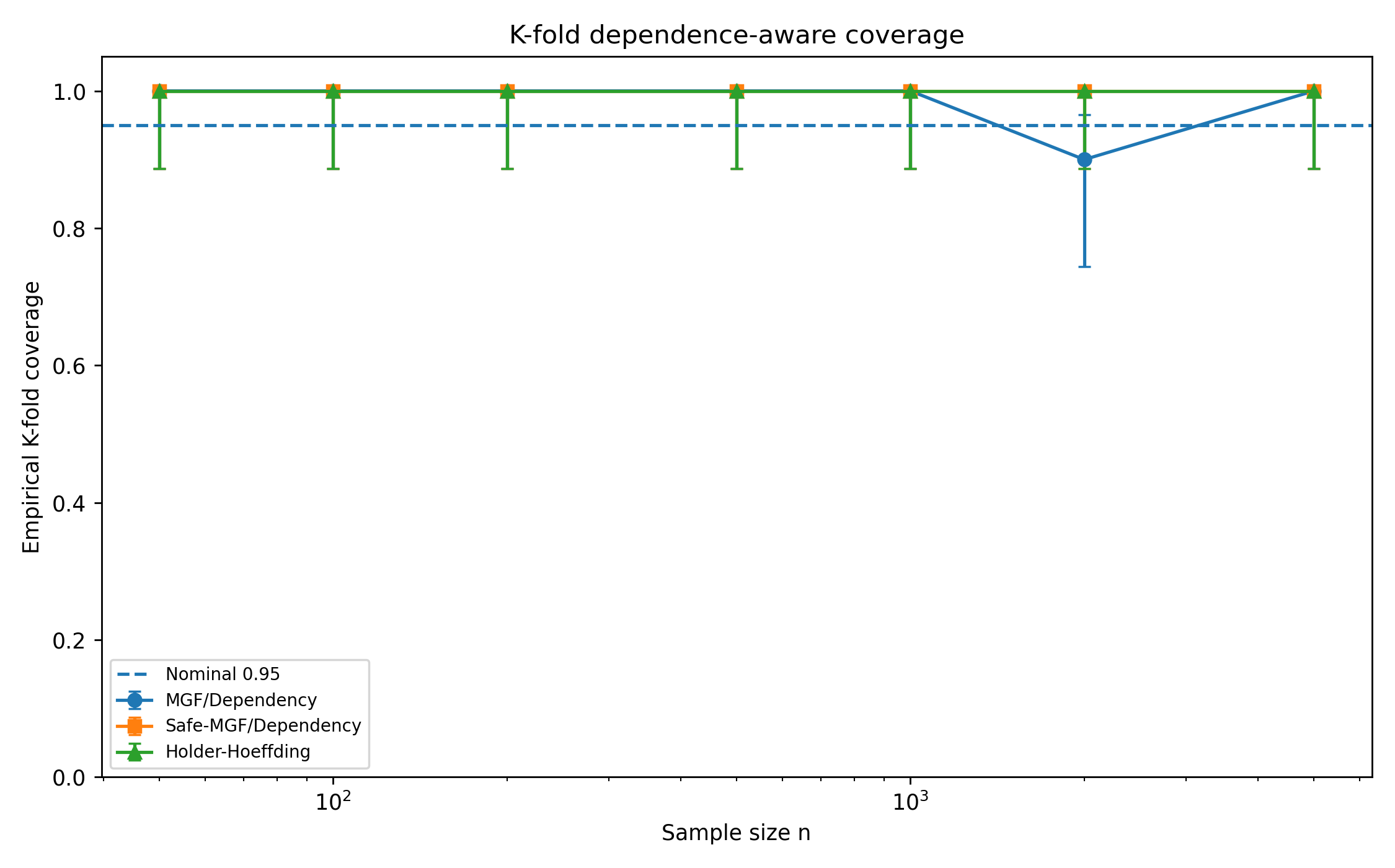}
\caption{Empirical $K$-fold coverage for the dependence-aware constructions.
Error bars are 95\% Wilson score intervals. The dashed horizontal line marks
the nominal coverage level $1-\delta=0.95$. Safe-MGF/Dependency and
H\"older-Hoeffding are conservative throughout the sweep, whereas the
plug-in MGF/Dependency rule exhibits one isolated value of $0.90$ at
$n=2000$.}
\label{fig:app-kfold-coverage}
\end{figure*}

\subsection{Experimental instantiation of classical bounds}
\label{app:classical-instantiation}

The derivations of the concentration bounds are given in the main text. Here we record only the concrete forms used in the simulations and the qualifications needed to interpret them.

\subsubsection{Fixed-hypothesis Hoeffding rule}
\label{app:hoeffding-full}

For a fixed classifier evaluated on $n$ independent observations under
$0$--$1$ loss, the implemented rule is
\begin{equation}
U_H
=
\Rh
+
\sqrt{\frac{\log(1/\delta)}{2n}}.
\end{equation}
This fixed-hypothesis result is not automatically a post-training
resubstitution guarantee when the same sample is used both to fit and evaluate
the classifier. We therefore use it as a comparative diagnostic rule and
assess its empirical coverage explicitly.

\subsubsection{Uniform VC-Hoeffding reference}
\label{app:vc-hoeffding}

For affine classifiers in $\mathbb R^p$, $d_{\mathrm{VC}}=p+1$. The
implemented uniform reference is
\begin{equation}
U_{\mathrm{VC-H}}
=
\Rh_S(\widehat h)
+
\sqrt{
\frac8n
\left[
d_{\mathrm{VC}}
\log\!\left(\frac{2en}{d_{\mathrm{VC}}}\right)
+
\log\frac4\delta
\right]
}.
\end{equation}
The reported risk is truncated at one. This construction explicitly pays for
uniform control of the linear hypothesis class and is therefore intentionally
conservative in small samples.

\subsubsection{Empirical Bernstein rule}
\label{app:bernstein-full}

The implemented empirical-Bernstein rule is
\begin{equation}
U_B
=
\Rh
+
\sqrt{\frac{2\widehat V\log(1/\delta)}{n}}
+
\frac{2\log(1/\delta)}{3n},
\end{equation}
where $\widehat V$ is the empirical variance of the observed $0$--$1$ losses.
As for fixed-hypothesis Hoeffding, direct post-training resubstitution use is
treated as a comparative experimental rule unless an appropriate uniform or
algorithm-dependent result is invoked.

\subsection{Simulation calibration of Dependency-CUBV}
\label{app:dependency-full}

The theoretical Dependency-CUBV construction is developed in the main text.
Here we specify only how its latent quantities are calibrated in the synthetic
experiments.

For every Monte Carlo repetition and fold, the synthetic setting permits
approximation of the fold population risk on an independent test population.
We therefore form
\begin{equation}
G_k=\R_k-\Rh_k.
\end{equation}
Calibration repetitions are stacked into a matrix whose columns correspond to
folds. The implementation estimates
\begin{equation}
\widehat{\bm\mu}
\qquad\text{and}\qquad
\widehat{\bm\Sigma}
\end{equation}
from this calibration block and freezes both quantities before evaluation on
disjoint Monte Carlo repetitions.

This is an oracle-calibrated simulation device rather than a turnkey
single-dataset procedure. In real data, an analogue would require an untouched
audit sample, a justified analytical dependence model, algorithmic-stability
information, or another procedure that propagates uncertainty in the
dependence estimate.

\subsection{PSD-inflated Dependency-CUBV: experimental instantiation}
\label{app:safe-dependency-full}

The theoretical PSD upper-confidence construction is given in the main text.
In the simulations it is instantiated as
\begin{equation}
\bm U_\eta
=
\widehat{\bm\Sigma}
+
\varepsilon_\eta\bm I,
\end{equation}
with
\begin{equation}
\varepsilon_\eta
=
K
\sqrt{
\frac{2\log(2K^2/\eta)}{B}
},
\end{equation}
where $B$ is the number of calibration-gap vectors. The failure budget is
split as
\begin{equation}
\delta_{\mathrm{risk}}=\eta=\delta/2.
\end{equation}

This implementation is deliberately conservative. The inflation protects all
covariance directions even though the final risk bound depends only on the
quadratic form $\bm w^\top\bm\Sigma\bm w$. Together with the entrywise union
bound and the conversion to an operator-norm radius, this produces substantial
over-protection; in the present simulations the resulting upper risk is often
truncated at one.

The radius above is derived for independently observed centred calibration-gap
vectors, whereas the gap mean is also estimated from the calibration block in
the present implementation. The resulting safe curve should therefore be
interpreted as a conservative proof-of-principle rather than as a complete
finite-sample theorem for simultaneous estimation of $\bm\mu$ and
$\bm\Sigma$.

\subsection{Experimental specialisation of H\"older-CUBV}
\label{app:holder-full}

The general H\"older-CUBV construction is derived in the main text. For the
experiments, each validation fold contains $m_k$ observations and the
fold-wise $0$--$1$ loss uses the Hoeffding envelope
\begin{equation}
\gamma_k(t)=\frac{t^2}{8m_k}.
\end{equation}
With equal fold weights $w_k=1/K$ and the symmetric choice $p_k=K$, the
implemented bound is
\begin{equation}
U_{\mathrm{Holder}}
=
\Rh_{\mathrm{CV}}
+
\sqrt{
\frac{\log(1/\delta)}{2}
\left(
\frac1K\sum_{k=1}^{K}\frac1{m_k}
\right)
}.
\end{equation}
For equal validation-fold sizes $m=n/K$,
\begin{equation}
U_{\mathrm{Holder}}
=
\Rh_{\mathrm{CV}}
+
\sqrt{\frac{\log(1/\delta)}{2m}}.
\end{equation}
Thus the implemented rule does not obtain an artificial $K^{-1/2}$ gain by
treating dependent folds as independent. Its cost is conservatism; its
advantage is that no estimate of $\bm\Sigma$ is required.

\subsection{Information-fusion implementation}
\label{app:fusion-implementation}

The theoretical Fusion-CUBV criterion is derived in the main text. Here we
describe the calibration and weighting rules used in Experiments IV--VI.

Let
\begin{equation}
\widehat{\bm R}
=
(\widehat R_1,\ldots,\widehat R_M)^\top,
\qquad
\bm a\in\Delta_M,
\end{equation}
where
\begin{equation}
\Delta_M
=
\{\bm a:\;a_m\ge0,\;\sum_m a_m=1\}.
\end{equation}
Calibration repetitions provide estimates of the mean generalisation-gap
vector $\bm\mu$ and covariance proxy $\bm\Sigma$. For fixed weights, the
calibrated fused bound used in the experiments is
\begin{equation}
U_F(\bm a)
=
\bm a^\top\widehat{\bm R}
+
\bm a^\top\bm\mu
+
\sqrt{
2\log(1/\delta)\,
\bm a^\top\bm\Sigma\bm a
}.
\end{equation}
The zero-mean form in the main text is recovered when $\bm\mu=0$.

Fusion-CUBV chooses its weights exclusively on the calibration block:
\begin{equation}
\bm a^\star
=
\arg\min_{\bm a\in\Delta_M}
\left[
\bm a^\top\overline{\bm R}_{\mathrm{cal}}
+
\bm a^\top\widehat{\bm\mu}
+
\sqrt{
2\log(1/\delta)\,
\bm a^\top\widehat{\bm\Sigma}\bm a
}
\right].
\end{equation}
The resulting weights are frozen before evaluation, preventing post-hoc
optimisation on the reported repetitions.

The comparison rules used in Experiment IV are:
\begin{itemize}
\item \emph{Uniform}: $a_m=1/M$;
\item \emph{Empirical-risk}: all mass is assigned to the source with the
smallest mean calibration risk, with ties shared equally;
\item \emph{Variance-only}: $\bm a$ minimises
$\bm a^\top\widehat{\bm\Sigma}\bm a$ over the simplex;
\item \emph{Fusion-CUBV}: $\bm a$ minimises the calibrated fused upper-risk
criterion above;
\item \emph{Safe-Fusion-CUBV}: the same optimisation is performed after
replacing $\widehat{\bm\Sigma}$ by
$\widehat{\bm\Sigma}+\varepsilon_\eta I$ and splitting the failure budget
between risk and covariance uncertainty.
\end{itemize}

For Experiments IV--VI, all sources in a repetition share the same
subject labels, but the source-generation mechanisms differ by experiment.
Experiment IV uses independently sampled heterogeneous multimodal Gaussian
mixtures with source-specific mixture proportions, dispersion, separation,
and orientation. Experiment V uses nested noisy and attenuated views of a
common baseline representation. Experiment VI changes the class geometry
itself, progressing from linear to multimodal and increasingly nonlinear
alternating-ring sources. Fusion therefore operates on multiple
representations of the same underlying classification task rather than on
unrelated datasets.

For every evaluation repetition we record empirical fused risk, true fused
risk, upper risk, coverage, tightness, and effective variance
\begin{equation}
\bm a^\top\bm\Sigma\bm a.
\end{equation}
For the balanced binary setting, bounded-discriminability certification is the
empirical frequency of
\begin{equation}
U_F<0.5.
\end{equation}

\end{document}